\definecolor{hew}{RGB}{0,47,167}
\definecolor{yc}{RGB}{200,47,0}
\definecolor{modify}{RGB}{0,47,167}
 \newcommand{\cf}{\text{cf.}\xspace}
\newcommand{\ie}{\text{i.e.}}
\newcommand{\Var}{\operatorname{Var}}
\renewcommand{\S}{\mathcal{S}}
\newcommand{\A}{\mathcal{A}}
\newcommand{\mE}{\mathbb{E}}
\newcommand{\M}{\mathcal{M}}
\newcommand{\F}{\mathcal{F}}
\newcommand{\E}{\mathbb{E}}
\newcommand{\B}{\operatorname{Beta}}
\newcommand{\bP}{\mathbb{P}}
\newcommand{\1}{\mathbbm{1}}
\newcommand{\defeq}{\vcentcolon=}
\newcommand{\resp}{\text{resp.}}
\newcommand{\myalg}{\textsc{RandomizedQ}\xspace}
\newcommand{\myalgany}{\textsc{RandomizedQ-Anytime}\xspace}
\newtheorem{lemma}{\textbf{Lemma}}
\newtheorem{proposition}{\textbf{Proposition}}
\newtheorem{theorem}{\textbf{Theorem}} 
\newtheorem{definition}{Definition}
\newtheorem{remark}{Remark}
\newtheorem{assumption}{Assumption}
\newcommand{\tildeQ}{\widetilde{Q}}
\newcommand{\tildeV}{\widetilde{V}}
\newcommand{\aux}{\flat}
\newcommand{\clip}{\operatorname{clip}}
 \newcommand{\multiline}[1]{%
  \begin{tabularx}{\dimexpr\linewidth-\ALG@thistlm}[t]{@{}X@{}}
    #1
  \end{tabularx}
}
\title{Provably Efficient and Agile Randomized Q-Learning}
\author{
    He Wang\thanks{Department of Electrical and Computer Engineering, Carnegie Mellon University. Email: \texttt{\{hew2,xingyuxu\}@andrew.cmu.edu}. }\\
    CMU\\
    \and
    Xingyu Xu\footnotemark[1] \\
    CMU\\
    \and
    Yuejie Chi\thanks{Department of Statistics and Data Science, Yale University. Email: \texttt{yuejie.chi@yale.edu}. } \\ 	 
     Yale   \\
    } 
\date{\today}
\begin{document}

\maketitle
\begin{abstract}
While Bayesian-based exploration often demonstrates superior empirical performance compared to bonus-based methods in model-based reinforcement learning (RL), its theoretical understanding remains limited for model-free settings. Existing provable algorithms either suffer from computational intractability or rely on stage-wise policy updates which reduce responsiveness and slow down the learning process. In this paper, we propose a novel variant of Q-learning algorithm, referred to as \myalg, which integrates \textit{sampling-based exploration with agile, step-wise, policy updates}, for episodic tabular RL. We establish a sublinear regret bound $\widetilde{O}(\sqrt{H^5SAT})$, where $S$ is the number of states, $A$ is the number of actions, $H$ is the episode length, and $T$ is the total number of episodes. In addition, we present a logarithmic regret bound $ O\left(\frac{H^6SA}{\Delta_{\min}}\log^5(SAHT)\right)$ when the optimal Q-function has a positive sub-optimality $\Delta_{\min}$. Empirically, \myalg exhibits outstanding performance compared to existing Q-learning variants with both bonus-based and Bayesian-based exploration on standard benchmarks. 
\end{abstract}

\noindent\textbf{Keywords:} Q-learning, learning rate randomization, Bayesian exploration

\allowdisplaybreaks
\setcounter{tocdepth}{2}
\tableofcontents

\section{Introduction}
In reinforcement learning (RL) \citep{sutton1988learning}, an agent aims to learn an optimal policy that maximizes its cumulative rewards through interactions with an unknown environment. 
Broadly speaking, RL algorithms can be categorized into two main approaches---model-based and model-free methods---depending on whether they first learn a model of the environment and plan within it, or directly learn the optimal policy from experience. While model-based approaches offer advantages in sample efficiency, model-free algorithms tend to be more computationally efficient and take lower space complexity, making them more attractive for deployment in many real-world applications, such as games \citep{mnih2015human}, robotics control \citep{tang2025deep} and language model training \citep{hong2024q}.

As one of fundamental challenges in RL, the \textit{exploitation-exploration dilemma} remains particularly difficult to address in the model-free paradigm, i.e., the learned policy needs to carefully balance between exploiting current observations and exploring unseen state-action pairs to maximize total rewards in the long term. To manage the trade-off, most provably efficient model-free algorithms adopt the principle of \textit{optimism in the face of uncertainty}, incentivizing exploration by assigning bonuses to uncertain outcomes, constructed from their upper confidence bound (UCB) \citep{lai1985asymptotically}. In particular, prior works \citep{jin2018q,zhang2020almost,li2021breaking} showed that Q-learning augmented with tailored bonus functions  achieve comparable sample complexity to their model-based counterparts.

In contrast to bonus-based exploration methods aforementioned, Bayesian-based approaches have gained increasing attention for their superior empirical performance \citep{osband2016deep,fortunato2017noisy}. These approaches enhance efficient exploration by leveraging the inherent randomness in sampling from posteriors that are updated based on prior observations. However, theoretical understandings have been limited, where the majority of prior work has focused on model-based RL \citep{osband2013more,agrawal2017optimistic,tiapkin2022optimistic}.  

When it comes to model-free RL, research is even more limited in several aspects. \citet{dann2021provably} proposed a \textit{sample-efficient} algorithm that draws Q-functions directly from the posterior distribution. Nevertheless, this approach suffers from \textit{computational inefficiency}. More recently, \citet{tiapkin2024model} introduced posterior sampling via randomized learning rates, but unfortunately they only provided theoretical guarantees\footnote{A careful examination of their proof reveals a critical technical gap in their analysis. We provide a novel fix with substantial new analyses, which fortunately preserves their claimed theoretical guarantee. We discuss this in more detail in Section~\ref{sec:gap_independent_bound}.} for \textit{stage-wise policy updates}, which are known to be inefficient in practice as this staging approach does not allow agents to respond agilely to the environment.  
To this end, it is natural to ask:
\begin{center}
    \textit{Is it possible to design a model-free RL algorithm with Bayesian-based exploration, achieving \textbf{sample efficiency}, \textbf{computational efficiency}, and \textbf{agile policy updates}?}  
\end{center}   

\subsection{Main contribution}
To answer this question, we focus on learning a near-optimal policy through sampling-based Q-learning, in a provably sample- and computation-efficient manner. As in \citet{jin2018q,dann2021provably,tiapkin2024model}, throughout this paper, we consider tabular, finite-horizon Markov Decision Processes (MDPs) in the online setting. Below we summarize the highlights of this work:
\begin{itemize}
    \item We propose \myalg, a sampling-based Q-learning algorithm which leverages tailored randomized learning rates to enable both efficient exploration and agile policy updates.
    \item We establish a gap-independent regret bound on the order of $\widetilde{O}(\sqrt{H^5SAT})$, where $S$ is the number of states, $A$ is the number of actions, $H$ is the episode length, and $T$ is the number of episodes.
    \item Under a strictly positive sub-optimality gap $\Delta_{\min}$ of the optimal Q-function, we further prove a logarithmic regret bound of $O\left(H^6SA\log^5(SAHT)/\Delta_{\min}\right)$. To the best of our knowledge, this is the first result showing model-free algorithms can achieve logarithmic regret via sampling-based exploration.
    \item Empirically, \myalg consistently outperforms existing bonus-based and sampling-based model-free algorithms on standard exploration benchmarks, validating its efficacy.
\end{itemize} 
A detailed comparison with pertinent  works is provided in Table \ref{tab:comparison}.
    {
    \renewcommand{\arraystretch}{1.5}  
    \begin{table*}[t]
        \centering
        \resizebox{\textwidth}{!}{
        \begin{tabular}{|c|c|c|c|c|}
        \hline
        \rowcolor[HTML]{E6F7FF} 
        \textbf{Key Property} & 
        \makecell{\textbf{Conditional-PS} \\ \citep{dann2021provably}} & 
        \makecell{\textbf{Staged-RandQL} \\ \citep{tiapkin2024model}} & 
        \makecell{\textbf{RandQL} \\ \citep{tiapkin2024model}} & 
        \makecell{\textbf{\myalg} \\ (This Work) }  \\ \hline
    
        Computational tractability & \ding{55} & \checkmark & \checkmark & \checkmark \\ \hline
        Agile policy update & \checkmark & \ding{55} & \checkmark & \checkmark \\ \hline
        Gap-independent regret guarantee & \checkmark & \checkmark & \ding{55} & \checkmark \\ \hline
        Gap-dependent regret guarantee & \ding{55} & \ding{55} & \ding{55} & \checkmark \\ \hline
        \end{tabular}}
        \caption{Comparison with the most relevant model-free RL methods with Bayesian-based exploration in tabular settings. A \checkmark\ indicates the method possesses the corresponding property, while a \ding{55} denotes its absence. We identify and fix a technical gap in \citet{tiapkin2024model}, which preserves the gap-independent regret guarantee of Staged-RandQL. Notably, our method uniquely achieves \emph{computational tractability}, \emph{agile policy updates}, and \emph{provable regret guarantees}, distinguishing it from prior work.}  
        \label{tab:comparison}
 
    \end{table*}
    }

\subsection{Related works}
In this section, we discuss closely-related prior works on optimistic Q-learning and online RL with Bayesian-based exploration, focusing on the tabular setting.
\paragraph{Q-learning with bonus-based exploration.} Q-learning and its variants \citep{watkins1989learning,strehl2006pac,mnih2013playing} are among the most widely studied model-free RL algorithms. To understand its theoretical guarantees, several works have equipped Q-learning with UCB bonuses derived from the principle of optimism in the face of uncertainty \citep{jin2018q,bai2019provably,zhang2020almost,wei2020model,li2021breaking,yang2021q,zheng2024gap}. Notably, \citet{jin2018q} first introduced UCB-Q, which augments Q-learning with Hoeffding-type or Bernstein-type bonuses and established a nearly optimal regret bound. Building upon this, \citet{zhang2020almost} proposed a variance-reduced version of UCB-Q, achieving an optimal sample complexity, and \citet{li2021breaking} further improved the performance by reducing the burn-in cost.  

In addition to the worst-case regret bound, gap-dependent regret bounds often leverage benign properties of the environment and enjoy logarithmic regret bounds \citep{yang2021q,zheng2024gap} For instance, \citet{yang2021q} showed that UCB-Q has a logarithmic regret bound under the positive sub-optimality gap assumption, and \citet{zheng2024gap} incorporated error decomposition to establish a gap-dependent bound for Q-learning with variance reduction techniques \citep{zhang2020almost,li2021breaking}.

\paragraph{Model-based RL with Bayesian-based exploration.} 
Extensive works have investigated the theoretical and empirical performance of Bayesian-based exploration in model-based RL.  One popular approach is posterior sampling for reinforcement learning \citep{strens2000bayesian,osband2013more,agrawal2017optimistic,zhang2022feel,hao2022regret,moradipari2023improved}, where the policy is iteratively learned by sampling a model from its posterior distribution over MDP models. The approach has been shown to achieve the optimal regret bound when UCB on Q-functions are also incorporated \citep{tiapkin2022optimistic}. In addition, several works \citep{osband2016generalization,agrawal2021improved,zanette2020frequentist,ishfaq2024more} have investigated posterior sampling with linear function approximation.
 
\paragraph{Model-free RL with Bayesian-based exploration.} To overcome the computational inefficiency of model-based methods, several algorithms have been developed in model-free RL with Bayesian-based exploration, showing promising empirical results \citep{osband2016deep,osband2018randomized,fortunato2017noisy} but lacking theoretical guarantees. Recently, \citet{dann2021provably} sampled Q-functions directly from the posterior, but such an approach is computationally intractable. To address this, \citet{tiapkin2024model} introduced RandQL, the first tractable model-free posterior sampling-based algorithm, which encourages exploration through using randomized learning rates and achieves a regret bound of $\widetilde{O}(\sqrt{SAH^5T})$ when RandQL is staged. However, the slow policy update empirically leads to a significantly degraded performance. This leaves a gap between the theoretical efficiency and practical performance in model-free RL with Bayesian-based exploration.

\paragraph{Notation.} Throughout this paper, we define $\Delta(\mathcal{S})$ as the probability simplex over a set $\mathcal{S}$, and use $[H] \defeq {1, \ldots, H}$ and $[T] \defeq {1, \ldots, T}$ for positive integers $H, T > 0$. We denote $\1$ as the indicator function, which equals $1$ if the specified condition holds and $0$ otherwise. For any set $\mathcal{D}$, we write $|\mathcal{D}|$ to represent its cardinality (i.e., the number of elements in $\mathcal{D}$).  The beta distribution with parameters $\alpha$ and $\beta$ is denoted by $\mathrm{Beta}(\alpha, \beta)$. Finally, we use the notations $\widetilde{O}(\cdot)$ and $O(\cdot)$ to describe the order-wise non-asymptotic behavior, where the former omits logarithmic factors.

\section{Problem Setup}
\paragraph{Finite-horizon MDPs.} Consider a {tabular finite-horizon MDP} $\M(\S,\A,\{P_h\}_{h=1}^H,\{r_h\}_{h=1}^H,H)$, where $\S$ is the finite state space of cardinality $S$, $\A$ is the action space of cardinality $A$, $P_h: \S\times\A\to\Delta(\S)$ is the transition kernel and $r_h:\S\times\A\to[0,1]$ is the reward function at time step $h\in[H]$, and $H$ is the number of steps within each episode. In each episode, the agent starts from an initial state $s_1\in\S$ and then interacts with the environment for $H$ steps. In each step $h\in[H]$, the agent observes the current state $s_h\in \S$, selects an action $a_h\in\A$, receives a reward $r_h(s_h,a_h)$, and transitions to the next state $s_{h+1}\sim P_h(\ \cdot\ |s_h,a_h)$.

\paragraph{Policy, value function and Q-function.} We denote $\pi = \{\pi_h\}_{h=1}^H$ as the \textit{policy} of the agent within an episode of $H$ steps, where each $\pi_h:\S\to\Delta(\A)$ specifies the action selection probability over the action space $\A$ at the step $h\in[H]$. Given any finite-horizon MDP $\M$, we use the value function $V_{h}^{\pi}$ (\resp Q-function) to denote the expected accumulative rewards starting from the state $s$ (\resp the state-action pair $(s,a)$) at step $h$ and following the policy $\pi$ until the end of the episode: for any $(h,s,a)\in[H]\times\S\times\A$,
\begin{align*}
     &V_{h}^{\pi}(s) = \E_{\pi} \left[\sum_{h'=h}^H r_{h'}(s_{h'},a_{h'})|s_h = s\right], \\
     &Q_{h}^{\pi}(s,a) = \E_{\pi} \left[\sum_{h'=h}^H r_{h'}(s_{h'},a_{h'})|s_h = s, a_h=a\right].
\end{align*}
By convention, we set $V_{H+1}^{\pi}(s) = 0$ and $Q_{H+1}^{\pi}(s,a) = 0$ for any $(s,a)\in\S\times\A$ and policy $\pi$. In addition, we denote $\pi^{\star} = \{\pi_h^{\star}\}_{h=1}^H$ as the \textit{deterministic optimal policy}, which maximizes the value function (\resp Q-function) for  all states (\resp~state-action pairs) among all possible policies, i.e.
\begin{equation} \label{eq:value_func_optimal}
\begin{aligned}
    & V_{h}^{\star}(s)\defeq V_{h}^{\pi^\star}(s) = \max_{\pi} V_{h}^{\pi}(s),\\
    & Q_{h}^{\star}(s,a)\defeq Q_{h}^{\pi^\star}(s,a) = \max_{\pi} Q_{h}^{\pi}(s,a),
\end{aligned}
\end{equation}
where the existence of the optimal policy is well-established \citep{puterman2014markov}.

\paragraph{Bellman equations.}
As the pivotal property of MDPs, the value function and Q-function satisfy the following Bellman consistency equations: for any policy $\pi$ and any $(h,s,a)\in[H]\times\S\times\A$,
\begin{align} \label{eq:bellman_consistency}
    Q_{h}^{\pi}(s,a)= r_h(s,a)+P_{h,s,a}V^{\pi}_{h+1},  
\end{align}
where we use $P_{h,s,a} \defeq P(\ \cdot\ |s,a)\in[0,1]^{1\times S}$ to represent the transition probability (row) vector for the state-action pair $(s,a)$ at $h$-th step. Similarly, we also have the following Bellman optimality equation regarding the optimal policy $\pi^{\star}$:
\begin{equation}\label{eq:bellman_optimality}
     Q_{h}^{\star}(s,a) = r_h(s,a) + P_{h,s,a}V_{h+1}^{\star}.
\end{equation}

\paragraph{Learning goal.} 
In this work, our goal is to learn a policy that minimizes the total regret during $T$ episodes, defined as  
\begin{equation}\label{eq:regret}
    \text{Regret}_T =  \sum_{t=1}^T \left(V_1^{\star}(s_1) - V_1^{\pi^t}(s_1) \right),
\end{equation}
in a computationally efficient and scalable fashion. Here, $\pi^t$ denotes the learned policy executed in the $t$-th episode, for every $t\in[T]$.

\section{Efficient and Agile Randomized Q-learning}

In this section, we introduce a sampling-based variant of Q-learning, referred to as \myalg, which ensures the policy to be updated in an agile manner and enhances efficient exploration based on random sampling rather than adding explicit bonuses.
 
\subsection{Motivation}\label{sec:motivation}
Before describing the proposed algorithm in detail, we first review the critical role of learning rates in Q-learning to achieve polynomial sample complexity guarantees, and why such choices cannot be directly extended to the context of Bayesian-based exploration.

\paragraph{The effect of learning rates in Q-learning with UCB bonuses.}
Upon observing a sample transition $(s_h,a_h,s_{h+1})$ at the $h$-th step, the celebrated UCB-Q algorithm \citep{jin2018q}  updates the corresponding entry of the Q-values as:
\begin{align*}
    Q_h(s_h,a_h) \gets &(1-w_m)Q_h(s_h,a_h) + w_{m}\left[r_h(s_h,a_h)+V_{h+1}(s_{h+1})+b_m\right],
\end{align*}
where $m$ is the number of visits to the state-action pair $(s_h,a_h)$ at the $h$-th step,  $w_m = \frac{H+1}{H+m}$ is the learning rate and $b_m$ is the UCB-style bonus term to drive efficient exploration. As detailed in \cite{jin2018q}, such a learning rate of $O(H/m)$ is essential to ensure that the first earlier observations have negligible influence on the most recent Q-value updates. 
 
\paragraph{Challenges in randomized Q-Learning. } In the absence of bonus terms, the exploration is guided by assigning higher weights to important states and leveraging the inherent randomness of sampling from the posterior. As directly sampling Q-functions is computationally intractable~\citep{dann2021provably}, recent work \citep{tiapkin2024model} encourages exploration by randomizing the learning rates according to Beta distribution with an expected order of $O(1/m)$. However, such a learning rate treats all episodes as equally informative, which can result in high bias and an exponential dependency of the sample complexity on the horizon $H$. To overcome the resulting exponential dependence on the horizon $H$, \citet{tiapkin2024model} resort to split the learning process, update the policy at exponentially slower frequencies, and reset the temporary Q values to ensure enough optimism. While this strategy mitigates the sample complexity issue, it suffers from practical inefficiencies due to discarding valuable data across stages, and is unsuitable to deploy in time-varying environments. This inefficiency is empirically demonstrated in \citet[Appendix I]{tiapkin2024model} and Section \ref{section:experiment}.

Thus, it naturally raises the question: can we simply randomize learning rates with an expected order of $O(H/m)$, as used in UCB-Q \citep{jin2018q}? Unfortunately, randomizing learning rates with an expected magnitude of $O(H/m)$ rapidly forgets the earlier episodes that includes the initialization and thus fails to maintain sufficient optimism.

\subsection{Algorithm description}
Motivated by these limitations, we propose an agile, bonus-free Q-learning algorithm for episodic tabular RL in the online setting, referred to as \myalg and summarized in Algorithm \ref{alg:samplingq}. 

The main idea behind \myalg is to update the policy based on an \textit{optimistic mixture of two Q-function ensembles}---each trained with a tailored distribution of learning rates---to balance between agile exploitation of recent observations and sufficiently optimistic exploration. Specifically, after initializing the counters, value and Q-functions (cf. Line \ref{line:init} in Algorithm~\ref{alg:samplingq}), if at the step $h$ of episode $t$, the current state is $s_h\in\S$, the action $a_h$ is selected greedily with respect to the \textit{current} policy Q-function $Q_h^t(s_h,\cdot)$ (cf. Line \ref{line:action} in Algorithm \ref{alg:samplingq}). Upon observing the next state $s_{h+1}$, the updates can be boiled down to the following key components.

\begin{algorithm}[!t]
    \caption{\myalg}\label{alg:samplingq}
    \begin{algorithmic}[1]
    \REQUIRE Initial state $s_1$, optimistically-initial value $\{V^0_h\}\!$, inflation coefficient $\kappa,\kappa^{\aux}\!>\!0$, ensemble size $J$, the number of prior transitions $n_0,n_0^{\aux}\!>\!0$,  and mixing rates $\{\eta_{t,h}\}$.
    \STATE \textbf{Initialize:} $n_h(s, a), n_h^{\aux}(s,a), q_h(s,a) \gets 0$; $\tildeV_h(s), \tildeV^{\aux}_h(s) \gets V^0_h$; $\widetilde{Q}^{j}_{h}(s, a), \widetilde{Q}^{\aux}_{h}(s, a),\allowbreak\widetilde{Q}^{\aux,j}_{h}(s, a) \gets r_h(s, a) + V^0_{h+1}$,  for any $(j,h,s,a)\in [J]\times[H]\times\S\times\A$.\label{line:init}
    \FOR{$t \in [T]$}{
        \FOR{$h = 1,\ldots, H$}{
            \STATE Play $a_{h} = \arg\max_{a\in\A} Q_h(s_h,a)$ and observe the next state $s_{h+1} \sim P_h(\ \cdot\ \vert s_{h}, a_{h})$.\label{line:action}
            \STATE Set $m\gets n_h(s_{h}, a_{h})$ and $m^{\aux}\!\gets\! n_h^{\aux}(s_h,a_h)$.\label{line:visit_counter}
            \STATE \textcolor{blue}{\texttt{/* Update temporary Q-ensembles via randomized learning rates. \hfill*/}}
            \FOR{$j = 1,\ldots, J$}{
            \STATE Sample $w^{j}_{m} \sim \B\left(\frac{H+1}{\kappa}, \frac{m + n_0}{\kappa} \right)$ and $w^{\aux,j}_{m} \sim \B\left(\frac{1}{\kappa^{\aux}}, \frac{m^{\aux}+n_0^{\aux}}{\kappa^{\aux}}\right)$.
            \STATE Update $\tildeQ_{h}^{j}$ and $\tildeQ_{h}^{\aux,j}$ via \eqref{eq:Q_circ_update} and \eqref{eq:Q_dagger_update}.
            }
            \ENDFOR
            \STATE \textcolor{blue}{\texttt{/* Update the agile policy Q-function by optimistic mixing.\hfill */}}
            \STATE Update the policy Q-function $Q_h$ via \eqref{eq:policy_Q_update}. \label{line:policy_Q_update}
            \STATE \textcolor{blue}{\texttt{/* Update the policy with step-wise agility. \hfill */}}
            \STATE Update policy $\pi_h(s_h) \gets \arg\max_{a\in\A} Q_h(s_h,a)$. \label{line:policy_update}
            \STATE \textcolor{blue}{\texttt{/* Update $\tildeV_{h}$ optimistically. \hfill */}}
            \STATE Update $\tildeV_{h}(s_{h})\gets \max_{j\in[J]}\tildeQ^{j}_{h}(s_{h}, \pi_h(s_h))$. \label{line:tilde_V_update}
            \STATE \textcolor{blue}{\texttt{/* Update visit counters. \hfill */}}            
            \STATE Update counter $n_h\!(s_{h}, a_{h})\!\!\gets\! n_h(s_{h}, a_{h})\!+\!1$ and $n_h^{\aux}(s_h,a_h) \gets n_h^{\aux}(s_h,a_h) + 1$.
            \STATE \textcolor{blue}{\texttt{/* At the end of the stage: update $\widetilde{Q}_h^{\aux}$, $\pi^{\aux}_{h}$, $\widetilde{V}_h^{\aux}$ and reset $n_h^{\aux}$, $\{\widetilde{Q}_h^{b,j}\}$.\hfill */}}
            \IF{$n_h^{\aux}(s_h,a_h) = \lfloor(1+1/H)^q H\rfloor$ for the stage $q=q_h(s_h,a_h)$}{
                \STATE Update $\tildeQ_{h}^{\aux}(s_{h}, a_{h}) \gets \max_{j \in [J]} \widetilde{Q}^{\aux,j}_{h}(s_{h}, a_{h})$, $\pi^{\aux}_{h}(s_{h})\!\gets\arg\max_{a\in\A}\widetilde{Q}^{\aux}_{h}(s_{h}, a)$,\\ and $\tilde{V}_h^{\aux}(s_h)\gets \widetilde{Q}^{\aux}_{h}(s_{h}, \pi^{\aux}_{h}(s_{h}))$. \label{line:stage_value_update}
                \STATE Reset  $\widetilde{Q}_h^{\aux,j}(s_h,a_h) \gets r_h(s_h,a_h) + V^0_{h+1}$ for $j\in[J]$ and $n_h^{\aux}(s_h,a_h) \gets 0 $. \label{line:stage_init}
                \STATE Move to the next stage: $q_h\!(s_h,a_h)\!\gets q_h\!(s_h,a_h) + 1$.
            }
            \ENDIF
        }
        \ENDFOR
    }
    \ENDFOR
    \end{algorithmic}
    \end{algorithm}

\paragraph{Two Q-ensembles for adaptation and exploration.}
To ensure the mixed Q-function with the learning rate scaled as $O(H/m)$, we tailor the probability distribution of the randomized learning rate as: 
$$w_m^{j} \sim \text{Beta}\left(\tfrac{H+1}{\kappa},\tfrac{m + n_0}{\kappa}\right),\quad  \forall j\in[J],$$
 where $m$ records the total number of visits to the state-action pair $(s_h,a_h)$ just before current visit (\cf Line \ref{line:visit_counter} in Algorithm \ref{alg:samplingq}), $n_0$ introduces pseudo-transitions to induce optimism, and $\kappa > 0$ controls the concentration level of the distribution and $J$ is the size of temporary Q-ensembles. With these randomized learning rates in hand, the corresponding entry of temporary Q-ensembles is updated in parallel as: 
\begin{equation}\label{eq:Q_circ_update}
    \begin{aligned}
        \tildeQ_{h}^{j}(s_h, a_h) \gets & (1 - w^{j}_{m}) \tildeQ_{h}^{j}(s_h, a_h) + w^{j}_{m} \big(r_h(s_h, a_h)+\tildeV_{h+1}(s_{h+1})\big),
    \end{aligned}
\end{equation}
for any $j\in[J]$, where $\tildeV_{h+1}$ is the optimistic value estimate  at the next step, computed \textit{before} processing the current transition and held fixed within this visit.
As discussed in Section \ref{sec:motivation}, such a learning rate could guarantee a polynomial dependency on the horizon $H$, but lead to rapidly forgetting the earlier episodes and assigning exponentially decreasing weights on the optimistic initialization. To further emphasize the optimistic initialization, we also introduce another sequence of Q-ensembles as follows: for any $j\in[J]$,
\begin{equation}\label{eq:Q_dagger_update}
    \begin{aligned}
         \widetilde{Q}_{h}^{\aux,j}(s_{h}, a_{h}) \gets &(1 - w^{\aux,j}_{m}) \widetilde{Q}_{h}^{\aux,j}(s_h, a_h)  + w^{\aux,j}_{m} \big(r_h(s_{h}, a_{h})+\tildeV^{\aux}_{h+1}(s_{h+1})\big),
    \end{aligned}
\end{equation}
where the randomized learning rates are sampled from $w_m^{\aux,j} \sim \text{Beta}\left(\tfrac{1}{\kappa^{\aux}}, \tfrac{m^{\aux} + n_0^{\aux}}{\kappa^{\aux}}\right)$, $m^{\aux}$ represents the number of visits during the current stage,  and $\widetilde{V}^{\aux}_{h+1}$ is the value estimate updated in a stage-wise manner and frozen for the current visit (cf. Line \ref{line:stage_value_update} in Algorithm \ref{alg:samplingq}).

\paragraph{Agile policy Q-function via optimistic mixing.}
Then, to promote optimism, the policy Q-function is computed via optimistic mixing (cf. Line \ref{line:policy_Q_update} in Algorithm \ref{alg:samplingq}). For the current state--action pair $(s_h,a_h)$,
\begin{equation}\label{eq:policy_Q_update} 
    \begin{aligned}
            Q_{h}^{t+1}(s_h, a_h) = &\eta_{t,h} \max_{j\in[J]}\{\widetilde{Q}^{j,t+1}_{h}(s_h, a_h)\} + (1-\eta_{t,h}) \cdot\widetilde{Q}_{h}^{\aux,t+1}(s_h, a_h),
    \end{aligned}
\end{equation}
and for all $ (s,a)\ne (s_h,a_h)$, we keep $Q^{t+1}_{h}(s, a)\! = \! Q^{t}_{h}(s, a)$. Here, $\widetilde{Q}^{j,t+1}_{h}(s_h,a_h)$ is the $j$-th temporary Q-value, updated at the current visit of episode $t$ via \eqref{eq:Q_circ_update}, while the staged $\widetilde{Q}^{\aux,t+1}_{h}$ remains fixed throughout the current stage and is optimistically refreshed to the ensemble maximum only at the end of the stage (cf. Line \ref{line:stage_value_update} in Algorithm~\ref{alg:samplingq}).
Note that the first term---corresponding to the maximum over $J$ temporary Q-values---is updated every step, which allows \myalg to perform {\em agile policy updates} rather than the {\em exponentially slower} schedule that updates only at stage boundaries.
Such optimistic mixing allows \myalg to remain responsive to new data and adapt the policy efficiently without requiring periodic updates.

\paragraph{Reset for bias mitigation and optimism restoration.}  
To mitigate outdated data and ensure optimism, we reset the temporary Q-ensembles $\widetilde{Q}_{h}^{\aux,j}$ according to the optimistic initialization $V_{h+1}^0$ and the visit counter (\cf Line \ref{line:stage_init} in Algorithm \ref{alg:samplingq}), when the number of visits in current stage exceeds a predefined threshold---specifically, when $n_h^{\aux}(s_h, a_h) = \lfloor(1 + 1/H)^q H \rfloor$ for the $q$-th stage. Meanwhile, the staged value estimate $\tildeV^{\aux}_h$ is greedily updated with respect to $\widetilde{Q}^{\aux}_{h}$ at state $s_h$ (cf. Line \ref{line:stage_value_update} in Algorithm \ref{alg:samplingq}), which will be reused to update the temporary Q-values in the subsequent visits within the new stage.

\section{Theoretical Guarantee}
In this section we provide both gap-independent and gap-dependent regret bounds for \myalg, considering the worst-case scenario and favorable structural MDPs, respectively. 
\subsection{Gap-independent sublinear regret guarantee}\label{sec:gap_independent_bound}
To begin with, the following theorem shows that \myalg has a $\sqrt{T}$-type regret bound, where the full proof is deferred to Appendix \ref{appendix:regret_proof}.
\begin{theorem}\label{thm:regret_main}
    Consider $\delta\in (0,1)$. Assume that $J = \lceil{c\cdot\log(SAHT/\delta)}\rceil$, $\kappa^{\aux} = c\cdot(\log(SAH/\delta) + \log(T))$, and $n_0^{\aux} = \lceil c\cdot\log(T)\cdot \kappa \rceil$, where $c$ is some universal constant. Let the initialized value function $V_h^0 = 2(H-h+1)$ for any $h\in[H+1]$, and the mixing rate $\eta_{t,h} = \frac{1}{\sqrt{(1+1/H)^{q}H}+1}$ where $q = q_h^t(s_h^t,a_h^t)$ is the stage index for any $(t,h)\in[T]\times[H]$. Then, with probability at least $1-\delta$, Algorithm \ref{alg:samplingq} guarantees that
    \begin{align*}
        \mathrm{Regret}_T \le \widetilde{O}\left(\sqrt{H^5SAT}\right).
    \end{align*}
\end{theorem}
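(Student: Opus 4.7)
}
The plan is to follow the three-step skeleton that has become standard for Q-learning regret proofs---(i) express each ensemble Q-estimate as a random convex combination of past targets, (ii) establish optimism of the policy Q-function with high probability, and (iii) recursively bound the per-step error and sum over episodes---but with several modifications that accommodate the randomized learning rates and the optimistic mixing of the two ensembles.

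First I would unroll the updates \eqref{eq:Q_circ_update} and \eqref{eq:Q_dagger_update}. After $m$ visits to $(s,a,h)$ with randomized learning rates $w_1^j,\ldots,w_m^j \sim \B((H+1)/\kappa,(\cdot + n_0)/\kappa)$, the main ensemble Q-estimate admits the representation
\begin{align*}
\tildeQ_h^{j,t}(s,a) \;=\; \alpha_m^{0,j}\,\tildeQ_h^{j,0}(s,a) \;+\; \sum_{i=1}^m \alpha_m^{i,j}\Bigl(r_h(s,a) + \tildeV_{h+1}^{t_i}(s_{h+1}^{t_i})\Bigr),
\end{align*}
where $\alpha_m^{i,j} = w_i^j\prod_{k=i+1}^m(1-w_k^j)$ are normalized random weights, and analogously for $\tildeQ_h^{\aux,j,t}$. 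I would then collect the expectation and concentration properties of $\{\alpha_m^{i,j}\}$ implied by the Beta parameters $(H+1)/\kappa$ and $1/\kappa$: in particular, $\mathbb{E}[\alpha_m^{i,j}]$ behaves like the deterministic Jin-style weight $\alpha_m^i = \frac{H+1}{H+i}\prod_{k=i+1}^m(1-\frac{H+1}{H+k})$ for the main ensemble and like $1/m$ for the auxiliary ensemble; the choice $n_0 = \widetilde{O}(\kappa)$ guarantees that $\alpha_m^{0,j}$ remains bounded away from zero with non-vanishing probability, which is what preserves the influence of the optimistic initialization $V^0$.

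The crucial and technically delicate step is to establish \emph{optimism}, namely $Q_h^t(s,a) \ge Q_h^\star(s,a)$ for all $(h,s,a,t)$ with probability $1-\delta$. For a single ensemble member the Beta randomness only yields optimism with a constant probability, so I would use an anti-concentration argument on the Beta-weighted average, combined with the ensemble size $J = \Theta(\log(SAHT/\delta))$, to boost this to high probability after a union bound over $(h,s,a,t)$. The optimistic mixing in \eqref{eq:policy_Q_update} is what makes this go through: the coefficient $1-1/H$ transmits optimism of $\max_j \tildeQ^{j,t}_h$ into $Q_h^t$, while the $1/H$ mass on the staged auxiliary estimate $\tildeQ_h^{\aux,t}$---which uses the slower $\B(1/\kappa,\cdot)$ learning rates and is reset conservatively at each stage boundary---supplies the residual optimism needed to offset the bias incurred because the main ensemble forgets early targets at rate $O(H/m)$. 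This is precisely the place where the gap identified in the footnote appears, and I expect the bulk of the new analysis to live here: I would isolate, for each $(s,a,h)$ and stage, the event that the staged auxiliary contribution dominates the mixing-induced bias, and propagate optimism across $h$ by backward induction on the horizon using the Bellman equation \eqref{eq:bellman_optimality}.

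With optimism in hand I would perform the standard error recursion. Setting $\delta_h^t = (\tildeV_h^t - V_h^{\pi^t})(s_h^t)$, expanding $\tildeV_h^t$ using the ensemble representation, and invoking Azuma--Hoeffding for the martingale differences $(P_{h,s_h^t,a_h^t}-\one_{s_{h+1}^t})V_{h+1}^\star$ gives a recursion of the form
\begin{align*}
\sum_{t=1}^T \delta_h^t \;\le\; \Bigl(1+\tfrac{1}{H}\Bigr)\sum_{t=1}^T \delta_{h+1}^t \;+\; \sum_{t=1}^T \text{(randomization bias)} \;+\; \widetilde{O}\bigl(\sqrt{H^3 SAT}\bigr),
\end{align*}
where the randomization-bias terms are controlled by the variance of the Beta weights (of order $H/\kappa$ per update). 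Telescoping across $h\in[H]$, using the visitation accounting $\sum_{t,h}\sqrt{1/N_h^t(s_h^t,a_h^t)} \le \widetilde{O}(\sqrt{SAHT})$, and absorbing the mixing coefficient $(1-1/H)^H = \Theta(1)$ yields the claimed $\widetilde{O}(\sqrt{H^5 SAT})$ bound. The main obstacle I anticipate is the second step---making the optimism argument airtight in the presence of \emph{agile} step-wise policy updates, where the target $\tildeV_{h+1}^{t_i}$ inside the recursion is itself nonstationary within a stage; this is what forces the introduction of the staged auxiliary ensemble and the specific choice of inflation and prior parameters $(\kappa,n_0)$.
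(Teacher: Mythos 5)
Your skeleton (unroll the randomized updates into aggregated weights, establish optimism via Beta/Dirichlet anti-concentration boosted by the ensemble size $J$ and a union bound, then run the standard $(1+1/H)$ error recursion with visitation accounting) is the same as the paper's. However, there are two genuine gaps. First, you dismiss the control of the randomization fluctuations as "controlled by the variance of the Beta weights," and you locate the footnote's technical gap in the optimism step. In fact the gap is exactly in this fluctuation-control step: one must bound $\bigl|\sum_{i=0}^m \lambda_i (W_{j,m}^i - \E[W_{j,m}^i])\bigr|$ for the aggregated weights $W_{j,m}^i = w_{i-1}^j\prod_{k\ge i}(1-w_k^j)$, and the naive route (treating the partial sums in $i$ as a martingale and applying Rosenthal, as in \citet{tiapkin2024model}) is invalid because the $W_{j,m}^i$ cannot be simultaneously adapted and independent---they sum to one. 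The paper's Lemma~\ref{lemma:concentration_aggregated_GD} fixes this by building a \emph{reverse} filtration generated by $w_{m-1},w_{m-2},\ldots$ and a backward martingale $M_{-i}=\E[S_{-\infty}\mid \F_{-i}]$, computing the martingale differences explicitly and only then invoking Rosenthal. Your proposal as written would either reproduce the flawed argument or leave the recursion's bias terms (the $\tilde b_h^t$ in Lemmas~\ref{lemma:recursion_circ} and~\ref{lemma:recursion_dagger}) unproved.

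Second, your optimism mechanism is off: you say the $(1-1/H)$ coefficient "transmits optimism of $\max_j \tildeQ_h^{j,t}$" while the auxiliary term supplies only a residual correction. The paper never establishes (and cannot establish) optimism of the main ensemble relative to $Q_h^\star$, precisely because the $O(H/m)$-scale learning rates forget the optimistic initialization too quickly; in Lemma~\ref{lemma:recursion_circ} the main ensemble is only bounded from \emph{above}, and in the optimism proof it is merely lower-bounded by $r_h(s,a)\ge 0$. All of the optimism comes from the staged auxiliary ensemble, which is deliberately inflated by a factor of $H+1$ (initialization $(1+H)V^0$, targets $\tildeV^{\aux}$, pseudo-transitions of value $r_0(H+1)$ with prior count $n_0$), so that the anti-concentration argument (with the corrected $\mathcal{K}_{\inf}$ mixture lemma) yields $\tildeQ_h^{\aux,t}(s,a)\ge H\cdot P_{h,s,a}V_{h+1}^{\star}+(H-h+1)$; only then does the $1/H$ mixing weight alone deliver $Q_h^t\ge Q_h^\star$ by backward induction on $h$. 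Relatedly, your claim that $n_0=\widetilde O(\kappa)$ keeps the prior weight $\alpha_m^{0,j}$ "bounded away from zero" is not what happens (it decays with $m$); the inflation of the auxiliary estimates, not a persistent prior weight in the main ensemble, is what sustains optimism. With these two ingredients replaced by the paper's versions, the rest of your recursion and summation steps go through essentially as you describe.
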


Theorem~\ref{thm:regret_main} shows that \myalg achieves a gap-independent regret bound of $\widetilde{O}(\sqrt{H^5SAT})$, matching the guarantees of UCB-Q with Hoeffding-type bonuses~\citep{jin2018q}  in episodic tabular MDPs. This bound is minimax-optimal up to polynomial factors of $H$ when compared to the known lower bound of $\Omega(\sqrt{H^3SAT})$~\citep{jin2018q,domingues2021episodic}.  

\begin{remark}[Anytime convergence guarantee]
    To obtain an anytime convergence, we can update the ensemble size $J$, inflation parameter $\kappa^{\aux}$ and $n_0^{\aux}$ adaptively at every stage (see the detailed description of Algorithm \ref{alg:samplingq_anytime} in Appendix \ref{appendix:anytime}). Then without requiring the number of episodes $T$ in advance, Algorithm \ref{alg:samplingq_anytime} attains the same regret bound $\widetilde{O}(\sqrt{H^5SAT})$ (see the proof in Appendix \ref{appendix:proof_anytime}), whereas the prior study \cite{tiapkin2024model} is not anytime.
\end{remark}

\paragraph{Technical challenges.} 
The primary challenge in analyzing \myalg arises from several subtle requirements on the randomized learning rates. Specifically, these rates must:
\begin{itemize}
\item be sufficiently randomized to induce necessary optimism;
\item avoid excessive randomness that could incur undesirable fluctuations;
\item support efficient exploitation of the most recent observations to avoid introducing exponential dependence on the horizon $H$. 
\end{itemize}

The subtle interplay among these conditions precludes the straightforward application of existing analytical techniques from the literature. For instance, the optimism may decay exponentially and be insufficient for sparse reward scenarios, so we re-inject the weighted optimistic values into the Q-ensembles at every stage to ensure necessary optimism at every step. In addition, to bound undesirable fluctuations of randomized learning rates, prior work \citep{tiapkin2024model} attempted to prove a concentration inequality based on Rosenthal's inequality \citep[Theorem 6]{tiapkin2024model}, which in turn requires a martingale property of the so-called \emph{aggregated} learning rates. However, the martingale property in fact does not hold (detailed below), revealing a gap in their proof. We propose a new proof strategy to bridge this gap and to extend the concentration inequality to our setting. 

These challenges jointly necessitate a carefully constructed mixing scheme that balances the efficient exploration and agile responses to latest observations, refined control of fluctuation and favorable properties of learning rates to ensure that \myalg attains near-optimal sample complexity with agile updates.

\paragraph{Identifying and fixing a technical gap in the proof of \cite{tiapkin2024model}.} While  \citet{tiapkin2024model} established a comparable regret bound for Staged-RandQL, it turns out that analysis has a crucial technical gap. Specifically, central to the analysis is to study the concentration of the weighted sum of the \emph{aggregated randomized learning rates}, defined as 
\begin{align*}
    W_{j,m}^{0} &= \prod_{k=0}^{m-1} (1-w_{k}^{j})\\
    W_{j,m}^{i} &= w_{i-1}^{j}\prod_{k=i}^{m-1} (1-w_{k}^{j}), \quad\forall i\in[m],
\end{align*}
which involves bounding the sum 
\begin{equation}\label{eq:concentration_sum}
    \left| \sum_{i=0}^{m} \lambda_i (W_{j, m}^i - \E[W_{j, m}^i]) \right|,
\end{equation}
for fixed real numbers $\lambda_i \in [-1, 1]$; see the proof of Lemma~4 in \citet{tiapkin2024model}. To this end, \citet{tiapkin2024model} asserted that the partial sums $S_i = \sum_{k=0}^{i} \lambda_k (W_{j, m}^k - \E[W_{j, m}^k])$ form a martingale with respect to some filtration $\F_i$ (cf. Proposition~7 there, which was invoked in the proof of Lemma~4 therein). The proof went on by a standard application of Rosenthal's inequality (\ie, Theorem~6 therein). To prove the martingale property, it was claimed that $W_{j, m}^i$ is adapted to $\F_i$ (\cf assumption of their Theorem~6), and $W_{j, m}^i$ is independent of $\F_{i-1}$ (\cf assumption of their Proposition~7).  Unfortunately, it is \emph{impossible} to achieve both adaptedness and independence except for trivial cases (e.g., deterministic learning rates), regardless of choice of $\F_i$.\footnote{\citet{tiapkin2024model} chose a filtration $\{\F_i\}$ to which $W_{j, m}^i$ is actually not adapted. } Indeed, if $\{\F_i\}$ is such a filtration, then $W_{j, m}^i$ being adapted means that all the randomness of $W_{j, m}^0, W_{j, m}^1 \cdots, W_{j, m}^{i-1}$ is contained in $\F_{i-1}$. As $W_{j, m}^i$ is independent of $\F_{i-1}$, we see that $W_{j, m}^i$ is independent with all of $W_{j, m}^0, \cdots, W_{j, m}^{i-1}$. By induction, we readily see that $W_{j, m}^0, \cdots, W_{j, m}^m$ are jointly independent. However, since $\sum_{i=0}^m W_{j, m}^i = 1$ \citep[Lemma 3]{tiapkin2024model}, such independence is not possible unless all aggregated learning rates $W_{j, m}^i$, $0\le i \le m$, are deterministic. Thus, Lemma 4 in \citet{tiapkin2024model} does not hold, thereby leaving a gap in the analysis. We fix this gap by introducing a reverse filtration that is tailored to the form of the aggregated weights, and study  \eqref{eq:concentration_sum} using a backward martingale construction in contrast with partial sums with substantial new analyses. With this approach, we established the correct concentration inequality not only for their setting but also for ours.  
 
\paragraph{Memory and computation complexity.} As the number of ensembles is $J = \widetilde{O}(1)$, the computational complexity is $O(H)$ per episode and the space complexity is $O(HSA)$, same as \cite{tiapkin2024model}. However, we note that due to the use of optimistic mixing, \myalg requires maintaining two ensembles, which effectively doubles the memory and computational cost compared to Staged-RandQL \citep{tiapkin2024model}.

\subsection{Gap-dependent logarithmic regret guarantee}
Note that such a $\sqrt{T}$-type regret bound holds for \textit{any} episodic tabular MDPs, which might not be tight for environment with some benign structural properties. To this end, we further develop a gap-dependent regret bound, which improves the regret bound from sublinear to logarithmic under a strictly positive suboptimality gap condition, as follows.
\begin{assumption}[Positive suboptimality gap]\label{assump:positive_gap}
    For any $(s,a,h)\in\S\times\A\times[H]$, we denote the sub-optimality gap as $\Delta_h(s,a) \defeq V_h^{\star}(s)-Q_h^{\star}(s,a)$ and assume that the minimal gap 
    $$\Delta_{\min} \triangleq \underset{{\substack{(s,a,h) \in \mathcal{S} \times \mathcal{A} \times [H]}}}{\min} \Delta_h(s,a) \mathbbm{1}\{\Delta_h(s,a) \ne 0\} > 0. $$
\end{assumption}
Note that this assumption implies that there exist some strictly better actions (i.e., the optimal actions) outperform the others for every state. This assumption is mild, as the minimal suboptimality gap $\Delta_{\min} = 0$ only when the MDPs degenerates. Consequently, it is commonly fulfilled in environments with finite action spaces, such as Atari-games and control tasks, and it is also widely adopted in prior literature \citep{yang2021q,zheng2024gap}.

Under this mild assumption, we have the following logarithmic gap-dependent regret bound, whose proof is deferred to Appendix \ref{appendix:log_regret}. To the best of our knowledge, this is the \textit{first} guarantee that shows sampling-based Q-learning can also achieve the logarithmic regret in episodic tabular RL.  
\begin{theorem}\label{thm:log_regret}
    Consider $\delta\in (0,1)$. Suppose all conditions in Theorem \ref{thm:regret_main} and Assumption \ref{assump:positive_gap} hold. Let the mixing rate $\eta_{t,h} = \frac{1}{H(1+\frac{1}{H})^q}$, where $q = q_h^t(s_h^t,a_h^t)$ is the stage index for every $(t,h)\in[T]\times[H]$. Then,  Algorithm \ref{alg:samplingq} guarantees that
    \begin{equation*}
       \mE[\mathrm{Regret}_T] \le O\left(\frac{H^6SA}{\Delta_{\min}}\log^5(SAHT)\right).
    \end{equation*}
\end{theorem}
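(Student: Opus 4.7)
The plan is to convert the $\sqrt{T}$-type regret bound of Theorem~\ref{thm:regret_main} into a logarithmic one by exploiting the positive suboptimality gap via a Simchowitz--Jamieson-style clipping argument, in the spirit of \citet{yang2021q,zheng2024gap}. The natural starting point is the performance-difference identity
\begin{equation*}
\mE\bigl[V_1^{\star}(s_1) - V_1^{\pi^t}(s_1)\bigr] \;=\; \sum_{h=1}^H \mE_{\pi^t}\!\bigl[\Delta_h(s_h,a_h)\bigr],
\end{equation*}
so that $\mE[\mathrm{Regret}_T]$ is the expected cumulative sum of per-step suboptimality gaps $\Delta_h(s_h^t,a_h^t)$ along the actually played trajectories, and regret contributions only arise from state-action pairs with $\Delta_h(s_h^t,a_h^t) > 0$.

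Next I would revisit the high-probability analysis underlying Theorem~\ref{thm:regret_main} and extract a per-visit confidence-style upper bound of the form
\begin{equation*}
\Delta_h(s_h^t,a_h^t) \;\le\; U_h^t \;\defeq\; c_1\sqrt{\tfrac{H^3\,\iota}{n_h^t(s_h^t,a_h^t)}} \;+\; \bigl(\text{recursion at step } h{+}1\bigr) \;+\; \bigl(\text{lower-order terms}\bigr),
\end{equation*}
where $\iota$ collects polylogarithmic factors of the form $\log^{O(1)}(SAHT/\delta)$. Under Assumption~\ref{assump:positive_gap}, every nonzero gap satisfies $\Delta_h(s_h^t,a_h^t) \ge \Delta_{\min}$, which, combined with $\Delta_h \cdot \Delta_{\min} \le \Delta_h \cdot U_h^t \le (U_h^t)^2$, yields the clipping inequality
\begin{equation*}
\Delta_h(s_h^t,a_h^t) \;\le\; \frac{(U_h^t)^2}{\Delta_{\min}} \qquad \text{whenever } \Delta_h(s_h^t,a_h^t) > 0.
\end{equation*}
Substituting this into the Bellman-type unrolling already built for Theorem~\ref{thm:regret_main} replaces the offending $\sqrt{\cdot}$ sum by a sum of squared confidences, while the standard visitation identity $\sum_t 1/n_h^t(s_h^t,a_h^t) = O(\log T)$ over each $(s,a,h)$ triple supplies the logarithmic scaling in $T$.

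Carrying out the bookkeeping, each squared confidence contributes $O(H^3\,\iota/n_h^t)$; summing over $h$ and $(s,a)$ introduces the factor $SA$, the outer horizon accumulation gives another $H$, the recursive Bellman error propagation adds $H^2$, and the $1/H$-weighted optimistic mixing of the auxiliary ensemble in \eqref{eq:policy_Q_update} absorbs one further $H$, together producing the $H^6 SA$ prefactor. The $1/\Delta_{\min}$ comes from the clipping denominator, and the $\log^5(SAHT)$ factor arises from stacking the polylogarithmic $\iota$ used in the concentration inequalities (including the backward-martingale correction described in Section~\ref{sec:gap_independent_bound}) with the harmonic $\log T$ from the visit count sum.

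The main obstacle I foresee is isolating a clean $U_h^t$ for the mixed policy Q-function that both admits the squaring-under-clipping argument and remains consistent with the randomized recursion. The step-wise component $\max_j \widetilde{Q}_h^j$ and the staged auxiliary component $\widetilde{Q}_h^{\aux}$ obey different update schedules and different learning-rate distributions, so their confidence bounds must be tracked separately and recombined carefully: one has to argue that the $1/H$-weighted auxiliary contribution still squares to a visit-count-summable quantity even though it is refreshed only at stage boundaries, and that the randomized learning-rate fluctuation bounds---which rely on the corrected backward-martingale concentration from Section~\ref{sec:gap_independent_bound}---continue to hold after squaring and summation across $(s,a,h,t)$. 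Once these two technical ingredients are in place, the rest is a direct adaptation of the gap-dependent templates of \citet{yang2021q,zheng2024gap} to the sampling-based, agile update scheme of \myalg.
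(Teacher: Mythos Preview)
Your high-level plan (performance-difference decomposition, optimism, and clipping via $\Delta_{\min}$ in the spirit of \citet{yang2021q,zheng2024gap}) matches the paper, but the central technical step you propose---squaring the per-visit confidence via $\Delta_h \le (U_h^t)^2/\Delta_{\min}$ and then invoking $\sum_t 1/n_h^t = O(\log T)$---does not go through as written. Your $U_h^t$ explicitly contains the recursion term $(V_{h+1}^t - V_{h+1}^\star)(s_{h+1}^t)$, which can be $\Theta(H)$ on any single visit and is not polylog-summable over $t$. Squaring destroys the linear structure on which the Bellman unrolling relies: you cannot first square $U_h^t$ and then ``substitute into the unrolling of Theorem~\ref{thm:regret_main}'', because that unrolling depends on linearity in the $(V_{h+1}^t - V_{h+1}^\star)$ terms. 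Conversely, if you unroll first (so that only the $B_{h'}^t$ terms remain) and then square, the cross terms in $(\sum_{h'\ge h} B_{h'}^t)^2$ cost an additional factor of $H$, and you still have to handle the gaps at every step $h$, not just $h=1$.

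The paper sidesteps this by a dyadic level-set decomposition rather than direct squaring. It partitions $[\Delta_{\min},H]$ into geometric intervals $\mathcal{I}_n=[2^{n-1}\Delta_{\min},2^n\Delta_{\min}]$, defines counters $C_{n,h}$ of visits with $(Q_h^t-Q_h^\star)(s_h^t,a_h^t)\in\mathcal{I}_n$, and unrolls the \emph{linear} recursion of Lemma~\ref{lemma:event_learning_error} with indicator weights $w_{n,h}^t\in\{0,1\}$ to obtain $\sum_t w_{n,h}^t(Q_h^t-Q_h^\star)(s_h^t,a_h^t)\lesssim \sqrt{C_{n,h}\cdot SAH^5\kappa^2\log^3(SAHT)}$ plus lower-order terms (Lemma~\ref{lemma:weighted_sum_error}). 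Combined with the trivial lower bound $2^{n-1}\Delta_{\min}C_{n,h}$ on the same sum, this is a self-bounding inequality yielding $C_n\lesssim H^6SA\kappa^2\log^3(SAHT)/(4^n\Delta_{\min}^2)$ (Lemma~\ref{lemma:bounded_counter}); summing $\sum_n 2^n\Delta_{\min}C_n$ gives the claimed bound. The crucial point is that keeping the weights linear lets the recursion be unrolled \emph{before} the self-bounding step, which is precisely what your squaring route cannot do.
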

Note that the above logarithmic regret bound holds for \myalg without assuming any prior knowledge on the minimal suboptimality gap $\Delta_{\min}$ during implementation. As shown in \citet{simchowitz2019non},  any algorithm with an $\Omega(\sqrt{T})$ regret bound in the worst case, has a $\log T$-type lower bound of the expected gap-dependent regret. Also, our bound matches the expected regret for UCB-Q \citep{jin2018q} under the same condition (i.e. Assumption \ref{assump:positive_gap}) for episodic tabular MDPs \citep{yang2021q}, which is nearly tight in $S,A,T$ up to the $\log(SAT)$ and $H$ factors.

\section{Experiments}\label{section:experiment}
In this section, we present the experimental results of \myalg compared to baseline algorithms, using \textsc{rlberry} \citep{rlberry}, in the following two environments. 
All the experiments are conducted on a machine equipped with 2 CPUs (Intel(R) Xeon(R) Gold 6244 CPU), running Red Hat Enterprise Linux 9.4, without GPU acceleration.
The corresponding codes can be found at  
\begin{center}
    \url{https://github.com/IrisHeWANG/RandomizedQ}
\end{center}
which is built upon the implementation of \cite{tiapkin2024model}, using the \textsc{rlberry} library \citep{rlberry}.

\paragraph{A grid-world environment.}
We first evaluate performance in a $10 \times 10$ grid-world environment as used in \citet{tiapkin2024model}, where each state is represented as a tuple $(i, j) \in [10] \times [10]$, and the agent can choose from four actions: left, right, up, and down. The episode horizon is set to $H = 50$. At each step, the agent moves in the planed direction with probability $1 - \epsilon$ and to a random neighboring state with probability $\epsilon = 0.2$. The agent starts at position $(1,1)$, and the reward is 1 only at state $(10,10)$, with all other states yielding zero reward. We also examine the performance in a larger $25 \times 25$ grid-world environment, where the agent receives a reward of 1 only at state $(25,25)$, with the episode horizon set to $H = 200$. Compared to the $10 \times 10$ setting, the reward is significantly sparser. The corresponding results are shown in Figures~\ref{fig:grid} and~\ref{fig:grid_large}, respectively.

\paragraph{A chain MDP.}
We also consider a chain MDP environment as described in \citet{osband2016deep}, which consists of $L=20$ states (i.e., the length of the chain) and two actions: left and right. The episode horizon is set to $H = 50$. With each action, the agent transits in the intended direction with probability $0.9$, and in the opposite direction with probability $0.1$. The agent starts in the leftmost state, which provides a reward of 0.05, while the rightmost state yields the highest reward of 1.  Additionally, we evaluate performance in a longer chain MDP with $L = 50$ states and a horizon of $H = 100$. The corresponding results are shown in Figures~\ref{fig:chain} and~\ref{fig:chain_large}, respectively.
 
\begin{figure}[t]
    \centering
    \begin{subfigure}{0.48\linewidth}
        \centering
        \includegraphics[height=2in,trim=9 0 5 0, clip]{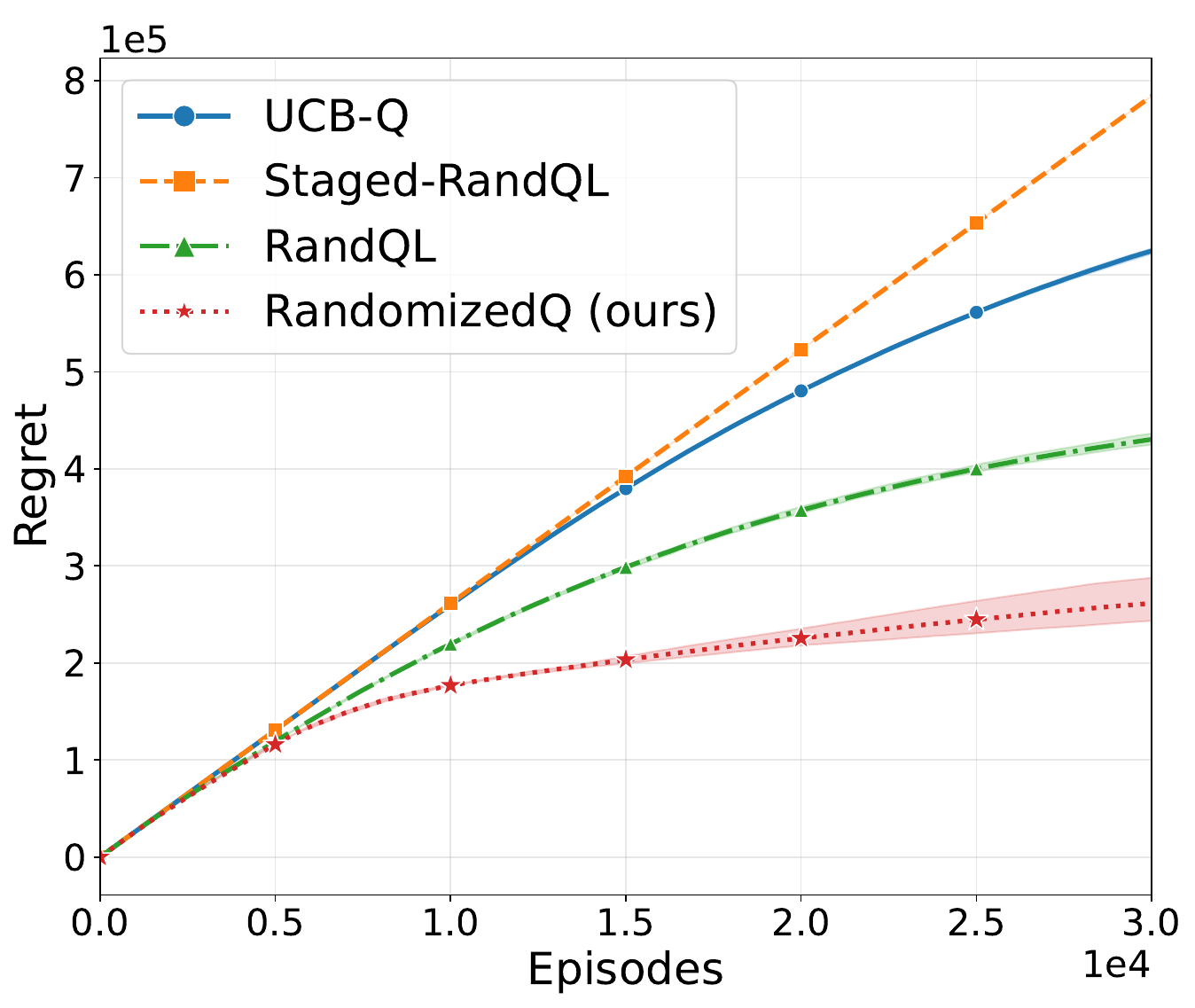}
        \caption{A $10 \times 10$ grid-world environment.}
        \label{fig:grid}
    \end{subfigure}
    \begin{subfigure}{0.48\linewidth}
        \centering
        \includegraphics[height=2in,trim=9 0 5 0, clip]{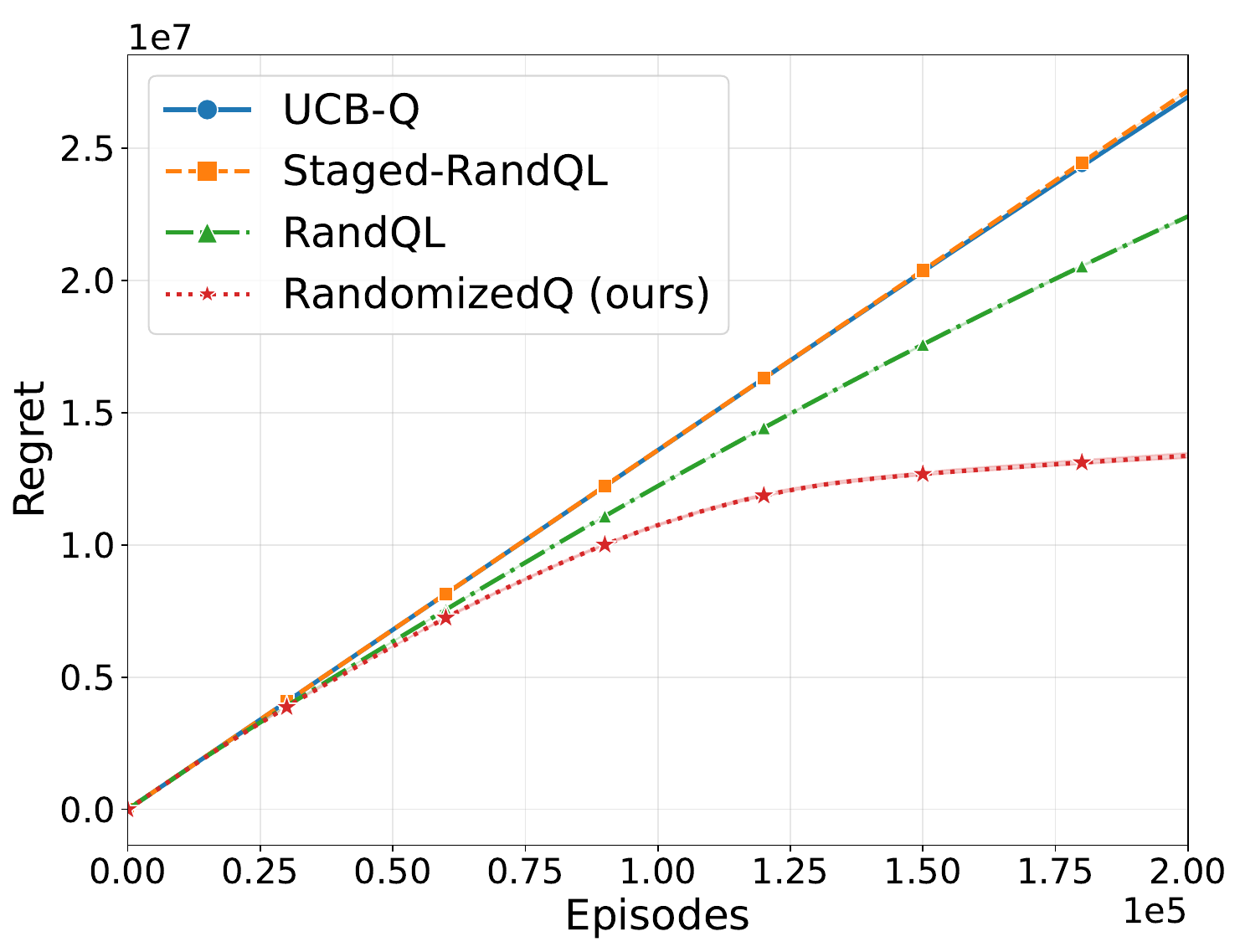}
        \caption{A $25 \times 25$ grid-world environment.}
        \label{fig:grid_large}
    \end{subfigure}
    \begin{subfigure}{0.48\linewidth}
        \centering
        \includegraphics[height=2in,trim=9 0 5 0, clip]{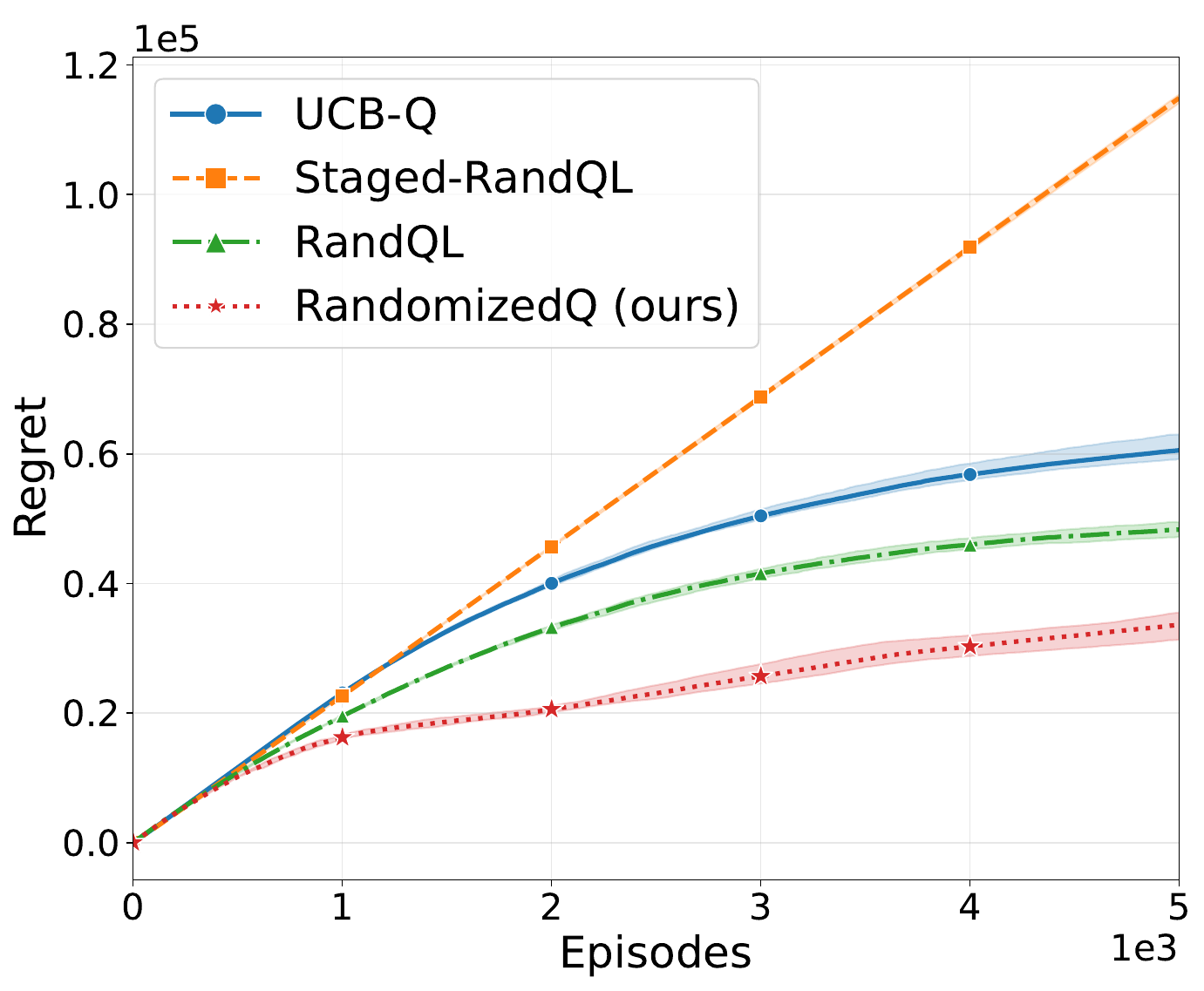}
        \caption{A chain MDP with length $L=20$.}
        \label{fig:chain}
    \end{subfigure}
    \begin{subfigure}{0.48\linewidth}
        \centering
        \includegraphics[height=2in,trim=9 0 5 0, clip]{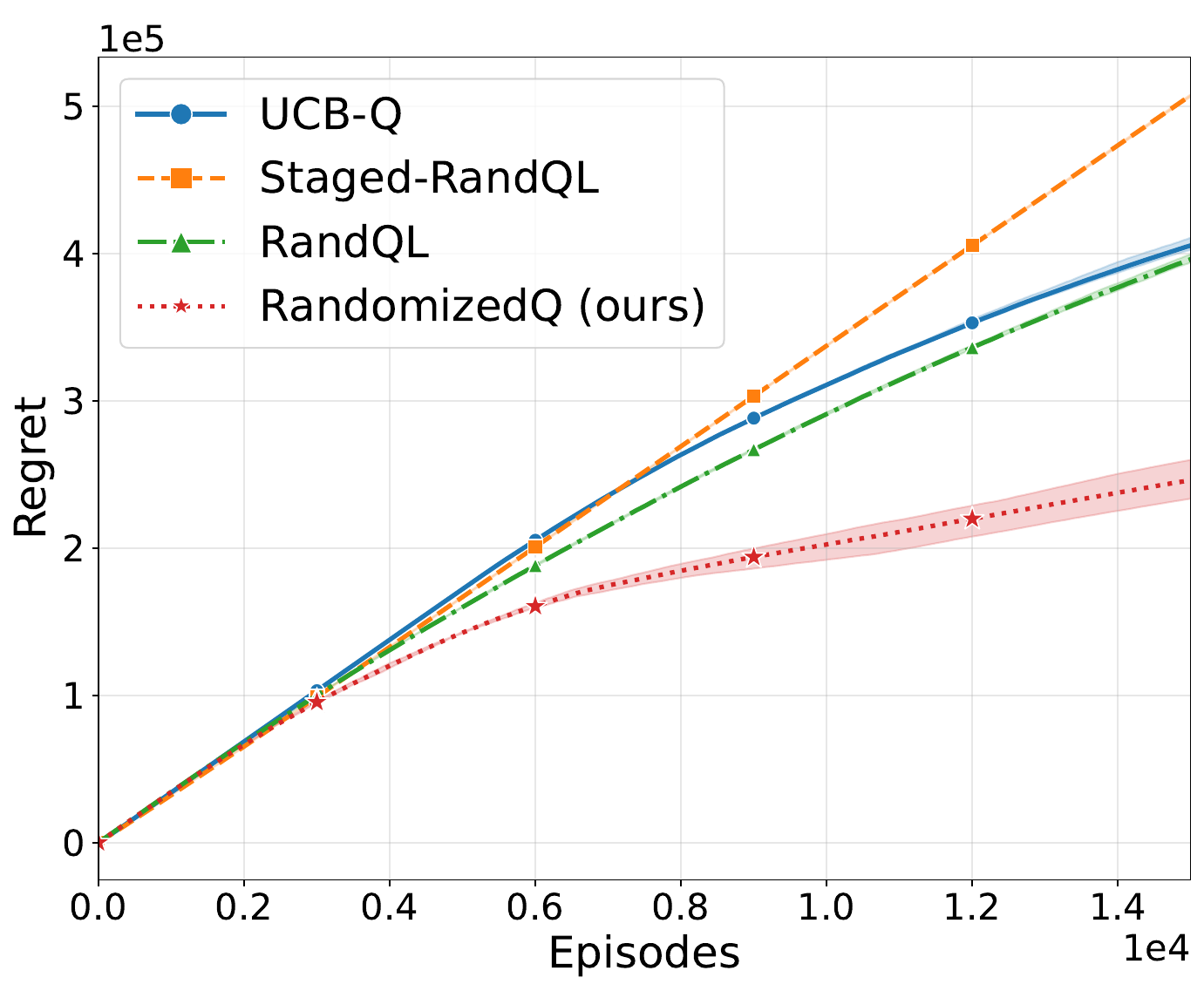}
        \caption{A chain MDP with length $L=50$.}
        \label{fig:chain_large}
    \end{subfigure}
    \caption{Comparison between \myalg and baseline algorithms in the grid-world environment (\cf the first row) and the chain MDP (\cf the second row), where total regret is plotted against the number of episodes. \myalg consistently achieves lower regret than UCB-Q, as well as both the standard randomized Q-learning (\ie, RandQL) and its stage-wise variant (\ie, Staged-RandQL), demonstrating superior sample efficiency and faster learning processes.  }
    \label{fig:mainfig}
\end{figure}

\paragraph{Baselines and experiment setups.}
We compare \myalg with (1) UCB-Q: model-free Q-learning with bonuses \citep{jin2018q} (2) Staged-RandQL \citep{tiapkin2024model}: the staged version of RandQL with theoretical guarantees (3) RandQL \citep{tiapkin2024model}: the randomized version of UCB-Q, without provable guarantees. For all algorithms with randomized learning rates in both environments, we let the number of ensembles $J=20$, the inflation coefficient $\kappa=1$, and the number of pseudo-transitions $n_0 = 1/S$, where $S$ corresponds to the size of the state space in different environments. For \myalg, we set the mixing rate $\eta_{t,h} = (\sqrt{(1+1/H)^{q}H}+1)^{-1}$, where $q = q_h^t(s_h^t,a_h^t)$, $\forall (t,h)\in[T]\times[H]$.
For fair comparison, we run $4$ trials per algorithm and show the average along with the 90\% confidence interval in the figures above.

\paragraph{Results.}
From Figure \ref{fig:mainfig}, \myalg exhibits significantly improved performance across all environment sizes, achieving substantially lower total regret. Unlike UCB-Q, which suffers from excessive over-exploration, and the Staged-RandQL that adapts the policy only at the end of each stage, \myalg effectively balances exploration and exploitation through randomized learning rates and agile policy updates. We also observe that \myalg performs even better than the empirical RandQL that lacks theoretical guarantees in prior work \citep{tiapkin2024model}, especially for larger environments with more sparse rewards. These results validate the effectiveness of sampling-driven updates without explicit bonus terms and highlight the benefit of avoiding stage-wise policy updates in model-free reinforcement learning.

\section{Conclusion}
 
In this work, we study the performance of Q-learning without exploration bonuses for episodic tabular MDPs in the online setting. We identify two key challenges in existing approaches: the additional statistical dependency introduced by randomizing learning rates, and the inefficiency of slow, stage-wise policy updates, as the bottlenecks of theoretical analysis and algorithm design. To address these challenges, we develop a novel randomized Q-learning algorithm with agile updates called \myalg, which efficiently adapts the policy to newly observed data. Theoretically, we establish a sublinear worst-case and a logarithmic gap-dependent regret bounds. Empirically, our experiments show that \myalg significantly outperforms than baseline algorithms in terms of total regret, due to the effective exploration and agile updates.

There are several promising directions for future research. For example, extending our analysis to function approximation settings—such as linear or neural representations—would significantly broaden the applicability of \myalg\citep{melo2007q,song2019efficient}. In addition, incorporating variance reduction techniques could further improve the regret bounds and potentially match the theoretical lower bounds \citep{zhang2020almost,zheng2024gap}.

\section*{Acknowledgement}
This work is supported in part by grants ONR N00014-19-1-2404, NSF CCF-2106778,  CNS-2148212, AFRL FA8750-20-2-0504, and by funds from federal agency and industry partners as specified in the Resilient \& Intelligent NextG Systems (RINGS) program.

\bibliographystyle{apalike}
\bibliography{references,bibfileRL}
\appendix
\section{Notation and Preliminaries}
Before proceeding, we first introduce the following notation with the dependency on the episode index $t$ and its short-hand notation whenever it is clear from the context.
\begin{itemize}
    \item \(n_h^t(s,a)\), or the shorthand \(n^t_h\): the number of previous visits to \((s,a)\) at step \(h\) before episode \(t\).
    \item \(n_h^{\aux,t}(s,a)\), or the shorthand \(n^{\aux,t}_h\): the number of previous visits to \((s,a)\) at step \(h\) during the current stage that the episode $t$ belongs to.
    \item $q_h^t(s,a)$, or the shorthand $q_h^t$: the stage index of the $i$-th visit to $(s,a)$ at step \(h\) and episode $t$.
    \item \(\ell_h^i(s,a)\), or the shorthand \(\ell^{i}\):
          the episode index of the \(i\)-th visit to \((s,a)\) at step \(h\);
          by convention \(\ell^{0}=0\).
    \item \(\ell_{q,h}^{\aux,i}(s,a)\), or the shorthand \(\ell^{i}_{q}\):
          the episode index of the \(i\)-th visit to \((s,a)\) at step \(h\) during the stage $q$; by convention, \(\ell^{0}_{0}=0\) and $\ell^{\aux,0}_{q,h}(s,a)$ represents the episode when the $q$-th stage starts for $(h,s,a)$.
    \item $e_q = \lceil (1+1/H)^q H\rceil$: the length of the $q$-th stage; by convention, $e_{-1}=0$.
    \item \(J\):
          number of ensemble heads (temporary \(Q\)-functions) per episode.   
    \item \(\widetilde Q_h^{j,t}(s,a)\):
          \(j\)-th temporary (ensemble) estimate of the optimal Q-value
          at the \emph{beginning} of episode \(t\), where the randomized learning rate follows $\B(\frac{H+1}{\kappa},\frac{n_h^t+n_0 - 1}{\kappa})$.

    \item \(\widetilde Q_h^{\aux,j,t}(s,a)\):
          \(j\)-th temporary (ensemble) estimate of the optimal Q-value
          at the \emph{beginning} of episode \(t\), where the randomized learning rate follows $\B(\frac{1}{\kappa^{\aux}},\frac{n_h^{\aux,t}+n_0^{\aux} - 1}{\kappa^{\aux}})$.
    
    \item \(\widetilde Q_h^{\aux,t}(s,a)\):
          the optimistic approximation of the optimal Q-function updated at the end of each stage.
    
    \item \(Q_h^{t}(s,a)\):
          the policy \(Q\)-function at the start of episode \(t\);
          its update at the visited pair \((s_h^t,a_h^t)\) is
          \begin{equation}\label{eq:policy_Q_with_dependency}
          Q_h^{t}(s_h^t,a_h^t)
          = \eta_{t-1,h}\max_{j\in[J]}\!\widetilde Q_h^{j,t}(s_h^t,a_h^t)
            + (1-\eta_{t-1,h}) \widetilde Q_h^{\aux,t}(s_h^t,a_h^t).
          \end{equation}
    
    \item \(\widetilde V_{h}^{\ell^{i}}\),
          \(\widetilde V_{h}^{\aux,\ell_{q}^i}\):
          optimistic value estimation of the optimal value function at episode
          \(\ell^{i}\) and \(\ell_{q}^i\).    
    \item \(\pi^t_h\): the learned policy used at step \(h\) in episode \(t\); the action $a_h^t$ is drawn from the learned policy $\pi_h^t$ at state $s_h^t$ for any $(h,t)\in[H]\times[T]$.
    \end{itemize}
For analysis, we also introduce the following notation. 
\begin{itemize}
        \item $s_0$: the optimistic pseudo-state $s_0$ with
        \[
            r_h(s_0,a)=r_0 \;>\; 1,
            \quad
            p_h(s_0\mid s,a)=\1\{s=s_0\}.
        \]
    
        \item $V^{\star}_{h}(s_0)$: the cumulative return obtained by always staying at the optimistic state $s_0$ from step $h$, \ie, $V^{\star}_{h}(s_0)=r_0\bigl(H-h+1\bigr)$.
    
        \item $n_0,n_0^{\aux}$: prior pseudo-transition counts; thereby, each state-action pair $(s,a)$ starts with $n_0$ prior pseudo-transitions, leading to  
        \[
            w_{0}^{\,j}\sim\mathrm{Beta}\bigl((H+1)/\kappa,\;n_0/\kappa\bigr),
            \quad
            w_{0}^{\aux,j}\sim\mathrm{Beta}\bigl(1/\kappa^{\aux},\;n_0^{\aux}/\kappa^{\aux}\bigr),
            \qquad j\in[J].
        \]
    
        \item $\mathcal K_{\inf}(p,\mu)$: the information-theoretic distance between some measure $p\in\mathcal P[0,b]$ and $\mu\in[0,b]$, defined as 
        \[\mathcal{K}_{\inf}(p,\mu) = \inf \{\text{KL}(p,q): q\in \mathcal{P}[0,b], p \ll q, \E_{X\sim q}[X]\ge \mu\},\]
        where $\mathcal P[0,b]$ denotes all probability measures supported on $[0,b]$.

        \item  $\delta_x$: Dirac measure concentrated at a single point $x$.
    
        \item $[n]$: the indexing shorthand; for a positive integer $n$, we write $[n]\coloneqq\{1,2,\ldots,n\}$.
    
        \item $\|X\|_p$: the $\ell_p$-norm of a vector $X\in\mathbb R^n$ where $p\ge1$; formally defined as
        \[
            \|X\|_p=\Bigl(\sum_{i=1}^n |X_i|^p\Bigr)^{1/p}.
        \]
    
        \item $(x)_k$: Pochhammer symbol, \ie,  for $k\in\mathbb N$,
        \begin{equation}\label{eq:weird_symbol}
            (x)_k = x(x+1)\cdots(x+k-1).
        \end{equation} 
        \item $\1 \{x\ge c\}$: an indicator function that equals $1$ when $x\ge c$, and 0 otherwise.
        \item $|X|$: the cardinality of the set $X$.
         \item \( A \lesssim B \) : means \( A \le c B \) for some universal constant \( c > 0 \).
         \item $\mathbb{N}^{\star}$: the set consists of the positive integers, i.e., $\{1,2,3,\ldots\}$.
    \end{itemize}

\paragraph{Beta distribution.}
We recall the definition and important properties of Beta distribution \citep[Section 2]{gupta2004handbook}, which is used in the follow-up analysis.

\begin{definition}[Beta distribution]\label{def:beta}
    A continuous random variable \(X\) is said to follow a \emph{Beta} distribution with shape parameters \(\alpha>0\) and \(\beta>0\), written as
    \[
    X \sim \operatorname{Beta}(\alpha,\beta),
    \]
    if its probability density function is  
    \[
    f_{X}(x \mid \alpha,\beta)
        \;=\;
       \frac{1}{B(\alpha,\beta)}
       \,x^{\alpha-1}\,(1-x)^{\beta-1},
    \qquad 0 < x < 1,
    \]
    and \(f_{X}(x)=0\) otherwise,
    where the \emph{Beta function}
    \(B(\alpha,\beta)=\dfrac{\Gamma(\alpha)\Gamma(\beta)}{\Gamma(\alpha+\beta)}\)
    serves as the normalizing constant.
\end{definition}

\begin{lemma}[Moments of the Beta distribution]\label{lem:beta_moments}
    Let \(X \sim \mathrm{Beta}(a,b)\) with \(a,b>0\), and recall the Pochhammer symbol \((x)_k\) defined in \eqref{eq:weird_symbol}.  Then, for any positive integer \(r\),
    \begin{equation}\label{eq:beta_moment}
        \E\left[X^{r}\right] = \frac{(a)_r}{(a+b)_r}.
    \end{equation}
    In particular, the expectation and variance of \(X\) are
    \begin{equation}\label{eq:beta_expectation_variance}
        \E[X] = \frac{a}{a+b},
        \qquad
        \Var(X) = \frac{ab}{(a+b)^2(a+b+1)}.
    \end{equation}
\end{lemma}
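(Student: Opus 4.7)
The plan is to derive the moment formula by directly integrating against the Beta density and rewriting the resulting ratio of Beta functions via the Gamma function. First I would write
\[
\E[X^{r}] \;=\; \int_{0}^{1} x^{r} f_{X}(x\mid a,b)\,\mathrm{d}x
\;=\; \frac{1}{B(a,b)} \int_{0}^{1} x^{a+r-1}(1-x)^{b-1}\,\mathrm{d}x,
\]
and recognize the remaining integral as $B(a+r,b)$, using the standard identity $B(\alpha,\beta)=\int_{0}^{1} x^{\alpha-1}(1-x)^{\beta-1}\,\mathrm{d}x$ valid for any $\alpha,\beta>0$ (the assumption $a>0$ guarantees $a+r>0$ for every positive integer $r$, so the integral converges). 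This reduces the computation to evaluating $B(a+r,b)/B(a,b)$.

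Next, I would substitute $B(\alpha,\beta)=\Gamma(\alpha)\Gamma(\beta)/\Gamma(\alpha+\beta)$ into numerator and denominator. The $\Gamma(b)$ factor cancels, yielding
\[
\E[X^{r}] \;=\; \frac{\Gamma(a+r)\,\Gamma(a+b)}{\Gamma(a)\,\Gamma(a+b+r)}.
\]
The concluding step is to apply the functional equation $\Gamma(x+r)=\Gamma(x)\,x(x+1)\cdots(x+r-1)=\Gamma(x)\,(x)_{r}$, which holds for every positive integer $r$ by induction on $r$ from $\Gamma(x+1)=x\Gamma(x)$. This identity turns $\Gamma(a+r)/\Gamma(a)$ into $(a)_{r}$ and $\Gamma(a+b+r)/\Gamma(a+b)$ into $(a+b)_{r}$, producing exactly the claimed formula $\E[X^{r}]=(a)_{r}/(a+b)_{r}$.

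Finally, I would specialize to $r=1,2$ to obtain the mean and variance. Setting $r=1$ immediately gives $\E[X]=a/(a+b)$. Setting $r=2$ yields $\E[X^{2}]=a(a+1)/\bigl((a+b)(a+b+1)\bigr)$, and then
\[
\Var(X) \;=\; \E[X^{2}] - \E[X]^{2} \;=\; \frac{a(a+1)}{(a+b)(a+b+1)} - \frac{a^{2}}{(a+b)^{2}},
\]
which after putting the two fractions over the common denominator $(a+b)^{2}(a+b+1)$ simplifies to $ab/\bigl((a+b)^{2}(a+b+1)\bigr)$, matching \eqref{eq:beta_expectation_variance}. There is no genuine obstacle here: the argument is a routine Beta–Gamma calculation, and the only care needed is to verify convergence of the integral (ensured by $a,b>0$) and the elementary induction establishing $\Gamma(x+r)=\Gamma(x)(x)_{r}$ for positive integer $r$.
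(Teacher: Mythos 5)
Your proof is correct and complete: the reduction of $\E[X^r]$ to $B(a+r,b)/B(a,b)$, the cancellation via $B(\alpha,\beta)=\Gamma(\alpha)\Gamma(\beta)/\Gamma(\alpha+\beta)$, and the identity $\Gamma(x+r)=\Gamma(x)(x)_r$ together give exactly \eqref{eq:beta_moment}, and the $r=1,2$ specializations yield the stated mean and variance. The paper itself offers no proof of this lemma, citing it as a standard fact from \citet[Section 2]{gupta2004handbook}, so there is nothing to compare against; your derivation is the standard one. Note also that the displayed variance in \eqref{eq:beta_expectation_variance} contains a stray comma in the denominator, and your computation confirms the intended formula $\Var(X)=ab/\bigl((a+b)^2(a+b+1)\bigr)$.
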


We next collect a few auxiliary lemmas used in the proofs.

\begin{lemma} \label{lemma:fraction_plus_k}
    For any $a,b\ge 1,\kappa\ge 0$, we have
\begin{equation}\label{eq:fraction_plus_k_1}
    \frac{a}{a+b}\le \frac{a+\kappa}{a+b+\kappa}\le (\kappa+1) \cdot\frac{a}{a+b}.
\end{equation}
In addition, for any $r\in \mathbb{N}_{+}$,
\begin{equation}\label{eq:fraction_plus_k_2}
    \frac{a+r\kappa}{a+b+r\kappa}\le r \cdot\frac{a+\kappa}{a+b+\kappa}.
\end{equation}
\end{lemma}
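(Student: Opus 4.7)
The two bounds in \eqref{eq:fraction_plus_k_1} and the bound \eqref{eq:fraction_plus_k_2} are purely algebraic. My plan is to reduce each to a sign check by bringing the two sides to a common denominator and factoring the resulting numerator, using the hypotheses $a,b\ge 1$ and $\kappa\ge 0$.

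For the left inequality in \eqref{eq:fraction_plus_k_1}, I form the difference
\[
\frac{a+\kappa}{a+b+\kappa} - \frac{a}{a+b}
= \frac{(a+\kappa)(a+b) - a(a+b+\kappa)}{(a+b+\kappa)(a+b)}
= \frac{b\kappa}{(a+b+\kappa)(a+b)},
\]
which is non-negative since $b\ge 1$ and $\kappa\ge 0$. For the right inequality, the analogous difference $(\kappa+1)\tfrac{a}{a+b}-\tfrac{a+\kappa}{a+b+\kappa}$ has numerator
\[
(\kappa+1)a(a+b+\kappa) - (a+\kappa)(a+b) \;=\; \kappa\bigl(a(a+b+\kappa) - b\bigr),
\]
after cancelling the common $(a^2+ab+a\kappa)$ term. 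Since $a\ge 1$ and $b\ge 1$, one has $a(a+b+\kappa)\ge a+b+\kappa \ge b$, so the bracket is non-negative and the upper bound follows.

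For \eqref{eq:fraction_plus_k_2}, I cross-multiply and consider
\[
r(a+\kappa)(a+b+r\kappa) - (a+r\kappa)(a+b+\kappa).
\]
Splitting each product along the $(a+b)$ component and the $\kappa$-component, the key identity $r(a+\kappa)-(a+r\kappa)=(r-1)a$ allows me to collect terms and obtain
\[
r(a+\kappa)(a+b+r\kappa) - (a+r\kappa)(a+b+\kappa)
= (r-1)\bigl[a(a+b) + (r+1)a\kappa + r\kappa^{2}\bigr],
\]
which is non-negative for $r\in\mathbb{N}_{+}$ since each summand in the bracket is non-negative. Dividing through by the positive product $(a+b+\kappa)(a+b+r\kappa)$ yields \eqref{eq:fraction_plus_k_2}.

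No real obstacle arises; the only risk is an arithmetic slip when expanding. I would double-check the collapsing identity $r(a+\kappa)-(a+r\kappa)=(r-1)a$, which drives the clean factorization by $(r-1)$ and is the hinge of the second part.
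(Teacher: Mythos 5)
Your proof is correct and follows essentially the same route as the paper: the upper bound of \eqref{eq:fraction_plus_k_1} and the bound \eqref{eq:fraction_plus_k_2} are verified by exactly the same cross-multiplication and factorization (the paper also arrives at $\kappa(a^2+ab+a\kappa-b)$ and $(r-1)[a(a+b)+\kappa(a(r+1)+\kappa r)]$). The only cosmetic difference is the lower bound of \eqref{eq:fraction_plus_k_1}, which you check by a direct difference computation while the paper notes monotonicity of $t\mapsto \frac{a+t}{a+b+t}$; both are equally valid.
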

\begin{proof}
The following proves \eqref{eq:fraction_plus_k_1} and \eqref{eq:fraction_plus_k_2}, respectively.
We start with the lower bound of \eqref{eq:fraction_plus_k_1}. Define $f(t)=\dfrac{a+t}{a+b+t}$ for $t\ge 0$. Then $f'(t)=\dfrac{b}{(a+b+t)^2}>0$, so $f$ is increasing. Hence $f(0)=\dfrac{a}{a+b}\le f(\kappa)=\dfrac{a+\kappa}{a+b+\kappa}$. Moving to the upper bound, it is equivalent to show
\begin{equation*}
    (a+\kappa)(a+b)\;\le\;(\kappa+1)a(a+b+\kappa).
\end{equation*}
Expanding and rearranging leads to
\begin{equation*}
    (\kappa+1)a(a+b+\kappa)-(a+\kappa)(a+b)=\kappa\bigl(a^2+ab+a\kappa-b\bigr) \ge 0,
\end{equation*}
which holds if $a,b\ge 1$ and $\kappa\ge 0$. 

We now show \eqref{eq:fraction_plus_k_2}. Let $r\in\mathbb N_{+}$ and
    \[
    L=(a+r\kappa)(a+b+\kappa),\quad
    R=r(a+\kappa)(a+b+r\kappa).
    \]
    We prove $L\le R$ by first computing
    \[
    \begin{aligned}
    R-L
    &=r(a+\kappa)(a+b+r\kappa)-(a+r\kappa)(a+b+\kappa)\\
    &=(r-1)\Bigl[a(a+b)+\kappa\bigl(a(r+1)+\kappa r\bigr)\Bigr]\;\ge\;0,
    \end{aligned}
    \]
    because each bracketed term is non-negative and $r-1\ge 0$.
    Dividing by the common positive factor
    $(a+b+r\kappa)(a+b+\kappa)$ yields
    \(
    \dfrac{a+r\kappa}{a+b+r\kappa}\le r\dfrac{a+\kappa}{a+b+\kappa}.
    \)
All statements are therefore proved.
\end{proof}

\begin{lemma}[Rosenthal inequality, Theorem 4.1 in \cite{pinelis1994optimum}]\label{lemma:rosenthal_ineq}
    Let $X_1, \ldots, X_n$ be a martingale-difference sequence adapted to a filtration $\{\mathcal{F}_i\}_{i=1,\ldots,n}$:
    \[
    \mathbb{E}[X_i \mid \mathcal{F}_{i-1}] = 0.
    \]
    Define $\mathcal{V}_i = \mathbb{E}[X_i^2 \mid \mathcal{F}_{i-1}]$. Then there exist universal constants $c_1$ and $c_2$ such that for any $p \geq 2$ the following holds
    \[
    \mathbb{E}^{1/p} \left[ \left| \sum_{i=1}^n X_i \right|^p \right]
    \leq
    C_1 p^{1/2} \mathbb{E}^{1/p} \left[ \left| \sum_{i=1}^n \mathcal{V}_i \right|^{p/2} \right]
    + 2C_2 p \cdot \mathbb{E}^{1/p} \left[ \max_{i \in [n]} |X_i|^p \right].
    \]
\end{lemma}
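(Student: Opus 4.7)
The plan is to follow the classical argument of \cite{pinelis1994optimum}, which combines the martingale square-function (Burkholder--Davis--Gundy, BDG) inequality with a compensator decomposition of the bracket process and a self-bounding argument. Throughout, I write $S_n = \sum_{i=1}^n X_i$, $[M]_n = \sum_{i=1}^n X_i^2$, $\langle M\rangle_n = \sum_{i=1}^n \mathcal{V}_i$, and $M_n^{\star} = \max_{i\in[n]}|X_i|$.

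First, I would apply the sharp BDG inequality to the martingale $(S_k)_{k\le n}$ at exponent $p\ge 2$. It is classical that the BDG constant in this regime scales as $O(\sqrt{p})$, which gives
\[
\mathbb{E}^{1/p}\bigl[|S_n|^p\bigr] \;\le\; C\sqrt{p}\,\mathbb{E}^{1/p}\bigl[[M]_n^{p/2}\bigr] \;=\; C\sqrt{p}\,\bigl\|[M]_n\bigr\|_{p/2}^{1/2}.
\]
This already produces the $\sqrt{p}$ factor of the leading term in the target inequality and reduces the problem to controlling $\|[M]_n\|_{p/2}$.

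Next, I would compensate $[M]_n$ by writing $[M]_n = \langle M\rangle_n + D_n$ with $D_n := [M]_n - \langle M\rangle_n$. By construction $D_n$ is a martingale adapted to $\{\mathcal{F}_i\}$, so the triangle inequality in $L^{p/2}$ yields $\|[M]_n\|_{p/2} \le \|\langle M\rangle_n\|_{p/2} + \|D_n\|_{p/2}$. Applying BDG once more to $D_n$ (or a direct $L^2$ estimate when $p<4$) and using the pointwise bound $(X_i^2-\mathcal{V}_i)^2 \le 4(M_n^{\star})^2 X_i^2$ followed by Cauchy--Schwarz, one arrives at an estimate of the form
\[
\|D_n\|_{p/2} \;\lesssim\; \sqrt{p}\,\|M_n^{\star}\|_p\,\bigl\|[M]_n\bigr\|_{p/2}^{1/2}.
\]
Setting $A := \|[M]_n\|_{p/2}^{1/2}$, the two displays above combine into a quadratic self-bound $A^2 \le \|\langle M\rangle_n\|_{p/2} + C'\sqrt{p}\,\|M_n^{\star}\|_p\,A$, which by AM--GM rearranges to $A \le \|\langle M\rangle_n\|_{p/2}^{1/2} + C''\sqrt{p}\,\|M_n^{\star}\|_p$. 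Substituting this back into the first display and absorbing constants into universal $C_1, C_2$ yields precisely the announced Rosenthal-type inequality.

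The main obstacle is the compensator step: controlling $\|D_n\|_{p/2}$ without triggering an infinite BDG recursion on the bracket of $D_n$ itself. Each naive additional iteration would spawn correction terms with their own $\sqrt{p/2^k}$ scaling, and summing them inflates the constants uncontrollably. The self-bounding quadratic trick circumvents this, but relies crucially on the sharp $O(\sqrt{p})$ scaling of the BDG constants (rather than the elementary $O(p)$ bound); establishing this sharp scaling via an optimized exponential-supermartingale/Bennett-type argument is exactly the core technical content of \cite{pinelis1994optimum}. Tracking the explicit universal constants $C_1, C_2$ cleanly through Steps 1--3, and verifying that the $\sqrt{p}$ and $p$ dependencies match the claimed form, is the remaining bookkeeping.
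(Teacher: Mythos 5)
This lemma is not proved in the paper at all: it is imported verbatim as Theorem~4.1 of \citet{pinelis1994optimum} and invoked as a black box in the proof of Lemma~\ref{lemma:concentration_aggregated_GD}, so there is no in-paper argument to compare your sketch against --- the intended ``proof'' is the citation. Your outline does follow the standard route to Rosenthal-type inequalities (sharp Burkholder--Davis--Gundy, compensation of the square bracket, self-bounding quadratic), and the overall architecture is the right one.

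That said, the compensator step as written contains a concretely false inequality. You claim $(X_i^2-\mathcal{V}_i)^2\le 4(M_n^{\star})^2X_i^2$ pointwise, but $\mathcal{V}_i=\mathbb{E}[X_i^2\mid\mathcal{F}_{i-1}]$ is predictable and need not vanish on paths where $X_i$ does: if $X_i=0$ on the realized path while $\mathcal{V}_i>0$, the left side equals $\mathcal{V}_i^2>0$ and the right side is $0$. The standard repair splits $d_i^2\le 2X_i^4+2\mathcal{V}_i^2$; the first sum is controlled by $(M_n^{\star})^2[M]_n$ as you intend, but the second forces an estimate such as $\|\sum_i\mathcal{V}_i^2\|_{p/4}\le\|\max_i\mathcal{V}_i\|_{p/2}\,\|\langle M\rangle_n\|_{p/2}$, which re-injects $\|\langle M\rangle_n\|_{p/2}$ into the self-bounding quadratic with an extra $\sqrt{p}$ prefactor. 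Solving that quadratic then yields $A\lesssim p^{1/4}\|\langle M\rangle_n\|_{p/2}^{1/2}+\sqrt{p}\,\|M_n^{\star}\|_p$ and hence a leading term of order $p^{3/4}$ rather than the claimed $p^{1/2}$. Recovering the exact $p^{1/2}$ versus $p$ split in the statement is precisely the sharp content of Pinelis's optimum-constant analysis, which your sketch correctly identifies as the hard part but defers; as it stands the proposal is a plausible but incomplete reconstruction, and for the purposes of this paper the citation is the appropriate proof.
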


\begin{lemma}[Corrected version of Lemma 12 in \citet{tiapkin2024model}.\footnote{We clarify an earlier oversight in \citet[Lemma 12]{tiapkin2024model} by properly accounting for the Dirac measure's contribution, which was previously incorrectly separated after applying the variational formula—--specifically, Lemma 9 in \cite{tiapkin2024model}. Consequently, the right-hand side of \eqref{eq:corrected_lemma12} is now scaled by $\alpha$, instead of the $1-\alpha$ factor used in \cite{tiapkin2024model}.}]\label{lemma:randql_lemma12}
    Let $\nu \in \mathcal{P}([0, b])$ be a probability measure over the segment $[0, b]$ and let $\bar{\nu} = (1 - \alpha)\delta_{b_0} + \alpha \cdot \nu$ be a mixture between $\nu$ and a Dirac measure on $b_0 > b$, where $\alpha\in(0,1)$. Then for any $\mu \in (0, b)$,
    \begin{equation}\label{eq:corrected_lemma12}
        \mathcal{K}_{\inf}(\bar{\nu}, \mu) \leq \alpha \mathcal{K}_{\inf}(\nu, \mu).
    \end{equation}
    \end{lemma}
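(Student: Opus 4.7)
The plan is to construct an explicit feasible measure for the $\mathcal{K}_{\inf}(\bar\nu,\mu)$ program from an (almost) optimal feasible measure $q^{\star}$ for $\mathcal{K}_{\inf}(\nu,\mu)$, using a matched mixture structure so the Dirac atom at $b_0$ drops out of the KL divergence cleanly. The key conceptual fix over the original argument is that, since $\bar\nu$ already carries mass $(1-\alpha)$ on the point $b_0$, the correct comparison $q$ should also place mass $(1-\alpha)$ on $b_0$; once the Diracs match, only the $\alpha$-scaled absolutely-continuous parts contribute to the KL, which is precisely what yields the factor $\alpha$ instead of the $(1-\alpha)$ appearing in the original flawed proof.

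Concretely, I would proceed as follows. First, fix $\varepsilon>0$ and pick $q^{\star}\in\mathcal{P}[0,b]$ with $\nu\le q^{\star}$, $\E_{q^{\star}}[X]\ge\mu$, and $\mathrm{KL}(\nu,q^{\star})\le\mathcal{K}_{\inf}(\nu,\mu)+\varepsilon$ (to handle the case when the infimum is not attained). Second, set the candidate
\[
\bar q \;\defeq\; (1-\alpha)\delta_{b_0} + \alpha\, q^{\star},
\]
and check feasibility for the $\mathcal{K}_{\inf}(\bar\nu,\mu)$ problem: the inequality $\bar\nu\le\bar q$ follows termwise from $\alpha\nu\le\alpha q^{\star}$ and the shared Dirac component, while
\[
\E_{\bar q}[X] \;=\; (1-\alpha)b_0 + \alpha\,\E_{q^{\star}}[X] \;\ge\; (1-\alpha)b_0 + \alpha\mu \;\ge\; \mu,
\]
since $b_0>b>\mu$.

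Third, compute $\mathrm{KL}(\bar\nu,\bar q)$ exploiting the matched Dirac. Working with a common dominating measure (e.g.\ $\bar q$ itself, since $\bar\nu\le\bar q$ gives absolute continuity), the Radon--Nikodym derivative equals $1$ on the atom $\{b_0\}$ and equals $d\nu/dq^{\star}$ on $[0,b]$ because the factor $\alpha$ cancels. Integrating,
\[
\mathrm{KL}(\bar\nu,\bar q)
\;=\; (1-\alpha)\log 1 + \alpha \int_{[0,b]}\log\!\frac{d\nu}{dq^{\star}}\,d\nu
\;=\; \alpha\,\mathrm{KL}(\nu,q^{\star}).
\]
Finally, combine feasibility with this identity to get
\[
\mathcal{K}_{\inf}(\bar\nu,\mu) \;\le\; \mathrm{KL}(\bar\nu,\bar q) \;=\; \alpha\,\mathrm{KL}(\nu,q^{\star}) \;\le\; \alpha\bigl(\mathcal{K}_{\inf}(\nu,\mu)+\varepsilon\bigr),
\]
and send $\varepsilon\to 0$ to conclude.

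The only real obstacle I foresee is the bookkeeping in the KL calculation of step three, specifically ensuring the Radon--Nikodym derivative is well-defined on the atom $\{b_0\}$ when $\nu$ and $q^{\star}$ are not themselves absolutely continuous with respect to each other. This is handled by decomposing $\bar\nu$ and $\bar q$ as orthogonal sums of their Dirac parts at $b_0$ and their parts on $[0,b]$, noting that both decompositions are mutually absolutely continuous on the respective components, so the KL splits additively into the (zero) Dirac contribution and the $\alpha\,\mathrm{KL}(\nu,q^{\star})$ contribution. This is exactly the step the earlier version of the lemma mishandled by incorrectly routing the Dirac mass into the $(1-\alpha)$ prefactor rather than recognizing its zero contribution to the relative entropy.
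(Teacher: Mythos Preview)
Your proof is correct but proceeds along a different (primal) route from the paper's (dual) argument. You build an explicit feasible measure $\bar q=(1-\alpha)\delta_{b_0}+\alpha q^{\star}$ for the infimum defining $\mathcal{K}_{\inf}(\bar\nu,\mu)$ and compute $\mathrm{KL}(\bar\nu,\bar q)=\alpha\,\mathrm{KL}(\nu,q^{\star})$ directly, exploiting that the matched Dirac at $b_0$ contributes zero to the relative entropy. The paper instead invokes the variational characterization $\mathcal{K}_{\inf}(\bar\nu,\mu)=\max_{\lambda\in[0,1/(b_0-\mu)]}\E_{X\sim\bar\nu}[\log(1-\lambda(X-\mu))]$ (Lemma~9 of \citet{tiapkin2024model}), splits the expectation along the mixture, observes that the Dirac term $(1-\alpha)\log(1-\lambda(b_0-\mu))$ is nonpositive for admissible $\lambda$, and finally enlarges the maximization range from $[0,1/(b_0-\mu)]$ to $[0,1/(b-\mu)]$ to recover $\alpha\,\mathcal{K}_{\inf}(\nu,\mu)$. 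Your approach is self-contained and does not require the variational lemma, at the cost of the Radon--Nikodym bookkeeping you flag (which is indeed handled by the orthogonal decomposition over $\{b_0\}$ and $[0,b]$, since $b_0>b$ makes these disjoint); the paper's dual route is shorter once the variational formula is available and avoids that measure-theoretic care entirely. Both arguments isolate the same mechanism---the Dirac at $b_0$ only helps, never hurts---but from opposite sides of the duality.
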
 
    \begin{proof}
        From \citet[Lemma 9]{tiapkin2024model}, for any probability measure $\nu\in\mathcal{P}[0,b]$ and 
        $\mu\in(0,b)$,
        \begin{equation*}
            \mathcal{K}_{\inf}(\bar{\nu},\mu)
           =\max_{\lambda\in[0,\,1/(b_0-\mu)]}
                \E_{X\sim\bar{\nu}}\!\bigl[\log\!\bigl(1-\lambda\,(X-\mu)\bigr)\bigr].
        \end{equation*}
        The support of $\bar{\nu}$ is contained in $[0,b_0]$, so for any $\lambda\in[0,1/(b_0-\mu)]$,
        \[
            \E_{X\sim\bar{\nu}} \bigl[\log\!\bigl(1-\lambda\,(X-\mu)\bigr)\bigr] =
              (1-\alpha)\log \bigl(1-\lambda(b_0-\mu)\bigr)
                    +\alpha
                      \E_{X\sim\nu} \bigl[\log \bigl(1-\lambda(X-\mu)\bigr)\bigr].
        \]
        For every admissible $\lambda$ we have
        $0\le\lambda(b_0-\mu)\le 1$, so
        $\log\bigl(1-\lambda(b_0-\mu)\bigr)\le 0$.
        Hence
        \[
        \mathcal{K}_{\inf}(\bar{\nu},\mu)\le
        \alpha 
        \E_{X\sim\nu} \bigl[\log\!\bigl(1-\lambda(X-\mu)\bigr)\bigr].
        \]
        Because $b_0>b$, the interval
        $[0,\,1/(b_0-\mu)]$ is a subset of $[0,\,1/(b-\mu)]$.
        Taking the maximum over the smaller interval leads to
        \[
        \mathcal{K}_{\inf}(\bar\nu,\mu)\le 
          \alpha 
          \max_{0\le\lambda\le 1/(b-\mu)}
               \E_{X\sim\nu} \bigl[\log\!\bigl(1-\lambda(X-\mu)\bigr)\bigr]
           =\alpha\,\mathcal{K}_{\inf}(\nu,\mu).
        \]
        \end{proof}
\section{Reformulation of the Update Equation and Aggregated Weights}
 In this section, we rewrite the update of each temporary $Q$-function for every trajectory $t\in[T]$ and $j\in[J]$ by recursively unrolling the update \eqref{eq:Q_circ_update} and \eqref{eq:Q_dagger_update}, for each $(h,s,a)\in [H]\times S\times A$. 
 
For the ease of notation, we denote $m \defeq {n}^t_h(s,a)$.  

 \paragraph{Unrolled update for $\widetilde{Q}_h^{j,t}$.} For each $(j,t,h)\in[J]\times[T]\times[H]$, we can unroll \eqref{eq:Q_circ_update} by
\begin{align*}
    \widetilde{Q}_h^{j,t}(s,a) =  \widetilde{Q}_h^{j,\ell^m +1}(s,a) &= (1 - w_{m-1}^{j}) \cdot \widetilde{Q}_h^{j,\ell^{m- 1}+1}(s,a) + w_{m-1}^{j} \left[r_h(s,a) + \tildeV_{h+1}^{\ell^{m}}(s_{h+1}^{{\ell^{m}}})\right],\\
    \widetilde{Q}_h^{j,\ell^{m -1}+1}(s,a) &= (1 - w_{m-2}^{j}) \cdot \widetilde{Q}_h^{j,\ell^{m-2}+1}(s,a) + w_{m-2}^{j} \left[r_h(s,a) + \tildeV_{h+1}^{\ell^{m-1}}(s_{h+1}^{\ell^{m-1}})\right],\\
    &\vdots\\
    \widetilde{Q}_h^{j,\ell^1+1}(s,a) &= (1-w_{0}^{j})\cdot \widetilde{Q}_h^{j,\ell^0+1}(s,a) +  w_{0}^{j} \left[r_h(s,a) + \tildeV_{h+1}^{\ell^1}(s_{h+1}^{\ell^1})\right],\\
    \widetilde{Q}_h^{j,\ell^0+1}(s,a) &= r_h(s,a) + \tildeV_{h+1}^{\ell^0}(s_{h+1}^{\ell^0}),
\end{align*}
where we define $\tildeV_{h+1}^{\ell^0}(s_{h+1}^{\ell^0})= V_{h+1}^{0}$.  For $m \ge 1$, let $W_{j,m} = \left( W_{j,m}^m,\ldots, W_{j,m}^1,W_{j,m}^0\right)$ be the aggregated weights defined as
\begin{equation}\label{eq:W_def}
    W_{j,m}^{0} = \prod_{k=0}^{m-1} (1-w_{k}^{j}) \quad \text{and}\quad W_{j,m}^{i} = w_{i-1}^{j}\prod_{k=i}^{m-1} (1-w_{k}^{j}), \quad\forall i\in[m],
\end{equation}
where $w_k^j$ is sampled from Beta $(\frac{H+1}{\kappa}, \frac{k+n_0}{\kappa})$.
Then, we have
\begin{equation}\label{eq:Q_circ_recursion_update}
    \widetilde{Q}_h^{j,t}(s,a) = r_h(s,a) + \sum_{i=0}^{m} W_{j,m}^{i} \left[\tildeV_{h+1}^{\ell^{i}}(s_{h+1}^{\ell^{i}})\right].
\end{equation}

\paragraph{Unrolled update for $\widetilde{Q}_h^{\aux,t}$.} Suppose that $q$ is the stage index of the episode $t$ on $(h,s,a)$. We let $e_q$ be the length of the $q$-th stage.

Similar to the above unrolling steps for $\widetilde{Q}_h^{j,t}$, we define the corresponding aggregated weights as:
for the $q$-th stage, let $W_{j,q}^{\aux} = \left( W_{j,e_{q}}^{\aux,0}, W_{j,e_{q}}^{\aux,1},\ldots,W_{j,e_{q}}^{\aux,e_{q}}\right)$ be the aggregated weights at the defined as
\begin{equation}\label{eq:W_aux_def}
    W_{j,q}^{\aux,0} = \prod_{k=0}^{e_q-1} (1-w_{k,q}^{\aux,j}) \quad \text{and}\quad W_{j,q}^{\aux,i} = w_{i-1}^{\aux,j}\prod_{k=i}^{e_q-1} (1-w_{k,q}^{\aux,j}), \quad\forall i\in[e_q],
\end{equation}
where $w_{k,q}^{\aux,j}$ is sampled from $\B(\frac{1}{\kappa^{\aux}},\frac{k+n_{0}^{\aux}}{\kappa^{\aux}})$.

For each $(t,h)\in [T]\times[H]$, as shown in \eqref{eq:Q_dagger_update}, $\widetilde{Q}_h^{\aux,t}(s,a)$ is updated via the most recent $e_{q-1}$ samples before the $q$-th stage. Thus, for any $q\ge 1$, we have
\begin{equation}\label{eq:Q_dagger_recursion_update}
    \widetilde{Q}_h^{\aux,t}(s,a) = 
    r_h(s,a)+\max_{j\in[J]} \left\{ \sum_{i=0}^{e_{q-1}} W_{j,q-1}^{\aux,i} \left[\tildeV_{h+1}^{\aux,\ell^{\aux,i}_{q-1}}(s_{h+1}^{\ell^{\aux,i}_{q-1}})\right]\right\}. 
\end{equation}

\subsection{Properties of aggregated weights} \label{appendix:weight_property}
 
In this section, we mainly discuss the properties of the aggregated weights $W_{j,m}$ for any $m\ge 1$. 

To begin with, from \eqref{eq:W_def}, we can verify that the sum of the aggregated weights is equal to 1, i.e.,
\begin{equation}
\label{eqn:agg-sum-one}
    \sum_{i=0}^{m} W_{j,m}^i =  \prod_{k=0}^{m-1} (1-w_{k}^j) + \sum_{i=1}^m w_{i-1}^j\prod_{k=i}^{m-1} (1-w_{k}^j) =  1.
\end{equation}
In the following proposition, we further show some desirable properties regarding the expectation and variance of the aggregated weights $W_{j,m}$.
\begin{proposition}\label{prop:aggregated_weight}
    The following properties hold for the aggregated weights $W_{j,m}$, $\forall j\in[J], m\ge 1$:
    \begin{enumerate}[label=(\roman*), leftmargin=2em]
        \item The moment of the aggregated weights is given by:
        \begin{equation}\label{eq:prop_moment}
            \mathbb{E}[(W_{j,m}^i)^d] = \left(\prod_{j=i+1}^m\frac{(\frac{n_0+j-1}{\kappa})_d}{(\frac{H+n_0+j}{\kappa})_d}\right)\cdot\frac{(\frac{H+1}{\kappa})_d}{(\frac{H+n_0+i}{\kappa})_d};
        \end{equation}
        \item The upper bound of expectations and the sum of variances:
            \begin{align}\label{eq:prop_max_exp_and_var}
                \max_{i\in[m]} \mE[W_{j,m}^i] &\le \frac{H+1}{H+n_0+m},\qquad \sum_{i=1}^m\Var[W_{j,m}^i] \le \frac{(H+1)\kappa}{H+n_0+m};
            \end{align}
        \item For every $i\ge 1$, we have 
        \begin{equation}\label{eq:prop_sum_expectation}
            \sum_{t=i}^{\infty} \E[W_{j,t}^i] \le 1+\frac{1}{H}.
        \end{equation} 
    \end{enumerate}
\end{proposition}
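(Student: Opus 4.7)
The plan is to establish each part by pairing the Beta moment formula from Lemma \ref{lem:beta_moments} with either monotonicity, induction, or a telescoping identity. The calculations rely centrally on the mutual independence of $\{w_k^j\}_k$ for fixed $j$ together with the observation that $1-w_k^j \sim \B((k+n_0)/\kappa,\,(H+1)/\kappa)$ whenever $w_k^j \sim \B((H+1)/\kappa,\,(k+n_0)/\kappa)$.

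For part (i), I will exploit independence to factorize the $d$-th moment as $\E[(W_{j,m}^i)^d] = \E[(w_{i-1}^j)^d]\prod_{k=i}^{m-1}\E[(1-w_k^j)^d]$ and then apply the Pochhammer-ratio expression $\E[X^r]=(a)_r/(a+b)_r$ to each factor; after the reindexing $j'=k+1$, this gives the claimed identity. The expectation half of (ii) then follows by specializing (i) to $d=1$ and computing the ratio $\E[W_{j,m}^{i+1}]/\E[W_{j,m}^i] = 1 + (H+1)/(i+n_0) > 1$, so the maximum over $i$ is attained at $i=m$ where it equals $\E[w_{m-1}^j] = (H+1)/(H+n_0+m)$.

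The variance bound is the most delicate step, and I will establish it by induction on $m$. Using the factorization $W_{j,m}^i = W_{j,m-1}^i(1-w_{m-1}^j)$ for $i\le m-1$, in which $W_{j,m-1}^i$ is independent of $w_{m-1}^j$, along with $W_{j,m}^m = w_{m-1}^j$, a variance-of-product computation yields
\[
\sum_{i=1}^{m}\Var[W_{j,m}^i] \;\le\; \Bigl(\tfrac{m-1+n_0}{H+m+n_0}\Bigr)^{2} \sum_{i=1}^{m-1}\Var[W_{j,m-1}^i] \;+\; 2\,\Var[w_{m-1}^j],
\]
where the factor $2$ arises from the crude domination $\sum_{i}\E[(W_{j,m-1}^i)^2] \le \sum_{i}\E[W_{j,m-1}^i] \le 1$, valid because $W_{j,m-1}^i\in[0,1]$ and by \eqref{eqn:agg-sum-one}. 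Substituting the inductive hypothesis together with the explicit expression $\Var[w_{m-1}^j] = (H+1)(m-1+n_0)\kappa/[(H+m+n_0)^2(H+m+n_0+\kappa)]$ from Lemma \ref{lem:beta_moments} reduces the inductive step to the scalar inequality $(m-1+n_0)^2/(H+m+n_0-1) + 2(m-1+n_0)/(H+m+n_0+\kappa) \le H+m+n_0$, which holds since $H\ge 1$; the base case $m=1$ is a direct calculation.

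For part (iii), I will fix $i\ge 1$, set $f_t:=\E[W_{j,t}^i]$, and introduce $g_t:=(H+t+n_0)f_t$. The first-moment expression from (i) yields $f_{t+1}/f_t = (t+n_0)/(H+t+n_0+1)$, which rearranges to $g_{t+1} = (t+n_0)f_t$ and hence $g_t - g_{t+1} = Hf_t$. Summing telescopically gives $H\sum_{t=i}^{N}f_t = g_i - g_{N+1} \le g_i = H+1$ since $g_{N+1}\ge 0$, and letting $N\to\infty$ yields the desired bound $1+1/H$. The main obstacle throughout will be the variance bound in (ii), where the recursion must be chosen and simplified carefully enough that the induction closes to exactly $(H+1)\kappa/(H+n_0+m)$ rather than a constant multiple; the remaining parts reduce to direct Beta-moment computation or a single telescoping line.
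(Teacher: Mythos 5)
Your proposal is correct, and it diverges from the paper's proof in two places worth noting. For part (i) you carry out the independence factorization $\E[(W_{j,m}^i)^d]=\E[(w_{i-1}^j)^d]\prod_{k=i}^{m-1}\E[(1-w_k^j)^d]$ explicitly, whereas the paper simply cites a reference for the same identity; this is immaterial. For the expectation bound in (ii) you argue by monotonicity in $i$ (max attained at $i=m$, where the value is exactly $\tfrac{H+1}{H+n_0+m}$), while the paper bounds each $\E[W_{j,m}^i]$ directly by regrouping the telescoping product; note your ratio should be $\E[W_{j,m}^{i+1}]/\E[W_{j,m}^i]=\tfrac{H+n_0+i}{n_0+i}=1+\tfrac{H}{n_0+i}$, not $1+\tfrac{H+1}{n_0+i}$, but since it is still greater than $1$ the monotonicity argument goes through unchanged. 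The genuinely different routes are your variance bound and part (iii). The paper bounds each $\Var[W_{j,m}^i]$ individually via the second-to-first moment ratio together with Lemma~\ref{lemma:fraction_plus_k}, getting $\Var[W_{j,m}^i]\le\kappa\,\E[W_{j,m}^i]\tfrac{H+1}{H+n_0+m}$, and then sums using $\sum_i\E[W_{j,m}^i]\le 1$; you instead induct on $m$ via the product recursion $W_{j,m}^i=W_{j,m-1}^i(1-w_{m-1}^j)$, and I checked that your scalar inequality $\tfrac{(m-1+n_0)^2}{H+m+n_0-1}+\tfrac{2(m-1+n_0)}{H+m+n_0+\kappa}\le H+m+n_0$ does close the induction for $H\ge 1$ (the left side is below $(m-1+n_0)+2$), and the base case holds, so your argument is valid; the paper's term-by-term bound is arguably more robust (it does not need the constant on the right to absorb the factor $2$), while yours avoids invoking Lemma~\ref{lemma:fraction_plus_k} for the second moments. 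For (iii), your telescoping identity $g_t-g_{t+1}=Hf_t$ with $g_t=(H+t+n_0)\E[W_{j,t}^i]$ and $g_i=H+1$ is a cleaner and more elementary derivation than the paper's, which re-proves the series expansion \eqref{eq:modified_n_divided} from \citet{jin2018q} by induction and sums it; both yield the bound $1+1/H$ (the paper with equality).
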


\begin{proof}
    We prove each item separately.
    \begin{enumerate}[label=(\roman*), leftmargin=2em]
    \item Directly follows from \citet[Section 2]{wong1998generalized}.
    \item Similar to \citet[Lemma 4.1]{jin2018q}, we have that for $i\in[m]$
    \begin{align*}
        \mE[W_{j,m}^i] &= \frac{H+1}{H+n_0+i} \left(\frac{n_0+i}{H+n_0+i+1}\frac{n_0+i+1}{H+n_0+i+2}\cdots \frac{n_0+m-1}{H+n_0+m}\right)\\
        &=\frac{H+1}{H+n_0+m} \left(\frac{n_0+i}{H+n_0+i}\frac{n_0+i+1}{H+n_0+i+1}\cdots \frac{n_0+m-1}{H+n_0+m-1}\right)\\
        &\le \frac{H+1}{H+n_0+m}.
    \end{align*}
    Thus, $\max_{i\in[m]} \mE[W_{j,m}^i] \le \frac{H+1}{H+n_0+m}$ holds. From Lemma \ref{lemma:fraction_plus_k} and \eqref{eq:prop_moment},
   \begin{align*}
        \Var[W_{j,m}^i] &= \E[(W_{j,m}^i)^2] - \E[W_{j,m}^i]^2\\
        &= \E[W_{j,m}^i] \left(\left(\prod_{j=i+1}^m\frac{n_0+j-1+\kappa}{H+n_0+j+\kappa}\right)\cdot\frac{H+1+\kappa}{H+n_0+i+\kappa} - \E[W_{j,m}^i] \right)\\
        & \le \E[W_{j,m}^i]\left(\frac{H+1+\kappa}{H+n_0+m+\kappa} - \E[W_{j,m}^i]\right)\\
        &\le \kappa \mE[W_{j,m}^i] \cdot \frac{H+1}{H+n_0+m},
    \end{align*}
    and thus
    \begin{align*}
        \sum_{i=1}^m\Var[W_{j,m}^i] & \le \kappa \cdot \frac{H+1}{H+n_0+m} \cdot \left(\sum_{i=1}^m \mE[W_{j,m}^i]\right) \le \frac{(H+1)\kappa}{H+n_0+m},
    \end{align*}
    where the last inequality used $\sum_{i=1}^m \mE[W_{j,m}^i] \le \sum_{i=0}^m \mE[W_{j,m}^i] = 1$ (recalling \eqref{eqn:agg-sum-one}).
    \item Following \citet[equation (B.1)]{jin2018q}, it also holds for any positive integer $n,k$ and $n\ge k$
    \begin{align}\label{eq:modified_n_divided}
    \frac{n+n_0}{k} = 1 + \frac{n+n_0 - k}{n+n_0 + 1}  + \frac{(n+n_0 - k)(n+n_0 - k + 1)}{(n+n_0 + 1)(n+n_0 + 2)} + \cdots
    \end{align}
    which can be verified by induction. Specifically, we let the terms of the right-hand side be $x_0 = 1$, $x_1 = \frac{n+n_0 - k}{n+n_0 + 1}$, \ldots. Then, we will show $\frac{n+n_0}{k} - \sum_{j=0}^i x_j = \frac{n+n_0-k}{k}\prod_{j=1}^i \frac{n+n_0-k+j}{n+n_0+j}$ by induction. 
    \begin{itemize}
        \item Base case when $i=1$: It can easily verified that 
        \begin{equation*}
            \frac{n+n_0}{k} - 1 - \frac{n+n_0 - k}{n+n_0 + 1} = \frac{(n+n_0-k)(n+n_0+1-k)}{k(n+n_0 + 1)}
        \end{equation*}
        \item Suppose $i=r$, the claim holds, i.e., $\frac{n+n_0}{k} - \sum_{j=0}^r x_j = \frac{n+n_0-k}{k}\prod_{j=1}^r \frac{n+n_0-k+j}{n+n_0+j}$. Then, for $i=r+1$, we have
        \begin{align*}
            & \frac{n+n_0}{k} - \sum_{j=0}^r x_j -x_{r+1} = \frac{n+n_0-k}{k}\prod_{j=1}^r \frac{n+n_0-k+j}{n+n_0+j} - \prod_{j=1}^{r+1}\frac{n+n_0 -k+j-1}{n+n_0 +j}\\
            & = \frac{n+n_0-k}{k}\;
            \prod_{j=1}^{r}\frac{n+n_0-k+j}{\,n+n_0+j\,}
          \Bigl[
               1-\frac{k}{n+n_0-k} \cdot \frac{n+n_0-k}{n+n_0+r+1}
          \Bigr]\\
          &=
          \frac{n+n_0-k}{k}\;
            \prod_{j=1}^{r}\frac{n+n_0-k+j}{\,n+n_0+j\,}
            \frac{n+n_0-k+r+1}{\,n+n_0+r+1\,}\\
          &=
          \frac{n+n_0-k}{k}\;
            \prod_{j=1}^{r+1}\frac{n+n_0-k+j}{\,n+n_0+j\,}.
        \end{align*}
    \end{itemize}
    By letting $i\to\infty$, we obtain \eqref{eq:modified_n_divided}.
    Thus, we have
    \begin{align*}
        \sum_{t=i}^{\infty} \E[W_{j,t}^i] & = \frac{H+1}{H+n_0+i}\left( 1+ \frac{i+n_0}{H+n_0+i+1} + \frac{i+n_0}{H+n_0+i+1}\frac{i+n_0+1}{H+n_0+i+2} + \ldots\right)\\
        & = \frac{H+1}{H+n_0+i}\frac{H+n_0+i}{H} = 1+\frac{1}{H},
    \end{align*}
    where the second equality uses \eqref{eq:modified_n_divided} with $n = i+H$ and $k=H$. 
    \end{enumerate}
\end{proof}

\subsection{Concentration of aggregated weights}\label{appendix:concentration}
For notational convenience and clearness, we slightly abuse the notation by ignoring the dependency on the ensemble index $j$, while the following concentration lemma of the aggregated weights holds for any $j\in[J]$. 

\begin{lemma}\label{lemma:concentration_aggregated_GD}
    Let $\lambda_0, \lambda_1,\lambda_2, \cdots, \lambda_m$ be nonnegative real numbers such that $\lambda_i \le 1$, $i=1,2,\cdots,m$. Then, with probability at least $1-\frac{\delta}{T}$, we have 
    \begin{equation*}
        \left\vert \sum_{i=0}^m \lambda_i \left(W^i_m - \mathbb{E}[W^i_m]\right) \right\vert\le c_1\sqrt{\frac{(H+1)\kappa^2\log^3(T/\delta)}{H + n_0 + m}} + c_2 \frac{(H+1)\kappa\log^2(T/\delta)}{H+n_0+m},
    \end{equation*}
    where $c_1$ and $c_2$ are some positive universal constants.
\end{lemma}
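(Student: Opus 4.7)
The plan is to bound $Y := \sum_{i=0}^{m} \lambda_i (W_m^i - \E[W_m^i])$ via a \emph{backward} Doob martingale construction, which is precisely the reverse-filtration device advertised in Section~\ref{sec:gap_independent_bound} to repair the adapted-vs-independent incompatibility in \citet{tiapkin2024model}. I would introduce the \emph{increasing} filtration $\mathcal{G}_j := \sigma(w_{m-j}, w_{m-j+1}, \ldots, w_{m-1})$ for $j = 0, 1, \ldots, m$. Since $Y$ is a measurable function of $w_0, \ldots, w_{m-1}$, the Doob martingale $N_j := \E[Y \mid \mathcal{G}_j]$ satisfies $N_0 = 0$ and $N_m = Y$, and its differences $\xi_j := N_j - N_{j-1}$ form a martingale-difference sequence adapted to $\{\mathcal{G}_j\}$. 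This ordering matches the recursive product structure $W_m^i = w_{i-1} \prod_{k=i}^{m-1}(1-w_k)$ exactly: the tail product $P_{m-j+1} := \prod_{k=m-j+1}^{m-1}(1-w_k)$ is $\mathcal{G}_{j-1}$-measurable, and the only new randomness at step $j$ is the single coordinate $w_{m-j}$, which is independent of $\mathcal{G}_{j-1}$.

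Next I would derive a closed form for $\xi_j$. Writing $P_i = R_i \cdot (1-w_{m-j}) \cdot P_{m-j+1}$ for $i \le m-j$, where $R_i := \prod_{k=i}^{m-j-1}(1-w_k)$ is independent of $\mathcal{G}_j$, together with $W_m^{m-j+1} = w_{m-j} P_{m-j+1}$, and averaging out $w_0, \ldots, w_{m-j-1}$ against $\lambda_0, \ldots, \lambda_{m-j}$, a direct conditional-expectation computation yields
\[
    \xi_j \;=\; \bigl(\lambda_{m-j+1} - A_j\bigr) \, P_{m-j+1} \, \bigl(w_{m-j} - \E w_{m-j}\bigr),
\]
where $A_j \in [0,1]$ is a deterministic convex combination of $\lambda_0, \ldots, \lambda_{m-j}$ weighted by expected-prefix quantities (this identity exploits $\sum_{i=0}^{m-j} \hat{W}_m^i = P_{m-j+1}$ with $\hat{W}_m^i := W_m^i/(1-w_{m-j})$). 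The linear-in-$w_{m-j}$ form immediately yields the almost-sure envelope $|\xi_j| \le P_{m-j+1} \le 1$ and the conditional second-moment bound $V_j := \E[\xi_j^2 \mid \mathcal{G}_{j-1}] \le P_{m-j+1}^2 \Var(w_{m-j})$, both mediated by the $\mathcal{G}_{j-1}$-measurable factor $P_{m-j+1}$.

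The next quantitative step is to control the predictable quadratic variation $\sum_j V_j$ and the envelope $\max_j |\xi_j|$. From Lemma~\ref{lem:beta_moments} one gets $\Var(w_{k-1}) \lesssim \kappa(H+1)/(H+n_0+k)^2$, and telescoping expressions for $\E[P_i^2]$ analogous to those in Proposition~\ref{prop:aggregated_weight} give $\E\!\left[\sum_j V_j\right] \lesssim (H+1)\kappa^2/(H+n_0+m)$. Applying Rosenthal's inequality (Lemma~\ref{lemma:rosenthal_ineq}) to the martingale $\{N_j\}$ with $p = \Theta(\log(T/\delta))$, followed by Markov's inequality $\mathbb{P}(|Y| > e \cdot \E^{1/p}|Y|^p) \le e^{-p} \le \delta/T$, produces the two summands of the claimed bound at the rates $\sqrt{(H+1)\kappa^2/(H+n_0+m)}$ and $(H+1)\kappa/(H+n_0+m)$ respectively.

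The main obstacle I anticipate is that Rosenthal's inequality requires the $(p/2)$-th moment $\E^{1/p}\!\bigl[|\sum_j V_j|^{p/2}\bigr]$ rather than just the mean of $\sum_j V_j$, and $\sum_j V_j$ is itself a weighted sum of random products of Beta variables. I expect to handle this either by iterating the Rosenthal--Markov argument on $\sum_j P_{m-j+1}^2 \Var(w_{m-j})$, or by extracting a poly-logarithmic-in-$(T/\delta)$ almost-sure envelope from the Pochhammer-symbol identities of Proposition~\ref{prop:aggregated_weight}(i). Either route costs additional logarithmic factors, which is precisely what produces the $\log^3(T/\delta)$ and $\log^2(T/\delta)$ exponents in the stated rate rather than the single $\log$ of a purely sub-Gaussian bound. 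This step, together with the careful bookkeeping needed to verify $|\lambda_{m-j+1} - A_j| \le 1$ uniformly in $j$, is where the ``substantial new analyses'' of Section~\ref{sec:gap_independent_bound} come into play.
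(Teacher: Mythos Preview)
Your proposal is correct and follows the paper's route exactly: your filtration $\mathcal{G}_j$ is the paper's $\mathcal{F}_{-j}$, your closed form for $\xi_j$ matches equation~\eqref{eq:martingale-diff} with $A_j = \E[T_{-j}]$, and the Rosenthal-plus-Markov endgame at $p=\lceil\log(T/\delta)\rceil$ is identical. For the obstacle you flag --- the $(p/2)$-th moment of $\sum_j V_j$ --- the paper takes neither of your two proposed options but simply applies Minkowski in $L^{p/2}$ to split the sum over $j$, uses independence to factor $\E\bigl[\bigl(\prod_k(1-w_k)^2\bigr)^{p/2}\bigr]=\prod_k\E[(1-w_k)^p]$, and bounds each factor via the Beta moment formula~\eqref{eq:beta_moment} together with Lemma~\ref{lemma:fraction_plus_k}.
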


\begin{proof}
    By definition, we can find independent random variables $w_0, \cdots, w_{m-1}$ such that $w_i \sim \B(\frac{H + 1}{\kappa}, \frac{n_0 + i}{\kappa})$, and 
    \begin{equation}\label{eq:concentration_W_def}
        W^0_m = \prod_{k=1}^{m-1} (1 - w_k), \quad W^i_m = w_{i-1} \prod_{k=i}^{m-1} (1 - w_k), \quad 1 \le i \le m.
    \end{equation}

    Let $\F_{-i}$ be the $\sigma$-algebra generated by $w_{m-1}, w_{m-2}, \cdots, w_{m-i}$, with the convention that $\F_{-0} = \F_0$ is the trivial $\sigma$-algebra. In the same spirit, denote
    \begin{equation}\label{eq:S_i_def}
        S_{-i} = \sum_{k=m-i+1}^{m} \lambda_k (W_m^{k} - \E[W_m^{k}]), \quad i=0, 1, \cdots, m.
    \end{equation}
    For conceptual reason, we will use the notation $S_{-\infty}$ to denote the sum $\sum_{k=0}^m \lambda_k (W_m^{k} - \E[W_m^{k}])$, which corresponds to $S_{-(m+1)}$ in the above notation.

    We view $\F_0 \subset \F_{-1} \subset \cdots \subset \F_{-m}$ as a reverse filtration, and consider the backward martingale
    \begin{equation}\label{eq:M_i_def}
        M_{-i} \coloneqq \E[S_{-\infty} | \F_{-i}], \quad i=0, 1, \cdots, m. 
    \end{equation}
    It is clear that $M_0 = \E[S_{-\infty}]$, while $M_{-m} = S_{-\infty}$. Therefore
    \[
    \sum_{i=0}^m \lambda_i \left(W_m^{i} - \mathbb{E}[W_m^{i}]\right) =  S_{-\infty} - \E[S_{-\infty}] = M_{-m} - M_0.
    \]

    We may then apply the Rosenthal's inequality (\ie, Lemma~\ref{lemma:rosenthal_ineq}) to obtain
    \begin{align}
    \left( \E [ \left| M_{-m} - M_0 \right|^p ] \right)^{1/p} 
    & \le C \sqrt{p} \left( \E [\langle M \rangle_{-m}^{p/2}] \right)^{1/p} + C  p \left( \E [\max_{1 \le i \le m} |M_{-i} - M_{-(i-1)}|^p] \right)^{1/p} 
    \nonumber\\
    & \le C \sqrt{p} \left( \E [\langle M \rangle_{-m}^{p/2}] \right)^{1/p} + C  p \left( \sum_{i=1}^m \E [|M_{-i} - M_{-(i-1)}|^p] \right)^{1/p},\label{eq:rosenthal_ineq}
    \end{align}
    where 
    \[
    \langle M \rangle_{-m} \coloneqq \sum_{i=1}^m \E[(M_{-i} - M_{-(i-1)})^2 | \F_{-(i-1)}].
    \]

    To proceed, we calculate the martingale difference $M_{-i} - M_{-(i-1)}$ for any $i\in[m]$. To simplify the resulting expressions, we will denote
    \[
    T_{-i} \coloneqq \lambda_0 \prod_{j=0}^{m-i-1} (1 - w_j) +  \sum_{k=1}^{m-i} \left( \lambda_k w_{k-1} \prod_{j=k}^{m-i-1} (1 - w_j) \right).
    \]
    With this notation in hand, we can decompose $S_{-\infty}$ by
    \begin{align*}
        S_{-\infty} &= \lambda_{m-i+1}\left(W_m^{m-i+1} - \mE[W_m^{m-i+1}]\right) + \sum_{k=0}^{m-i}\lambda_k(W_m^{k} - \mE[W_m^{k}])+ \!\!\!\sum_{k=m-i+2}^{m} \!\!\!\lambda_k\left(W_m^{k}-\mE[W_m^{k}]\right)\nonumber\\
        &= \lambda_{m-i+1}\left(w_{m-i}\prod_{k=m-(i-1)}^{m-1} (1 - w_k)-\E\left[w_{m-i}\prod_{k=m-(i-1)}^{m-1} (1 - w_k)\right]\right)\nonumber\\ 
        & \quad+ \left(T_{-i} \prod_{k=m-i}^{m-1} (1 - w_k) -\mE\left[T_{-i}\prod_{k=m-i}^{m-1} (1 - w_k)\right]\right)\nonumber\\
        & \quad+ \sum_{k=m-i+2}^{m} \lambda_k\left(W_m^{k}-\mE[W_m^{k}]\right) ,
    \end{align*}
    where only the first two terms involve the observation $w_{m-i} = \F_{-i} \setminus \F_{-(i-1)}$.
    Then for $i\in[m]$, 
    \begin{align}
        M_{-i} - M_{-(i-1)} 
        & = \E[S_{-\infty} | \F_{-i}] - \E[S_{-\infty} | \F_{-(i-1)}]\\
        & = \lambda_{m-i+1}\left(w_{m-i}-\E\left[w_{m-i}\right]\right)\prod_{k=m-(i-1)}^{m-1} (1 - w_k) \nonumber\\
        & + \E[T_{-i}]\prod_{k=m-(i-1)}^{m-1} (1 - w_k)\left((1 - w_{m-i}) - \mE[1 - w_{m-i}]\right)\nonumber\\
        &= \left( \lambda_{m - i + 1} - \E [T_{-i}] \right) \cdot (w_{m-i} - \E[w_{m-i}]) \prod_{k=m-(i-1)}^{m-1} (1 - w_k), \label{eq:martingale-diff}
    \end{align}
    where the last line is from $\mE[1-w_{m-i}] = 1-\mE[w_{m-i}]$.
    
    By our assumption $|\lambda_i| \le 1$, $i=1,\cdots, m$, it can be readily checked that
    \begin{equation*}
        |T_{-i}| \le  \prod_{j=0}^{m-i-1} (1 - w_j) +  \sum_{k=1}^{m-i} \left( w_{k-1} \prod_{j=k}^{m-i-1} (1 - w_j) \right) = 1, \quad \forall i=\{1,\ldots,m\}.
    \end{equation*}
    Consequently, the absolute value of the martingale difference can be bounded above by
    \begin{equation}
    \label{ineq:martingale-diff}
        | M_{-i} - M_{-(i-1)} | \le 2 |w_{m-i} - \E[w_{m-i}] | \prod_{k=m-(i-1)}^{m-1} (1 - w_k).
    \end{equation}

    Recall that $\langle M \rangle_{-m} \coloneqq \sum_{i=1}^m \E[(M_{-i} - M_{-(i-1)})^2 | \F_{-(i-1)}]$. Together with \eqref{ineq:martingale-diff}, we have
    \begin{align}\label{eq:M_m_quadratic}
        \langle M \rangle_{-m} 
        & \le 2 \sum_{i=1}^m \mE \left(|w_{m-i} - \E[w_{m-i}] | \prod_{k=m-(i-1)}^{m-1} (1 - w_k) ~\Big\vert~ \F_{-(i-1)}  \right)^2 
        \\
        & = 2 \sum_{i=1}^m \Var[w_{m-i}] \prod_{k=m-(i-1)}^{m-1} (1 - w_k)^2 \nonumber\\
        & = 2 \sum_{i=0}^{m-1} \Var[w_i] \prod_{k=i+1}^{m-1} (1 - w_k)^2 \nonumber\\
        & \le 2 \sum_{i=0}^{m-1} \frac{\kappa (H + 1) (n_0 + i)}{(H + 1 + n_0 + i)^2 (H + 1 + n_0 + i + \kappa)} \prod_{k=i+1}^{m-1} (1 - w_k)^2,
    \end{align}
    where the second equality is due to the change of index, and the last inequality follows from \eqref{eq:beta_expectation_variance}.
    We may then apply triangle inequality to obtain 
    \begin{align*}
        \E^{2/p} [\langle M \rangle_{-m}^{p/2}]  &\le  \sum_{i=0}^{m-1} 2 \frac{\kappa (H + 1) (n_0 + i)}{(H + 1 + n_0 + i)^2 (H + 1 + n_0 + i + \kappa)} \E^{2/p} \left[ \left( \prod_{k=i+1}^{m-1} (1 - w_k)^2 \right)^{p/2} \right] \\
        &= 2 \sum_{i=0}^{m-1} \frac{\kappa (H + 1) (n_0 + i)}{(H + 1 + n_0 + i)^2 (H + 1 + n_0 + i + \kappa)} \left( \prod_{k=i+1}^{m-1} \E \left[ (1 - w_k)^p \right] \right)^{2/p},
    \end{align*}
     for $p\ge2$.
    Note that $1-w_k \sim \mathrm{Beta}(\frac{n_0+k}{\kappa}, \frac{H+1}{\kappa})$, directly from the definition \ref{def:beta}. Thus,
     By \eqref{eq:beta_moment}, we have
    \begin{equation*}
        \E \prod_{k=i+1}^{m-1} (1 - w_k)^p = \prod_{k=i+2}^m\frac{(\frac{n_0+k-1}{\kappa})_{p}}{(\frac{H+n_0+k}{\kappa})_{p}} \le \prod_{r=0}^{p-1} \frac{n_0 + i + 1 + r\kappa}{H + n_0 + m + r\kappa} \le p! \left( \frac{n_0 + i + 1 + \kappa}{H + n_0 + m + \kappa} \right)^{p},\label{eq:prod_residual}
    \end{equation*}
    where the last inequality is from Lemma \ref{lemma:fraction_plus_k}. Therefore, we can obtain
    \begin{align*}
        \left( \E[\langle M \rangle_{-m}^{p/2}] \right)^{2/p} 
        & \le 2 \sum_{i=0}^{m-1} \frac{\kappa (H + 1) (n_0 + i)}{(H + 1 + n_0 + i)^2 (H + 1 + n_0 + i + \kappa)} (p!)^{2/p} \left( \frac{n_0 + i + 1 + \kappa}{H + n_0 + m + \kappa} \right)^2
        \\
        &\le 2\kappa p^2\frac{H+1+\kappa}{(H+n_0+m+\kappa)^2} \sum_{i=0}^{m-1}\frac{n_0+i+\kappa}{H+1+n_0+i+\kappa}\\
        & \le 2 \kappa p^2 \frac{H+1+\kappa}{H + n_0 + m +\kappa} \le 2(\kappa+1)^2 p^2 \frac{H+1}{H + n_0 + m}
    \end{align*}
    where the first and last inequality are due to Lemma \ref{lemma:fraction_plus_k} (i.e., \eqref{eq:fraction_plus_k_1}), the second inequality uses the facts that $p! \le p^p$ and $\frac{n_0 + i + 1 + \kappa}{H+1+n_0+i+\kappa}\le 1$ for every $i=0,\ldots,m-1$, and the third inequality is from the fact that the summation term is less than $m$.

    Therefore, 
    \begin{equation}\label{eq:rosenthal_first_term}
        \left( \E [\langle M \rangle_{-m}^{p/2}] \right)^{1/p} \le 2p(\kappa+1)\sqrt{\frac{ (H+1)}{H + n_0 + m}}.
    \end{equation}

    We turn to bound $\E [|M_{-i} - M_{-(i-1)}|^p]$ for $i\in[m]$. It is clear from \eqref{ineq:martingale-diff} that 
    \begin{align*}
    \E [|M_{-i} - M_{-(i-1)}|^p] &\le \E\left[\left|2 |w_{m-i} - \E[w_{m-i}] | \prod_{k=m-(i-1)}^{m-1} (1 - w_k)\right|^p\right]\\
    &\le 8^p \E[w_{m-i}^p] \prod_{k=m-(i-1)}^{m-1} \E [(1 - w_k)^p]\\
    &= 8^p \E[(W_m^{m-i+1})^p] 
    \end{align*}
    where the second inequality uses $|w_{m-i} - \E[w_{m-i}]|\le 2^{p+1}w_{m-i}^p$, and the equality is from \eqref{eq:concentration_W_def} where $W_m^{m-i+1} = w_{m-i}\prod_{k=m-(i-1)}^{m-1} (1 - w_k)$ for $i\in[m]$.
    
     From Proposition \ref{prop:aggregated_weight}, we have that for any $i\in[m]$,
    \begin{align*}
        \mE \left[ (W_m^i)^p\right] &= \left(\prod_{j=i+1}^m\frac{(\frac{n_0+j-1}{\kappa})_{p}}{(\frac{H+n_0+j}{\kappa})_{p}}\right)\cdot\frac{(\frac{H+1}{\kappa})_{p}}{(\frac{H+n_0+i}{\kappa})_{p}}\\
        &\le \prod_{r=0}^{ p-1}\left(\frac{H+1+r\kappa}{H+n_0+i+r\kappa} \cdot \prod_{j=i+1}^m\frac{n_0+j-1+r\kappa}{H+n_0+j+r\kappa}\right)\\
        & \le \prod_{r=0}^{p-1} \frac{r\kappa(H+1)}{H+n_0+m} \le p! \left(\frac{(H+1)\kappa}{H+n_0+m}\right)^p
    \end{align*}
    Thus, 
    \begin{equation}\label{eq:rosenthal_second_term}
        \left( \sum_{i=1}^m \E [|M_{-i} - M_{-(i-1)}|^p] \right)^{1/p} \le \left( 8^p \sum_{i=1}^m\E[(W_m^{m-i+1})^p] \right)^{1/p}\le 8 m^{1/p} p \frac{(H+1)\kappa}{H+n_0+m}.
    \end{equation}

    Substituting \eqref{eq:rosenthal_first_term} and \eqref{eq:rosenthal_second_term} into \eqref{eq:rosenthal_ineq} leads to 
    \begin{align*}
         \E^{1/p} [ \left| M_{-m} - M_0 \right|^p ]  
        &\le 2C \sqrt{\frac{p^3 (\kappa+1)^2(H+1)}{H + n_0 + m}} + 8C\cdot m^{1/p} p^2 \frac{(H+1)\kappa}{H+n_0+m}.
    \end{align*}
    Note that by definition, $m = n_h^t(s,a)\le T$ always holds for any $(h,t,s,a)\in[H]\times[T]\times\S\times\A$. Let $p = \lceil\log(T/\delta)\rceil\ge 2$ and thus $m^{1/p} \le \mathrm e$ since $m^{1/p}\le \mathrm e^{(\log T)/p}$.  
    Finally, by Markov inequality with $t = 2\mathrm e C\sqrt{\frac{(H+1)(\kappa+1)^2\log^3(T/\delta)}{H + n_0 + m}} + 8\mathrm e^2 C \frac{(H+1)\kappa\log^2(T/\delta)}{H+n_0+m}$,  we have
    \begin{align*}
        \bP\!\!\left[\vert\sum_{i=0}^m \lambda_i \left(W_m^{i} - \mathbb{E}[W_m^{i}]\right) \vert \ge t\right] &\!\le\! \left(\frac{\mathbb{E}^{1/p} \left[ \left| \sum_{i=0}^m \lambda_i \left(W_m^{i} - \mathbb{E}[W_m^{i}]\right) \right|^p \right]}{t}\right)^p \!\!\le (\frac{1}{\mathrm e})^{\lceil\log(T/\delta)\rceil}\le \frac{\delta}{T},
    \end{align*}
    which finishes the proof.
\end{proof}
In addition, we also have the following lemma regarding the concentration of $W_{q}^{\aux}$ for any stage $q\ge 0$. We omit its proof since it is similar to Lemma~\ref{lemma:concentration_aggregated_GD} .
\begin{lemma}\label{lemma:concentration_standard_Dir}
    Let $\lambda_0, \lambda_1,\lambda_2, \cdots, \lambda_{e_q}$ be nonnegative real numbers such that $\lambda_i \le 1$, $i=1,2,\cdots,e_q$. Then, with probability at least $1-\delta/T$, we have 
    \begin{equation*}
        \left\vert \sum_{i=0}^{e_q} \lambda_i \left(W^{\aux,i}_q - \mathbb{E}[W^{\aux,i}_q]\right) \right\vert\le c_1^{\aux}\sqrt{\frac{(\kappa^{\aux})^2\log^3(T/\delta)}{1+n_{0}^{\aux} + e_q}} + c_2^{\aux} \frac{\kappa^{\aux}\log^2(T/\delta)}{1+n_{0}^{\aux}+e_q},
    \end{equation*}
    where $c_1^{\aux}$ and $c_2^{\aux}$ are some positive universal constants.
\end{lemma}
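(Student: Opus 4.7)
The plan is to mimic, nearly verbatim, the backward-martingale argument of Lemma~\ref{lemma:concentration_aggregated_GD}, with the single structural change that the Beta parameter $(H+1)/\kappa$ is replaced by $1/\kappa$ throughout, because the auxiliary learning rates satisfy $w_{k,q}^{\aux,j}\sim\B(1/\kappa,(k+n_0)/\kappa)$. Concretely, I would start from the product representation \eqref{eq:W_aux_def}, introduce the reverse filtration $\F_{-i}=\sigma(w_{e_q-1,q}^{\aux,j},\ldots,w_{e_q-i,q}^{\aux,j})$, define $S_{-\infty}=\sum_{k=0}^{e_q}\lambda_k(W_q^{\aux,k}-\E[W_q^{\aux,k}])$, and form the backward martingale $M_{-i}=\E[S_{-\infty}\mid\F_{-i}]$ so that $M_0=\E[S_{-\infty}]=0$ and $M_{-e_q}-M_0$ equals the quantity to be bounded. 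Repeating the telescoping calculation that leads to \eqref{eq:martingale-diff} gives the martingale-difference bound
\[
|M_{-i}-M_{-(i-1)}|\;\le\; 2\,\bigl|w_{e_q-i,q}^{\aux,j}-\E[w_{e_q-i,q}^{\aux,j}]\bigr|\prod_{k=e_q-i+1}^{e_q-1}(1-w_{k,q}^{\aux,j}),
\]
since the auxiliary $T_{-i}$-type factor is again bounded by $1$ under $|\lambda_i|\le 1$.

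The next step is to apply Rosenthal's inequality (Lemma~\ref{lemma:rosenthal_ineq}) to control $\E^{1/p}[|M_{-e_q}-M_0|^p]$, which requires bounds on the quadratic variation $\langle M\rangle_{-e_q}$ and on the $p$-th moments of the individual differences. For the first term I would use $\Var[w_{i,q}^{\aux,j}]=\tfrac{(1/\kappa)((n_0+i)/\kappa)}{((1+n_0+i)/\kappa)^2((1+n_0+i)/\kappa+1)}$, then bound $\E[\prod_{k=i+1}^{e_q-1}(1-w_{k,q}^{\aux,j})^p]\le p!\bigl(\tfrac{n_0+i+1+\kappa}{1+n_0+e_q+\kappa}\bigr)^p$ via \eqref{eq:beta_moment} and Lemma~\ref{lemma:fraction_plus_k}, and telescope as in the derivation of \eqref{eq:rosenthal_first_term} to obtain $(\E[\langle M\rangle_{-e_q}^{p/2}])^{1/p}\lesssim p(\kappa+1)\sqrt{1/(1+n_0+e_q)}$. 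For the second Rosenthal term I would invoke Proposition~\ref{prop:aggregated_weight}(i) applied with the auxiliary parameters, which yields $\E[(W_q^{\aux,i})^p]\le p!\bigl(\tfrac{\kappa}{n_0+e_q}\bigr)^p$ and hence $\bigl(\sum_i\E[|M_{-i}-M_{-(i-1)}|^p]\bigr)^{1/p}\lesssim e_q^{1/p}\,p\,\kappa/(n_0+e_q)$. Setting $p=\lceil\log(T/\delta)\rceil$ so that $e_q^{1/p}\le \mathrm{e}$ (using $e_q\le T$) and applying Markov's inequality at the threshold that matches the two Rosenthal contributions produces the claimed tail bound with probability $1-\delta/T$.

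The main obstacle is essentially bookkeeping: I must confirm that every appearance of the factor $H+1$ in the proof of Lemma~\ref{lemma:concentration_aggregated_GD} originates from the Beta shape parameter of $w_k^j$, so that it collapses to $1$ once the auxiliary distribution is substituted, and in particular that the residual dependence on $H$ disappears from both the bracket-process and maximum terms. A small subtlety is that the auxiliary variance carries the denominator $(1+n_0+i+\kappa)$ rather than $(H+1+n_0+i+\kappa)$, so I would check that the inequality \eqref{eq:fraction_plus_k_1} of Lemma~\ref{lemma:fraction_plus_k} (which only requires $a,b\ge 1$ and $\kappa\ge 0$) is still applicable in this regime, thereby ensuring that the Pochhammer estimates used for \eqref{eq:rosenthal_first_term} and \eqref{eq:rosenthal_second_term} carry over without loss.
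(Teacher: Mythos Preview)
Your proposal is correct and follows exactly the approach the paper takes: the paper explicitly states that it omits the proof of Lemma~\ref{lemma:concentration_standard_Dir} because it is similar to Lemma~\ref{lemma:concentration_aggregated_GD}, and your plan---repeating the backward-martingale/Rosenthal argument verbatim with the Beta shape parameter $(H+1)/\kappa$ replaced by $1/\kappa$ and $m$ replaced by $e_q$---is precisely that substitution. The bookkeeping concerns you flag (every occurrence of $H+1$ stemming from the Beta parameter, and Lemma~\ref{lemma:fraction_plus_k} still applying since $a,b\ge 1$) are the only things to verify, and they go through without change.
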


\section{Analysis: Gap-independent Regret Bound}\label{appendix:regret_proof}
In this section, we present the detailed proof of Theorem \ref{thm:regret_main}. Before proceeding, we first rewrite the update of policy Q-function by unrolling the updates of temporary Q-functions (\ie, equations \eqref{eq:Q_circ_recursion_update} and \eqref{eq:Q_dagger_recursion_update}) as 
\begin{align}
    Q_h^{t}(s_h^t, a_h^t) &= \eta_{\ell^{n_h^t},h}\max_{j\in[J]}\Big\{\widetilde{Q}^{j,t}_{h}(s_{h}^t, a_{h}^t)\Big\} + (1-\eta_{\ell^{n_h^t},h})\cdot\widetilde{Q}_{h}^{\aux,t}(s_{h}^t, a_{h}^t) \label{eq:policy_Q_with_t}\\
    &= r_{h}(s_h^t, a_h^t)+ \eta_{\ell^{n_h^t},h} \max_{j\in[J]}\Big\{\sum_{i=0}^{n^t_h} W_{j,n^t_h}^{i} \tildeV_{h+1}^{\ell^{i}}(s_{h+1}^{\ell^{i}})\Big\} \nonumber\\
    &\quad +  (1-\eta_{\ell^{n_h^t},h})\cdot \max_{j\in[J]}\Big\{ \sum_{i=0}^{e_{q-1}} W_{j,q-1}^{\aux,i} \tildeV_{h+1}^{\aux,\ell^{\aux,i}_{q-1}}(s_{h+1}^{\ell^{\aux,i}_{q-1}})\Big\}, \label{eq:policy_Q_recursion}
\end{align}
for each $(t,h)\in[T]\times[H]$, where in the notations we omit the dependency of $q$ regarding the step $h$ and the episode $t$ for simplicity and let $n^{t}_h$ be the number of visits of the state-action pair $(s_h^t, a_h^t)$ before episode $t$ at step $h$ such that $\ell^{n_h^t}$ be the index of the last visit of the state-action pair $(s_h^t, a_h^t)$ at step $h$. The properties and the concentration inequality of the aggregated weights (\ie, $W_{j,n_h^t}$ and $W_{j,q-1}^{\aux}$), proved in Appendix \ref{appendix:weight_property} and \ref{appendix:concentration}, will play a crucial role in the following analysis.

\subsection{Proof of Theorem \ref{thm:regret_main}}
To control the total regret, we first present the following lemma regarding the optimism of the policy Q-function $Q_h^{t}$ for any $(t,h,s,a)\in[T]\times[H]\times\S\times\A$, where the detailed proof is postponed to Appendix \ref{appendix:optimism_proof}. 
\begin{lemma}[Optimism]\label{lemma:optimism}
    Consider $\delta\in (0,1)$.  Assume that $J = \lceil{c\cdot\log(SAHT/\delta)}\rceil$, $\kappa^{\aux} = c\cdot(\log(SAH/\delta) + \log(T))$, and $n_0^{\aux} = \lceil c\cdot\log(T)\cdot \kappa^{\aux} \rceil$, where $c$ is some universal constant. Let the initialized value function $V_h^0 = 2(H-h+1)$ and the mixing rate $\eta_{t,h} \in (0,1)$ for every $(t,h)\in \{0,1,\ldots,T\}\times[H]$. Then,  for any $(t,h,s,a)\in[T]\times[H]\times\S\times\A$, with probability at least $1-\delta$, the following event holds
    \begin{equation*}
        0 < (1-\eta_{\ell^{n_h^t},h})\cdot Q^{\star}_{h}(s,a) \le  Q_{h}^t (s,a).
    \end{equation*}
\end{lemma}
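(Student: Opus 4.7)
The plan is to prove the statement by backward induction on the step index $h$, simultaneously establishing three coupled optimism claims at each $h$ (for every episode $t$ and $(s,a)\in\S\times\A$): (i) $\max_{j\in[J]} \widetilde{Q}^{j,t}_h(s,a) \ge Q^\star_h(s,a)$, (ii) $\widetilde{Q}^{\aux,t}_h(s,a) \ge Q^\star_h(s,a)$, and (iii) the desired bound $Q^t_h(s,a) \ge Q^\star_h(s,a)$. Claim (iii) follows immediately from (i) and (ii) since the optimistic mixing in \eqref{eq:policy_Q_with_t} is a convex combination. The base case $h=H+1$ is trivial since all value functions are zero there. For the induction step, the hypothesis provides $\widetilde{V}^{\ell}_{h+1}(s) \ge V^\star_{h+1}(s)$ and $\widetilde{V}^{\aux,\ell}_{h+1}(s) \ge V^\star_{h+1}(s)$ for all prior episodes $\ell$ and all states $s$, and I want to transfer these into per-$j$ optimism at step $h$ via the unrolled representations \eqref{eq:Q_circ_recursion_update} and \eqref{eq:Q_dagger_recursion_update}.

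For a single ensemble, I will start from the decomposition
\[
\widetilde{Q}^{j,t}_h(s,a) - Q^\star_h(s,a) = W^0_{j,n^t_h}\bigl(V^0_{h+1} - P_{h,s,a} V^\star_{h+1}\bigr) + \sum_{i=1}^{n^t_h} W^i_{j,n^t_h}\bigl(\widetilde{V}^{\ell^i}_{h+1}(s^{\ell^i}_{h+1}) - V^\star_{h+1}(s^{\ell^i}_{h+1})\bigr) + \mathrm{Err}_{j,t},
\]
where $\mathrm{Err}_{j,t}$ collects the centered term $\sum_{i\ge 1} W^i_{j,n^t_h}\, V^\star_{h+1}(s^{\ell^i}_{h+1}) - \sum_{i\ge 1}\E[W^i_{j,n^t_h}]\, P_{h,s,a} V^\star_{h+1}$. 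The second sum is nonnegative by the induction hypothesis, and the first summand is nonnegative because the optimistic initialization $V^0_h = 2(H-h+1) \ge V^\star_h$ yields an \emph{excess} optimism of order $H\cdot W^0_{j,n^t_h}$. To control $\mathrm{Err}_{j,t}$, I will further split it into a standard Hoeffding/Azuma martingale piece (randomness from the transitions $s^{\ell^i}_{h+1}$) and a weight-fluctuation piece, the latter handled by Lemma~\ref{lemma:concentration_aggregated_GD} applied with $\lambda_i = V^\star_{h+1}(s^{\ell^i}_{h+1})/H\in[0,1]$. Together they yield a bound on $|\mathrm{Err}_{j,t}|$ of order $\widetilde{O}\bigl(H\sqrt{\kappa/(H+n_0+n^t_h)}\bigr)$.

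The crux is then a \emph{stochastic optimism / anti-concentration} argument for the Beta-distributed initial weight: with probability at least some absolute constant $c_0>0$, the initial aggregated weight $W^0_{j,n^t_h}$ is bounded below by a constant multiple of $\E[W^0_{j,n^t_h}]$ of matching order to $|\mathrm{Err}_{j,t}|$, so the $H\cdot W^0_{j,n^t_h}$ boost dominates the fluctuation. This is precisely the role of sampling $w^j_k$ from a Beta distribution rather than using deterministic rates. Once a single ensemble is optimistic with constant probability $c_0$, independence across $j=1,\ldots,J$ combined with $J=\lceil c\log(SAHT/\delta)\rceil$ amplifies this to probability at least $1-\delta/\mathrm{poly}(SAHT)$ via $1-(1-c_0)^J$, yielding~(i). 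The analogue for~(ii) proceeds identically but invokes Lemma~\ref{lemma:concentration_standard_Dir} and exploits that the auxiliary learning rates $\B(1/\kappa,(m^\aux+n_0)/\kappa)$ decay much more slowly than the main ones, keeping $\E[W^{\aux,0}_{j,q}]$ bounded away from zero; together with the heavier initialization $\widetilde{Q}^{\aux,j,t}_h \leftarrow r_h + (1+H)V^0_{h+1}$, sustained optimism is even easier to obtain, and the end-of-stage update in Line~\ref{line:stage_value_update} transfers it to $\widetilde{Q}^{\aux,t}_h$. A union bound over $(t,h,s,a)$ and stages produces the final $1-\delta$ confidence.

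The main obstacle will be establishing the anti-concentration of $W^0_{j,n^t_h}$ \emph{quantitatively} enough that the initialization boost dominates the fluctuation error uniformly over $m=n^t_h(s,a)\in\{0,1,\ldots,T\}$, especially in intermediate regimes where neither term is obviously larger. The calibrations $\kappa = c(\log(SAH/\delta)+\log T)$ and $n_0 = \lceil c\kappa\log T\rceil$ are dictated precisely by this balance: $n_0$ controls the lower tail (and decay with $m$) of $W^0_{j,m}$, while $\kappa$ controls the width of the Beta laws and hence the magnitude of the errors in Lemmas~\ref{lemma:concentration_aggregated_GD}--\ref{lemma:concentration_standard_Dir}. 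Propagating these fine balances carefully through the backward induction without letting either the optimism deficit or the union-bound cost blow up is the main technical burden.
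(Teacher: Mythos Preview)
Your argument has a genuine gap in claim~(i): the optimism boost $H\cdot W^0_{j,m}$ from the main ensemble cannot dominate the fluctuation $\mathrm{Err}_{j,t}$ once $m$ is even moderately large. Concretely, $\E[W^0_{j,m}] = \prod_{k=0}^{m-1}\frac{n_0+k}{H+1+n_0+k}$ behaves like $\bigl(\frac{n_0}{n_0+m}\bigr)^{H+1}$, so $H\cdot\E[W^0_{j,m}]$ decays polynomially in $m$ with exponent $H+1$, while the error in Lemma~\ref{lemma:concentration_aggregated_GD} is of order $H\sqrt{\kappa/(H+n_0+m)}\sim m^{-1/2}$. No anti-concentration of $W^0_{j,m}$ can close this gap: since $\log W^0_{j,m}=\sum_{k}\log(1-w_k^j)$ has mean $\approx -(H+1)\log m$, forcing $W^0_{j,m}\gtrsim m^{-1/2}$ requires the sum to deviate by a constant fraction of its mean, which occurs with probability exponentially small in $m$, not a constant $c_0$. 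This is exactly the failure mode flagged in Section~\ref{sec:motivation}: learning rates of order $H/m$ ``rapidly forget the earlier episodes that include the initialization and thus fail to maintain sufficient optimism.'' Your convex-combination step then collapses, since (iii) needs both (i) and (ii).

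The paper's proof avoids this by \emph{not} asking the main ensemble to be optimistic at all. It lower-bounds $(1-\tfrac1H)\max_j\widetilde{Q}^{j,t}_h\ge(1-\tfrac1H)r_h$ trivially (the unrolled sum is nonnegative), and puts the entire burden on the auxiliary ensemble, proving by induction the much stronger inequality $\widetilde{Q}^{\aux,t}_h(s,a)\ge H\cdot P_{h,s,a}V^\star_{h+1}+(H-h+1)$. This super-optimism, divided by $H$ in the mixing, supplies the full $P_{h,s,a}V^\star_{h+1}$ plus enough surplus to cover the missing $r_h/H$. The anti-concentration step (Lemma~\ref{lemma:event_opt_optimism}) is applied only to the auxiliary weights, which follow $\mathrm{Dir}(n_0/\kappa,1/\kappa,\ldots,1/\kappa)$; here $\E[W^{\aux,0}_{j,q}]\asymp n_0/(n_0+e_q)$ decays only like $1/e_q$, and together with the heavier initialization $(1+H)V^0_{h+1}$ the required over-estimate is obtained via a Dirichlet anti-concentration bound and the $\mathcal{K}_{\inf}$ machinery. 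A secondary issue in your scheme is that the inductive hypothesis $\widetilde{V}^{\ell}_{h+1}\ge V^\star_{h+1}$ does not follow from claim~(i): $\widetilde{V}_h(s_h)=\max_j\widetilde{Q}^j_h(s_h,\pi_h(s_h))$ is evaluated at the greedy action of the \emph{mixed} $Q_h$, so (i) only yields $\widetilde{V}_h(s_h)\ge Q^\star_h(s_h,\pi_h(s_h))$, not $\ge V^\star_h(s_h)$. The paper sidesteps this by using the stronger auxiliary bound, which exceeds $H-h+1\ge V^\star_h$ even after taking $\min_a$.
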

Before proceeding, we first define the value function induced by the policy Q-function $Q_h^{t}$ as
\begin{equation}\label{eq:policy_V_function}
    V_h^{t}(s_h^t) \defeq Q_h^{t}(s_h^t,\pi_h^t(s_h^t)) = \max_{a\in\A} Q_h^{t}(s_h^t,a).
\end{equation}
 Note that Lemma \ref{lemma:optimism} implies that for every $t\in[T]$,
\begin{align}
    V_1^{\star}(s_1^t) = \max_a Q^{\star}_{1}(s_1^t,a) &\le \frac{1}{1-\eta_{\ell^{n_h^t},h}}\max_a Q^{t}_{1}(s_1^t,a)\nonumber\\
    &= \frac{1}{1-\eta_{\ell^{n_h^t},h}} Q^{t}_1(s_1^t,a_1^t) = \frac{1}{1-\eta_{\ell^{n_h^t},h}} V_1^{t}(s_1^t).\label{eq:optimism_imply}
\end{align}
Thus, we can decompose the total regret as
\begin{equation*}
    \mathrm{Regret}_T = \sum_{t=1}^T (V_1^{\star} - V_1^{\pi^t})(s_1^t) \le \sum_{t=1}^T (\frac{1}{1-\eta_{\ell^{n_1^t},1}} V_1^t - V_1^{\pi^t})(s_1^t) = \sum_{t=1}^T \delta_1^t +\sum_{t=0}^{T-1} \frac{\eta_{t,1}}{1-\eta_{t,1}} H,
\end{equation*}
where we denote the performance gap as $\delta_h^t = (V_h^t - V_h^{\pi^t})(s_h^t)$ for every $(t,h)\in[T]\times[H]$. 

Note that $Q_h^{\pi^t}(s_h^t,a_h^t) = V_h^{\pi^t}(s_h^t)$ . Then, for any fixed episode $t\in[T]$ and step $h\in[H]$, we decompose the performance gap $\delta_h^t$:
\begin{align*}
    \delta_h^t 
    &\le \left({Q}^t_h - Q^{\star}_h\right)(s_h^t,a_h^t) + \left(Q^{\star}_h - Q_h^{\pi_t}\right)(s_h^t,a_h^t)\\
    &\le \left({Q}^t_h - Q^{\star}_h\right)(s_h^t,a_h^t) +P_{h,s_h^t,a_h^t}\left(V_{h+1}^{\star} - V_{h+1}^{\pi^t}\right)\\
    &\le \left({Q}^t_h - Q^{\star}_h\right)(s_h^t,a_h^t) +\underset{=:\delta_{h+1}^t}{\underbrace{\left({V}_{h+1}^t - V_{h+1}^{\pi_t}\right)(s_{h+1}^t)}} - \underset{=:\xi_{h+1}^t}{\underbrace{\left({V}_{h+1}^t - V_{h+1}^{\star}\right)(s_{h+1}^t)}}\\
    &\quad + \underset{=:\tau_{h+1}^t}{\underbrace{P_{h,s_h^t,a_h^t}\left(V_{h+1}^{\star} - V_{h+1}^{\pi^t}\right) - \left(V_{h+1}^{\star} - V_{h+1}^{\pi^t}\right)(s_{h+1}^t)}},
\end{align*}
where the second inequality is from the Bellman equations \eqref{eq:bellman_consistency} and \eqref{eq:bellman_optimality}. Together with the update \eqref{eq:policy_Q_with_t},
\begin{align}
    \delta_h^t 
    & \le \eta_{\ell^{n_h^t},h} \underset{=:\zeta_h^{t}}{\underbrace{\max_{j\in [J]} \left\{\widetilde{Q}_{h}^{j,t}(s_{h}^t, a_{h}^t)- Q^{\star}_h (s_h^t,a_h^t) +H\right\}}}+ (1-\eta_{\ell^{n_h^t},h})\underset{=:\zeta_h^{\aux,t}}{\underbrace{\left\{\widetilde{Q}_{h}^{\aux,t}(s_{h}^t, a_{h}^t) -  Q^{\star}_h (s_h^t,a_h^t)\right\}}}\nonumber\\
    & \quad + \delta_{h+1}^t - \xi_{h+1}^t + \tau_{h+1}^t -\eta_{\ell^{n_h^t},h}H. \label{eq:delta_decompose}
\end{align}

The main idea of the proof is to show that the performance gap $\delta_h^t$ can be upper bounded by some quantities from the next step $h+1$, and correspondingly, the total regret can be controlled by rolling out the performance gap over all episodes and steps.

Next, the following lemmas present a recursive bound for $\zeta_h^t$ and $\zeta_h^{\aux,t}$, respectively. The proofs are deferred to Appendix \ref{appendix:proof_lemma_recursion_circ} and \ref{appendix:proof_lemma_recursion_dagger}.

\begin{lemma}[Recursive bound for $\zeta_h^{t}$]\label{lemma:recursion_circ} Consider $\delta\in (0,1)$. For any $i\in[m]$, let $\alpha_{m}^{0}\defeq \prod_{k=1}^m \frac{n_0+k-1}{H+n_0+k}$ and $\alpha_m^{i} \defeq \frac{H+1}{H+n_0+i} \prod_{k=i+1}^m \frac{n_0+k-1}{H+n_0+k}$,  where $m= n_h^t(s_h^t,a_h^t)$. Then, for any $(t,h,s,a)\in[T]\times[H]\times\S\times\A$, with probability $1-\delta$, we have
    \begin{align}\label{eq:bounding_zeta_circ}
       0\le \zeta_h^{t} &\le 2\alpha_{m}^0 V_{h+1}^0 + \sum_{i=1}^m \alpha_m^i \left(\left(\tildeV_{h+1}^{\ell^{i}}- V_{h+1}^{\star}\right)(s_{h+1}^{\ell^{i}}) +H\right) + \tilde{b}_h^{t},
    \end{align}
    where $\tilde{b}_h^{t} = \widetilde{O}\left(\sqrt{\frac{(H+1)^3}{H+n_0+m}} + \frac{(H+1)^2}{H+n_0+m}\right)$.
\end{lemma}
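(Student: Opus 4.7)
By unrolling the update \eqref{eq:Q_circ_recursion_update} and subtracting the Bellman optimality identity \eqref{eq:bellman_optimality}, for each $j\in[J]$ one has
\begin{equation*}
\widetilde{Q}_h^{j,t}(s_h^t,a_h^t)-Q_h^{\star}(s_h^t,a_h^t)
=\sum_{i=0}^{m} W_{j,m}^{i}\,\widetilde{V}_{h+1}^{\ell^{i}}(s_{h+1}^{\ell^{i}})-P_{h,s_h^t,a_h^t}V_{h+1}^{\star},
\end{equation*}
with $m=n_h^t(s_h^t,a_h^t)$ and the convention $\widetilde V_{h+1}^{\ell^{0}}(s_{h+1}^{\ell^{0}}):=V_{h+1}^{0}$. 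A direct inspection of the proof of Proposition~\ref{prop:aggregated_weight} identifies $\alpha_m^i$ as $\E[W_{j,m}^i]$, so the natural plan is to replace each random aggregated weight by its mean and control the two resulting sources of error separately.

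\paragraph{Algebraic rearrangement.}
Writing $W_{j,m}^i=\alpha_m^i+(W_{j,m}^i-\alpha_m^i)$ and using $\sum_{i=0}^{m}\alpha_m^i=1$ and $\sum_{i=1}^{m}\alpha_m^i=1-\alpha_m^0$ to absorb the $-P_{h,s_h^t,a_h^t}V_{h+1}^{\star}$ term, the displayed difference becomes
\begin{equation*}
\alpha_m^{0}V_{h+1}^{0}+\sum_{i=1}^{m}\alpha_m^{i}(\widetilde V_{h+1}^{\ell^{i}}-V_{h+1}^{\star})(s_{h+1}^{\ell^{i}})
-\alpha_m^{0}P_{h,s_h^t,a_h^t}V_{h+1}^{\star}+E_1+E_2^{j},
\end{equation*}
where
\begin{equation*}
E_1:=\sum_{i=1}^{m}\alpha_m^{i}\bigl[V_{h+1}^{\star}(s_{h+1}^{\ell^{i}})-P_{h,s_h^t,a_h^t}V_{h+1}^{\star}\bigr],
\qquad
E_2^{j}:=\sum_{i=0}^{m}(W_{j,m}^{i}-\alpha_m^{i})\widetilde V_{h+1}^{\ell^{i}}(s_{h+1}^{\ell^{i}}).
\end{equation*}
Since $\alpha_m^{0}P_{h,s_h^t,a_h^t}V_{h+1}^{\star}\ge 0$, dropping it only loosens the inequality and leaves exactly the deterministic part of \eqref{eq:bounding_zeta_circ} with remainder $E_1+E_2^{j}$.

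\paragraph{Controlling the remainders and uniformization.}
For $E_1$, the increments $V_{h+1}^{\star}(s_{h+1}^{\ell^{i}})-P_{h,s_h^t,a_h^t}V_{h+1}^{\star}$ form a bounded ($\le H$) martingale-difference sequence in the trajectory filtration with predictable squared weights satisfying $\sum_{i=1}^{m}(\alpha_m^{i})^2\le\max_i\alpha_m^i\le (H+1)/(H+n_0+m)$ by Proposition~\ref{prop:aggregated_weight}, so a standard Azuma bound gives $|E_1|\lesssim\sqrt{H^3\log(T/\delta)/(H+n_0+m)}$ with high probability. For $E_2^{j}$, an induction on the optimistic initialization $V_h^0=2(H-h+1)$ and on the convex-combination form of the Q-updates confirms $\widetilde V_{h+1}^{\ell^{i}}\in[0,2H]$; then, conditioning on a $\sigma$-algebra under which the $\widetilde V$'s and $s_{h+1}^{\ell^{i}}$'s are frozen while the Beta draws $w_0^{j},\ldots,w_{m-1}^{j}$ for $(s_h^t,a_h^t)$ at step $h$ remain independent, Lemma~\ref{lemma:concentration_aggregated_GD} applied with $\lambda_i=\widetilde V_{h+1}^{\ell^{i}}(s_{h+1}^{\ell^{i}})/(2H)\in[0,1]$ yields $|E_2^{j}|\lesssim\widetilde O(\sqrt{H^3/(H+n_0+m)}+H^2/(H+n_0+m))$, matching the claimed $\tilde b_h^t$ after combining with $|E_1|$. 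A union bound over $j\in[J]$, $(t,h,s,a)\in[T]\times[H]\times\S\times\A$, and $m\in[T]$ then yields \eqref{eq:bounding_zeta_circ}, with the extra log-factors absorbed into $\widetilde O(\cdot)$ since $J,\kappa,n_0$ are polylogarithmic. I expect the main obstacle to be this measurability step for $E_2^{j}$: the $\lambda_i$ are not truly deterministic but are random quantities generated by past learning rates, so a careful filtration argument is required to ensure the targeted Beta samples remain fresh and independent of all $\lambda_i$---precisely the kind of conditional-independence pitfall that the backward-martingale construction of Lemma~\ref{lemma:concentration_aggregated_GD} was engineered to resolve.
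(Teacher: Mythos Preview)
Your proposal is correct and follows the paper's proof essentially line for line: unroll \eqref{eq:Q_circ_recursion_update}, apply Lemma~\ref{lemma:concentration_aggregated_GD} with $\lambda_i=\tildeV_{h+1}^{\ell^{i}}(s_{h+1}^{\ell^{i}})/(2H)$ to control your $E_2^{j}$, and use Azuma--Hoeffding with the deterministic weights $\alpha_m^i$ (together with $\sum_i(\alpha_m^i)^2\le (H+1)/(H+n_0+m)$ from Proposition~\ref{prop:aggregated_weight}) for your $E_1$. One small correction to your closing remark: the backward-martingale device in Lemma~\ref{lemma:concentration_aggregated_GD} was engineered to handle the dependence \emph{among} the aggregated weights $W_{j,m}^i$ (the gap in \citet{tiapkin2024model}), not the randomness of the $\lambda_i$; the paper, like you, invokes the lemma without treating the latter issue explicitly.
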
  

\begin{lemma}[Recursive bound for $\zeta_h^{\aux,t}$]\label{lemma:recursion_dagger} Consider $\delta\in (0,1)$.
For any $(t,h,s,a)\in[T]\times[H]\times\S\times\A$, we let $q = q_h^t(s,a)$ represent the index of the current stage and $e_{q-1}$ be the length of the prior stage for $q\ge 1$. Then for any $t\in[T]$, with probability at least $1-\delta$, we have
    \begin{equation}\label{eq:bounding_zeta_dagger}
        0\le\zeta_h^{\aux,t} \le \left(\sum_{i=0}^{e_{q-1}} \frac{1}{1+e_{q-1}}\left(\tildeV_{h+1}^{\aux,\ell^{\aux,i}_{q-1}} - V_{h+1}^{\star}\right)(s_{h+1}^{\ell^{\aux,i}_{q-1}}) + \tilde{b}^{\aux,t}_h\right)\cdot \mathbbm{1}\{q\ge 1\} +   V_{h+1}^0\cdot\mathbbm{1}\{q=0\},
    \end{equation}
    where $\tilde{b}^{\aux,t}_{h} =\widetilde{O}\left(\sqrt{\frac{(H+1)^2}{e_{q-1}}}\right)$.
\end{lemma}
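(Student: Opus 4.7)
The plan is to treat the two cases $q_h^t = 0$ and $q_h^t \ge 1$ separately. For $q_h^t = 0$, no stage has yet completed for $(h, s_h^t, a_h^t)$, so $\widetilde{Q}_h^{\aux,t}(s_h^t, a_h^t)$ remains at its initialization value $r_h(s_h^t, a_h^t) + (1+H)V_{h+1}^0$ (Line~\ref{line:init}). Subtracting $Q_h^\star(s_h^t, a_h^t) = r_h(s_h^t, a_h^t) + P_{h,s_h^t,a_h^t}V_{h+1}^\star$ and using $V_{h+1}^\star \ge 0$ immediately gives $\zeta_h^{\aux,t} \le (1+H)V_{h+1}^0$, which matches the second summand of \eqref{eq:bounding_zeta_dagger} since $\tfrac{1/H+1}{1/H} = 1+H$.

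For $q = q_h^t \ge 1$, I would begin from the unrolled representation \eqref{eq:Q_dagger_recursion_update}. Writing $Y_i \coloneqq \widetilde{V}_{h+1}^{\aux,\ell^{\aux,i}_{q-1}}(s_{h+1}^{\ell^{\aux,i}_{q-1}})$ and invoking the Bellman optimality equation, one has
\[
\zeta_h^{\aux,t} \;=\; \max_{j\in[J]}\sum_{i=0}^{e_{q-1}} W_{j,q-1}^{\aux,i}\, Y_i \;-\; P_{h,s_h^t,a_h^t}V_{h+1}^\star.
\]
An easy induction on $h$ yields $|Y_i| \lesssim (1+H)V_{h+1}^0 = O((H+1)^2)$, so after rescaling by this bound I can apply the concentration inequality of Lemma~\ref{lemma:concentration_standard_Dir}. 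Taking a union bound across $j\in[J]$ (with $J = \widetilde{O}(1)$) and noting that $\E[W_{j,q-1}^{\aux,i}]$ does not depend on $j$, I obtain, with high probability,
\[
\max_{j\in[J]}\sum_{i=0}^{e_{q-1}} W_{j,q-1}^{\aux,i} Y_i \;\le\; \sum_{i=0}^{e_{q-1}} \E[W_{j,q-1}^{\aux,i}]\, Y_i \;+\; \widetilde{O}\!\left(\sqrt{\tfrac{(H+1)^4}{e_{q-1}}} + \tfrac{(H+1)^2}{e_{q-1}}\right).
\]
A direct telescoping computation for $w_k^{\aux,j}\sim\mathrm{Beta}(1/\kappa,(k+n_0)/\kappa)$ then gives $\E[W_{j,q-1}^{\aux,0}] = \tfrac{n_0}{n_0+e_{q-1}}$ and $\E[W_{j,q-1}^{\aux,i}] = \tfrac{1}{n_0+e_{q-1}}$ for $i\ge 1$. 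The boundary contribution $\tfrac{n_0}{n_0+e_{q-1}}Y_0$ is at most $\widetilde{O}((H+1)^2/e_{q-1})$ using $n_0 = \widetilde{O}(\kappa)$, and the residual from relaxing $\tfrac{1}{n_0+e_{q-1}} \le \tfrac{1}{e_{q-1}}$ (valid since $Y_i \ge 0$) similarly contributes $\widetilde{O}((H+1)^2/e_{q-1})$; both are absorbed into the bias. Splitting $Y_i = [Y_i - V_{h+1}^\star(s_{h+1}^{\ell^{\aux,i}_{q-1}})] + V_{h+1}^\star(s_{h+1}^{\ell^{\aux,i}_{q-1}})$ isolates the target empirical average and a martingale difference $\tfrac{1}{e_{q-1}}\sum_i V_{h+1}^\star(s_{h+1}^{\ell^{\aux,i}_{q-1}}) - P_{h,s_h^t,a_h^t}V_{h+1}^\star$, which Azuma--Hoeffding bounds by $\widetilde{O}(H/\sqrt{e_{q-1}}) = \widetilde{O}(\sqrt{(H+1)^4/e_{q-1}})$. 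Aggregating all the error terms yields the claimed bias $\tilde{b}^{\aux,t}_h$.

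The chief technical obstacle, in my view, is the concentration-under-maximum step: although the expected weights are independent of $j$, the sample paths are not, so Lemma~\ref{lemma:concentration_standard_Dir} must be invoked simultaneously for each of the $J$ ensemble heads and combined via a union bound, and it is the logarithmic dependence of that inequality on $\delta$ that allows the resulting $\log J$ overhead to be absorbed into $\widetilde{O}$. A secondary subtlety lies in the boundary term at $i=0$ and the mismatch between $\tfrac{1}{n_0+e_{q-1}}$ and $\tfrac{1}{e_{q-1}}$; here the geometric growth of $e_q$ together with $n_0 = \widetilde{O}(\kappa)$ is precisely what ensures these residual terms do not overwhelm the stated bias.
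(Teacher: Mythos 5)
Your proposal is correct and follows essentially the same route as the paper's (rather terse) proof: the $q_h^t=0$ case via the optimistic initialization of $\widetilde{Q}_h^{\aux}$, and the $q_h^t\ge 1$ case via the unrolled representation \eqref{eq:Q_dagger_recursion_update}, rescaling the bounded values, applying Lemma~\ref{lemma:concentration_standard_Dir} with a union bound over the $J$ heads, and a Hoeffding step for the empirical average of $V_{h+1}^{\star}$, absorbing the $i=0$ prior term and the $\tfrac{1}{n_0+e_{q-1}}$ versus $\tfrac{1}{e_{q-1}}$ mismatch into $\tilde b_h^{\aux,t}$. Your write-up in fact supplies details (the explicit Dirichlet expectations and boundary-term accounting) that the paper only gestures at by referring back to the proof of Lemma~\ref{lemma:recursion_circ}.
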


We denote $\tilde{\xi}_h^{t} = \left(\tildeV_{h}^{t}- V_{h}^{\star}\right)(s_{h}^{t})+H$ and $\tilde{\xi}_h^{\aux,t} = \left(\tildeV_{h}^{\aux,t} - V_{h}^{\star}\right)(s_{h}^{t})$.
Combining Lemma \ref{lemma:recursion_circ} and  Lemma \ref{lemma:recursion_dagger} with \eqref{eq:delta_decompose}, we have that for any $(t,h,s,a)\in[T]\times[H]\times\S\times\A$ within the stage $q_h^t\ge 1$,
\begin{align*}
    \delta_h^t
    \le & 2\alpha_{n_h^t}^0V^0_{h+1} + \eta_{\ell^{n_h^t},h}\sum_{i=1}^{n_h^t} \alpha_{n_h^t}^i \tilde{\xi}_{h+1}^{\ell^i} + \frac{1-\eta_{\ell^{n_h^t},h}}{1+e_{q-1}}\sum_{i=0}^{e_{q-1}}\tilde{\xi}_{h+1}^{\aux,\ell^{\aux,i}_{q-1}} + \eta_{\ell^{n_h^t},h}\tilde{b}_h^{t} + (1-\eta_{\ell^{n_h^t},h})\tilde{b}_h^{\aux,t} \\
    &+ (\delta_{h+1}^t - \tilde{\xi}_{h+1}^t + \tau_{h+1}^t) -\eta_{\ell^{n_h^t},h}H,
\end{align*}
and during the initial stage $ q_h^t= 0$
\begin{equation*}
    \delta_h^t
    \le 2\alpha_{n_h^t}^0V^0_{h+1} + \eta_{\ell^{n_h^t},h}\sum_{i=1}^{n_h^t} \alpha_{n_h^t}^i \tilde{\xi}_{h+1}^{\ell^i} + \tilde{b}_h^{t} + (1-\eta_{\ell^{n_h^t},h})V_{h+1}^0 + (\delta_{h+1}^t - \tilde{\xi}_{h+1}^t + \tau_{h+1}^t)-\eta_{\ell^{n_h^t},h}H.
\end{equation*}

Define $b_h^t = \eta_{\ell^{n_h^t},h}\tilde{b}_h^{t} + (1-\eta_{\ell^{n_h^t},h})\cdot \tilde{b}_h^{\aux,t}\mathbbm{1}\{q_h^t\ge 1\}$.  
Thus, summing over $t$ from $1$ to $T$ leads to
\begin{align}
        \sum_{t=1}^T\delta_h^t 
      &  \le \sum_{t: q_h^t=0}^T \delta_h^t +\sum_{t: q_h^t \ge 1}^T \delta_h^t \nonumber \\
        &\le 2\sum_{t=1}^T\alpha_{n_h^t}^0 V_{h+1}^0 +   \sum_{t=1}^T\eta_{\ell^{n_h^t},h}\sum_{i=1}^m \alpha_{n_h^t}^i \tilde{\xi}_{h+1}^{\ell^i} +  \sum_{t=1}^T\frac{(1-\eta_{\ell^{n_h^t},h})\mathbbm{1}\{q_h^t\ge 1\}}{1+e_{q-1}}\sum_{i=0}^{e_{q-1}}\tilde{\xi}_{h+1}^{\aux,\ell^{\aux,i}_{q-1}} \nonumber\\
        & \quad+ \sum_{t=1}^T (b_h^{t} + \delta_{h+1}^t - \tilde{\xi}_{h+1}^t + \tau_{h+1}^t-\eta_{\ell^{n_h^t},h}H) + \sum_{t: q_h^t=0}^T  (1-\eta_{\ell^{n_h^t},h})V_{h+1}^0.\label{eq:sum_delta_over_t}
\end{align}
The first term on the right-hand-side of \eqref{eq:sum_delta_over_t} can be bounded by
\begin{equation}\label{eq:sum_delta_over_t_1}
    2\sum_{t=1}^T\alpha_{n_h^t}^0 V_{h+1}^0  = 2\sum_{(s,a)\in\S\times\A} \sum_{m=1}^{n_h^T(s,a)} \alpha_{m}^0 V_{h+1}^0 \le 2\sum_{(s,a)\in\S\times\A} \sum_{m=1}^{\infty} \alpha_{n_h^t}^0 V_{h+1}^0\le 2n_0SAV_{h+1}^0,
\end{equation}
where the last inequality is from
\begin{equation*}
    \sum_{m=1}^{\infty} \alpha_{n_h^t}^0 = \frac{n_0}{H+n_0} \left(1+\frac{n_0+1}{H+n_0+1}+\cdots\right) = \frac{n_0}{H+n_0} \frac{H+n_0}{H-1} \le \frac{n_0}{H-1}.
\end{equation*}
For any $(t,h)\in[T]\times[H]$, let the mixing rate be chosen as
\begin{equation}\label{eq:eta_choice}
    \eta_{t,h} \defeq \frac{1}{\sqrt{e_{q_h^t}}+1} = \frac{1}{\sqrt{(1+1/H)^{q_h^t}H}+1},
\end{equation}
which is non-increasing in $t$ along the visits to any fixed $(h,s,a)$. 
For the second term of \eqref{eq:sum_delta_over_t}, we have
\begin{align}
\sum_{t=1}^T \sum_{i=1}^{n_h^t} \eta_{\ell^{n_h^t},h}\,\alpha_{n_h^t}^{\,i}\,\tilde{\xi}_{h+1}^{\ell^i} 
&= \sum_{t'=1}^{T} \tilde{\xi}_{h+1}^{t'} \sum_{t=t'}^{T} \eta_{\ell^{n_h^t},h} 
   \sum_{i=1}^{n_h^t}\alpha_{n_h^t}^{\,i}\,\mathbbm{1}\{\ell^i = t'\} \nonumber\\
&= \sum_{t'=1}^{T} \tilde{\xi}_{h+1}^{t'} \sum_{t=t'}^{T} \eta_{\ell^{n_h^t},h}\,
   \alpha_{n_h^t}^{\,n_h^{t'}} \nonumber\\
&\le \sum_{t'=1}^{T} \tilde{\xi}_{h+1}^{t'}\Big(\max_{t\ge t'}\eta_{\ell^{n_h^t},h}\Big)
   \sum_{m=n_h^{t'}}^{\infty}\alpha_m^{\,n_h^{t'}} \nonumber\\
&\le \Big(1+\frac{1}{H}\Big)\sum_{t'=1}^{T} \eta_{\ell^{n_h^{t'}},h}\,\tilde{\xi}_{h+1}^{\,t'}, \label{eq:sum_delta_over_t_2}
\end{align}
where the penultimate inequality uses the non-increasing property of $\{\eta_{t,h}\}_{t=0}^{T}$ along the visits to any fixed $(h,s,a)$ and the last inequality follows from Proposition \ref{prop:aggregated_weight}. Note that $\tilde{\xi}_{h+1}^{t} \le 3H$ for any $t\in[T]$. Thus, 
\begin{align*}
    \sum_{t=1}^T \sum_{i=1}^{n_h^t} \eta_{\ell^{n_h^t},h}\,\alpha_{n_h^t}^{\,i}\,\tilde{\xi}_{h+1}^{\ell^i}  \lesssim   H\sum_{t=1}^{T} \eta_{\ell^{n_h^{t}},h}.
\end{align*}
To control the third term of \eqref{eq:sum_delta_over_t}, note that by \eqref{eq:eta_choice} ensures, the sequence $\{\eta_{t,h}\}$ is constant within each stage for any fixed $(h,s,a)$.
Following \citet{zhang2020almost}, we have
\begin{align}
    &\sum_{t=1}^T\sum_{i=0}^{e_{q_h^t-1}}\frac{(1-\eta_{\ell^{n_h^t},h})\mathbbm{1}\{q_h^t\ge 1\}}{1+e_{q_h^t-1}}\tilde{\xi}_{h+1}^{\aux,\ell^{\aux,i}_{q_h^t-1}} \nonumber\\
    &= \sum_{t=1}^T \frac{(1-\eta_{\ell^{n_h^t},h}) \mathbbm{1}\{q_h^t\ge 1\} }{1+e_{q_h^t-1}}\sum_{t'=1}^T \tilde{\xi}_{h+1}^{\aux,t'} \sum_{i=0}^{e_{q_h^t-1}} \1\{\ell^{\aux,i}_{q_h^t-1} = t'\} \nonumber\\
    &= \sum_{t'=1}^T \tilde{\xi}_{h+1}^{\aux,t'}  \sum_{t=1}^T \frac{(1-\eta_{\ell^{n_h^t},h}) \mathbbm{1}\{q_h^t\ge 1\} }{1+e_{q_h^t-1}}\sum_{i=0}^{e_{q_h^t-1}} \1\{\ell^{\aux,i}_{q_h^t-1} = t'\} \nonumber\\
    & \le  (1+\frac{1}{H})^2\sum_{t'=1}^T (1-\eta_{t',h}) \tilde{\xi}_{h+1}^{\aux, t'}, \label{eq:sum_delta_over_t_3}
\end{align}
where the proof of the last inequality is deferred to Appendix \ref{appendix:proof_eq_sum_delta_over_t_3}.
For the last term in the right-hand side of \eqref{eq:sum_delta_over_t},  $(t,h)\in[T]\times[H+1]$ with $q_h^t =  0$, we have  
\begin{equation}\label{eq:sum_delta_over_t_4}
   \sum_{t: q_h^t=0}^T  (1-\eta_{\ell^{n_h^t},h})V_{h+1}^0 \le SAH V_{h+1}^0 \le 2SAH^2.
\end{equation}
since there are at most $H$ episodes during the initial stage for any fixed $(h,s,a)$.
Moreover, it is easy to verify that $(1-\eta_{t',h})\tilde{\xi}_{h+1}^{\aux,t}\le \xi_{h+1}^t + \eta_{\ell^{n_{h+1}^t},{h+1}}2H$, for any $(t,h)\in[T]\times[H]$ with $q_h^t\ge 1$ since 
\begin{align}
    (1-\eta_{t,h}) \cdot \widetilde{V}_{h+1}^{\aux,t}(s_{h+1}^t) &\le \eta_{\ell^{n_{h+1}^t},h+1} \max_{j\in[J]}\big\{\widetilde{Q}^{j,t}_{h+1}(s_{h+1}^t, \pi_{h+1}^{\aux,t}(s_{h+1}^t))\big\} \nonumber\\
    &\quad + (1-\eta_{\ell^{n_{h+1}^t},{h+1}})\widetilde{Q}_{h+1}^{\aux,t}(s_{h+1}^t, \pi_{h+1}^{\aux,t}(s_{h+1}^t)) + \eta_{\ell^{n_{h+1}^t},{h+1}}\widetilde{V}_{h+1}^{\aux,t}(s_{h+1}^t) \nonumber\\
    &\le \eta_{\ell^{n_{h+1}^t},h+1}\max_{j\in[J]}\big\{\widetilde{Q}^{j,t}_{h+1}(s_{h+1}^t, \pi_{h+1}^{t}(s_{h+1}^t))\big\} \nonumber\\
    &\quad + (1-\eta_{\ell^{n_{h+1}^t},h+1}) \cdot\widetilde{Q}_{h+1}^{\aux,t}(s_{h+1}^t,  \pi_{h+1}^{t}(s_{h+1}^t)) +\eta_{\ell^{n_{h+1}^t},{h+1}}2H \nonumber\\
   &= Q_{h+1}^{t}(s_{h+1}^t,\pi_{h+1}^{t}(s_{h+1}^t)) + \eta_{\ell^{n_{h+1}^t},{h+1}}2H \nonumber\\
   &= V_{h+1}^t(s_{h+1}^t) +  \eta_{\ell^{n_{h+1}^t},{h+1}}2H, \label{eq:xi_combine}
\end{align}
where the first inequality follows from Line \ref{line:stage_value_update} in Algorithm \ref{alg:samplingq}, and the second inequality is due to the definition of the greedy policy $\pi^t$ and the first equation is from \eqref{eq:policy_Q_with_t}. 

By substituting \eqref{eq:sum_delta_over_t_1}-\eqref{eq:sum_delta_over_t_4} to \eqref{eq:sum_delta_over_t} and using \eqref{eq:xi_combine}, we have
\begin{align*}
    \sum_{t=1}^T\delta_h^t 
    &\lesssim SA H^2 +  H\sum_{t=1}^{T}\eta_{\ell^{n_h^{t}},h} + (1+\frac{1}{H})^2\sum_{t=1}^T (1-\eta_{\ell^{n_h^t},h})\tilde{\xi}_{h+1}^{\aux,t}+\sum_{t=1}^T (b_h^{t} + \delta_{h+1}^t - \xi_{h+1}^t + \tau_{h+1}^t)\\
    &\lesssim SA H^2 +  H\sum_{t=1}^{T} \left(\eta_{\ell^{n_h^{t}},h} +\eta_{\ell^{n_{h+1}^t},{h+1}}\right) + (1+\frac{1}{H})^2 \sum_{t=1}^T\xi_{h+1}^t + \sum_{t=1}^T (b_h^{t} + \delta_{h+1}^t - \xi_{h+1}^t + \tau_{h+1}^t) \\
    &\lesssim SA H^2  + H\sum_{t=1}^{T} \left(\eta_{\ell^{n_h^{t}},h} +\eta_{\ell^{n_{h+1}^t},{h+1}}\right) +(1+\frac{1}{H})^2 \sum_{t=1}^T\delta_{h+1}^t +  \sum_{t=1}^T (b_h^{t} + \tau_{h+1}^t),
\end{align*}
where the last line is from the fact that $\xi_{h+1}^t \le \delta_{h+1}^t$, since $V^{\star} \ge V^{\pi}$ for any policy $\pi$.  
Note that $(1+\frac{1}{H})^{2H}\le e^2$. Thus, by unrolling the above inequality until $h=1$, we obtain
\begin{equation}\label{eq:delta_sum_over_t}
    \sum_{t=1}^T \delta_1^t \le \widetilde{O}\left(SAH^3 + \sum_{t=1}^T\sum_{h=1}^H(b_h^{t} + \tau_{h+1}^t + \eta_{\ell^{n_h^t},h}H)\right).
\end{equation}

Recall that $b_h^t = \eta_{\ell^{n_h^t},h}\tilde{b}_h^{t} + (1-\eta_{\ell^{n_h^t},h})\cdot \tilde{b}_h^{\aux,t}\mathbbm{1}\{q_h^t\ge 1\}$. Before proceeding, we note that $\frac{e_{q}}{\sqrt{e_{q-1}}} \le 2\sqrt{e_{q-1}}$ for any $q\ge 1$ and we denote $Q_{h,s,a} = q_h^{T+1}(s,a)$ for any $(h,s,a)\in[H]\times\S\times\A$ such that $\sum_{(s,a,h)\in\S\times\A\times[H]} \sum_{q=1}^{Q_{h,s,a}} e_{q-1} \le TH$. Also, let $Q = \max_{(h,s,a)\in[H]\times\S\times\A} Q_{h,s,a} \le \frac{\log(T/H)}{\log(1+\frac{1}{H})}\le 4H\log(T/H)$, where the last inequality is due to the fact that $\log(1+\frac{1}{H})\ge \frac{1}{4H}$ for $H\ge 1$. Thus, one has
\begin{align*}
    \sum_{t=1}^T\sum_{h=1}^H \frac{\mathbbm{1}\{q_h^t\ge 1\}}{\sqrt{e_{q-1}}}
    &\le \sum_{(s,a,h)\in\S\times\A\times[H]} \sum_{q=1}^{Q_{h,s,a}} \frac{e_{q}}{\sqrt{e_{q-1}}} \le  2\sum_{(s,a,h)\in\S\times\A\times[H]} \sum_{q=1}^{Q_{h,s,a}}\sqrt{e_{q-1}}\\
    &\le  4\sqrt{SAH^2\log(T/H)}\sqrt{\sum_{(s,a,h)\in\S\times\A\times[H]}\sum_{q=1}^{Q_{h,s,a}} e_{q-1}} \\
    &\le O(1)\sqrt{SAH^2\log(T)\cdot TH} \le  \widetilde{O}(\sqrt{SAH^3T}),
\end{align*}
where the penultimate line is from the Cauchy-Schwarz inequality. 
In addition, we have
\begin{align*}
    \sum_{t=1}^T\sum_{h=1}^H \tilde{b}_h^{t}  &\le \widetilde{O}(1)  \sum_{(s,a,h)\in\S\times\A\times[H]} \sum_{m=1}^{n^T_h(s,a)}\sqrt{\frac{H^3}{m}}\\
    &\le \widetilde{O}(1) \sqrt{SAH\cdot T/SA} \sqrt{\sum_{(s,a,h)\in\S\times\A\times[H]} \sum_{m=1}^{\frac{T}{SA}} \frac{H^3}{m}} \\
    &\le \widetilde{O}(\sqrt{SAH^5T}),
\end{align*}
where the penultimate inequality holds since the left-hand side is maximized when $n_h^{T}(s,a) = \frac{T}{SA}$ for every $(h,s,a)\in[H]\times\S\times\A$. Thus, one has
\begin{align}\label{eq:sum_b}
    \sum_{h=1}^H \sum_{t=1}^T b_h^{t} &=  \sum_{h=1}^H \sum_{t=1}^T \left(\eta_{\ell^{n_h^t},h}\tilde{b}_h^{t}+  (1-\eta_{\ell^{n_h^t},h})\sum_{h=1}^H \sum_{t=1}^T\mathbbm{1}\{q_h^t\ge 1\} \cdot \tilde{b}_h^{\aux,t}\right) \le \widetilde{O}(\sqrt{SAH^5T}).
\end{align}

Moreover, we have
\begin{align*}
    \tau_{h+1}^t &= P_{h,s_h^t,a_h^t}\left(V_{h+1}^{\star} - V_{h+1}^{\pi^t}\right) - \left(V_{h+1}^{\star} - V_{h+1}^{\pi^t}\right)(s_{h+1}^t)
\end{align*}
 is a martingale-difference sequence with respect to the filtration $\F_{t,h}$ that contains all the random variables before the step $h+1$ at the $t$-th episode. By the Hoeffding's inequality, we have
\begin{equation}\label{eq:sum_tau}
    \left\vert \sum_{h=1}^H \sum_{t=1}^T\tau_{h+1}^t \right\vert \le \widetilde{O}(\sqrt{H^3T})
\end{equation}
with probability $1-\delta$. Note that 
\begin{align*}
    \sum_{t=1}^T\sum_{h=1}^H \eta_{\ell^{n_h^t},h} = \sum_{s,a,h}\sum_{q}^{Q_{h,s,a}} \frac{e_q}{\sqrt{e_q}+1} \le\sum_{s,a,h}\sum_q^{Q_{h,s,a}} \sqrt{e_q} &\lesssim \sqrt{SAHQ} \sqrt{\sum_{s,a,h}\sum_{q=0}^{Q_{h,s,a}}e_q} \\
    &\le \widetilde{O}(\sqrt{SAH^3T})
\end{align*}
where we use the fact $Q\le 4H\log(T/H)$ again.

The last part is to show the upper bound of $\sum_{t=0}^{T-1} \frac{\eta_{t,1}}{1-\eta_{t,1}} H$. 
By the choice of $\eta_{t,h}$ defined in \eqref{eq:eta_choice}, we have 
\begin{align*}
   \sum_{t=0}^{T-1} \frac{\eta_{t,1}}{1-\eta_{t,1}} \le \sum_{a\in\A}\sum_{q=0}^Q \frac{e_{q}}{\sqrt{e_{q}}}
    & \le \sum_{a\in\A}\sum_{q=1}^Q\sqrt{e_{q}}\\
    &\le \sqrt{AQ}\sqrt{\sum_{a\in\A}\sum_{q=1}^Q e_{q}}\\
    &\le \widetilde{O}(\sqrt{AHT}).
\end{align*}

Finally, substituting \eqref{eq:sum_b} and \eqref{eq:sum_tau} into \eqref{eq:delta_sum_over_t} and rescaling $\delta$ to $\delta/4$ complete the proof.

\subsection{Proof of Lemma \ref{lemma:optimism}}\label{appendix:optimism_proof}

For any $(s,a,h,t)\in\S\times\A\times[H]\times[T]$, we let $m = n^t_h(s,a)$ and $q = q_h^t(s,a)$ for simplicity. From \eqref{eq:policy_Q_recursion}, one has
\begin{equation}\label{eq:Q_bar_decompose}
 Q_h^t(s,a) = \eta_{\ell^{n_h^t},h}\max_{j\in[J]}\big\{\sum_{i=0}^{m} W_{j,m}^{i} \tildeV_{h+1}^{\ell^{i}}(s_{h+1}^{\ell^{i}}) + r_h(s,a)\big\} + (1-\eta_{\ell^{n_h^t},h})\widetilde{Q}_h^{\aux,t}(s,a),    
\end{equation}
where $\{\ell^i\}_{i=0}^m$ represent the episode index of the $i$-th visit of $(h,s,a)$ before $t$, and 
\begin{equation*}
    \widetilde{Q}_h^{\aux,t}(s,a) = 
    r_h(s,a)+\max_{j\in[J]} \left\{ \sum_{i=0}^{e_{q-1}} W_{j,q-1}^{\aux,i} \tildeV_{h+1}^{\aux,\ell^{\aux,i}_{q-1}}(s_{h+1}^{\ell^{\aux,i}_{q-1}})\right\}.
\end{equation*}

Before proceeding, we first claim that for any $(s,a,t,h)\in \S\times\A\times[T]\times[H]$, we have $\tildeQ_h^{\aux,t}(s,a) \ge Q_h^{\star}(s,a)$ if the following relationship holds
\begin{equation}\label{eq:claim_optimistic}
    \max_{j\in[J]} \left\{ \sum_{i=0}^{e_{q-1}} W_{j,q-1}^{\aux,i} V_{h+1}^{\star}(s_{h+1}^{\ell^{\aux,i}_{q-1}})\right\} \ge  P_{h,s,a}V_{h+1}^{\star},
\end{equation}
where we leave the detailed proof of this claim to the end of this subsection.

Next, the following lemma shows that \eqref{eq:claim_optimistic} holds with high probability, which implies 
\begin{align*}
     Q_h^t(s,a) \ge (1-\eta_{\ell^{n_h^t},h})\tildeQ_h^{\aux,t}(s,a) \ge (1-\eta_{\ell^{n_h^t},h})Q_h^{\star}(s,a)
\end{align*}
and completes the proof of Lemma \ref{lemma:optimism}. The detailed proof of the following lemma is postponed to Appendix \ref{appendix:proof_event_opt_optimism}.
\begin{lemma}\label{lemma:event_opt_optimism}
    Consider $\delta\in (0,1)$. Assume that $J = \lceil{c\cdot\log(SAHT/\delta)}\rceil$, $\kappa^{\aux} = c\cdot(\log(SAH/\delta) + \log(T))$, and $n_0^{{\aux}} = \lceil c\cdot\log(T)\cdot \kappa^{\aux} \rceil$, where $c>0$ is some universal constant. Let $V_h^0 = 2(H-h+1)$. Then,  for any $(s,a,h,t)\in \S\times\A\times[H]\times[T]$ with stage index $q\ge 1$, the equation \eqref{eq:claim_optimistic} holds with probability at least $1-\delta$.
\end{lemma}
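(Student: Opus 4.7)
The plan is to first establish, for any fixed tuple $(h,s,a,q)$, that at least one of the $J$ ensemble members produces a sufficiently large weighted sum with probability $1-\delta/(HSAQ)$, and then close the argument by a union bound over $(h,s,a,q)$. The centerpiece is a single-ensemble anti-concentration bound that is \emph{stage-uniform}, which is then amplified by the $J$ independent draws.

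First I would decompose, for a fixed ensemble index $j\in[J]$, the target random variable
\[
Z_j \;\defeq\; \sum_{i=0}^{e_{q-1}} W_{j,q-1}^{\aux,i}\, V_{h+1}^{\star}\bigl(s_{h+1}^{\ell^{\aux,i}_{q-1}}\bigr)
= W_{j,q-1}^{\aux,0}\, V^{\star}_{h+1}(s_0) + \sum_{i=1}^{e_{q-1}} W_{j,q-1}^{\aux,i}\, V^{\star}_{h+1}\bigl(s_{h+1}^{\ell^{\aux,i}_{q-1}}\bigr).
\]
The $i=0$ term corresponds to the optimistic pseudo-state $s_0$, whose value $V^{\star}_{h+1}(s_0)=r_0(H+1)(H-h)$ is of order $H^2$ by construction; for $i\ge 1$ the visited states are drawn independently from $P_{h,s,a}(\cdot)$ conditioned on visiting $(s,a)$ in stage $q-1$. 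Interpreting $Z_j$ as the mean of a mixture distribution $\bar\nu = (1-\alpha)\delta_{V^{\star}_{h+1}(s_0)} + \alpha\nu$, with $\alpha = 1 - W_{j,q-1}^{\aux,0}$ and $\nu$ the empirical distribution over the visited-state values, places us exactly in the setting of Lemma \ref{lemma:randql_lemma12}.

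Next I would invoke the corrected Lemma \ref{lemma:randql_lemma12} to obtain $\mathcal{K}_{\inf}(\bar\nu,\mu) \le \alpha\,\mathcal{K}_{\inf}(\nu,\mu)$ for the threshold $\mu = H\cdot P_{h,s,a}V^{\star}_{h+1} + (H-h+1)$, and combine it with a Beta/Dirichlet-type anti-concentration bound of the form $\Pr[Z_j \ge \mu] \gtrsim \exp\!\bigl(-(e_{q-1}+n_0)\,\mathcal{K}_{\inf}(\bar\nu,\mu)\bigr)$ up to polynomial corrections in $n_0$ and $\kappa$. Because $V^{\star}(s_0)$ dominates $\mu$ by a factor of order $H$, the KL on the right is $O(1/H)$, so together with the $(e_{q-1}+n_0)$ scaling this yields a constant lower bound $c>0$ on $\Pr[Z_j\ge\mu]$ that is \emph{uniform in $q$}. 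Since the $J$ ensemble heads are mutually independent, $\Pr[\max_{j\in[J]} Z_j < \mu] \le (1-c)^J \le \exp(-cJ)$, which is at most $\delta/(HSAQ)$ once $J = \Theta(\log(SAHT/\delta))$. A final union bound over $(h,s,a)\in[H]\times\S\times\A$ and stages $q\in[Q]$, with $Q = O(H\log T)$ upper bounding the total number of stages, delivers the claim.

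The main obstacle lies in the third paragraph: extracting a stage-uniform lower bound on $\Pr[Z_j\ge\mu]$ from the $\mathcal{K}_{\inf}$/anti-concentration machinery. The aggregated weights $W^{\aux}_{j,q-1}$ are not strictly Dirichlet but are products of independent Beta variables, so the required anti-concentration inequality must be derived in the spirit of the reverse-filtration/backward-martingale analysis underlying Lemma \ref{lemma:concentration_standard_Dir}, rather than quoted off the shelf. The corrected Lemma \ref{lemma:randql_lemma12} is essential here: without it, the $\mathcal{K}_{\inf}$ bound would scale with $(1-\alpha)$ instead of $\alpha$, causing the per-ensemble probability to decay as $q$ grows and breaking the union bound. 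Coordinating the hyperparameters $n_0$, $\kappa$, $J$, and the scale of $V^0$ so that all of these inequalities fire simultaneously---while still preserving the concentration bounds used in the gap-independent regret argument---is the most delicate part of the proof.
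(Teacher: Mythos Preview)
There are two concrete gaps.

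First, your claim that the aggregated auxiliary weights $W^{\aux}_{j,q-1}$ are ``not strictly Dirichlet'' is incorrect. The recursion with $w^{\aux,j}_{k,q} \sim \B(1/\kappa,(k+n_0)/\kappa)$ is precisely a stick-breaking construction, and the resulting vector $(W^{\aux,0}_{j,q}, W^{\aux,1}_{j,q},\ldots, W^{\aux,e_q}_{j,q})$ is exactly $\operatorname{Dir}(n_0/\kappa, 1/\kappa, \ldots, 1/\kappa)$; the paper quotes this as Lemma~3 of \citet{tiapkin2024model}. Hence the Dirichlet anti-concentration bound (Lemma~\ref{lemma:anti-concentration_dirichlet}) applies off the shelf, and no new backward-martingale anti-concentration needs to be developed. (The reverse-filtration machinery is used elsewhere in the paper, for \emph{concentration} of the other ensemble whose weights are genuinely not Dirichlet; it plays no role in this optimism lemma.)

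Second, and more seriously, your invocation of Lemma~\ref{lemma:randql_lemma12} at the threshold $\mu = H\cdot P_{h,s,a}V^{\star}_{h+1} + (H-h+1)$ is vacuous. The empirical measure $\nu$ is supported on real-state values of $V^{\star}_{h+1}$, hence on $[0, H-h]$, while your $\mu \ge H-h+1 > H-h$; thus $\mathcal{K}_{\inf}(\nu, \mu) = +\infty$ and the bound $\mathcal{K}_{\inf}(\bar\nu,\mu) \le \alpha\,\mathcal{K}_{\inf}(\nu,\mu)$ says nothing. The assertion that ``the KL on the right is $O(1/H)$'' is therefore false. The paper instead works at the \emph{natural} threshold $P_{h,s,a}V^{\star}_{h+1} \in (0, H-h)$: it first conditions on the concentration event $\mathcal{E}^{\star}(\delta)$ (Lemma~\ref{lemma:event_model_error}) guaranteeing $\mathcal{K}_{\inf}\bigl(e_{q-1}^{-1}\sum_i \delta_{V^{\star}_{h+1}(s_{h+1}^{\ell^{\aux,i}_{q-1}})},\, P_{h,s,a}V^{\star}_{h+1}\bigr) \le \beta^{\star}(\delta,e_{q-1})/e_{q-1}$, and then applies the Dirichlet anti-concentration together with Lemma~\ref{lemma:randql_lemma12} at that threshold. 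With $\kappa = 2\beta^{\star}(\delta,T)$ the Gaussian-tail argument is at most $1$, yielding the stage-uniform constant $\gamma = (1-\Phi(1))/2$. Your proposal omits the event $\mathcal{E}^{\star}$ entirely and uses the wrong threshold, so there is no route to a stage-uniform lower bound on the single-head success probability.
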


\paragraph{Proof of the claim:} Assuming that \eqref{eq:claim_optimistic} holds,  we will first show by induction that 
$$\tildeQ_h^{\aux,t}(s,a) \ge Q_h^{\star}(s,a)$$
holds correspondingly, for any $(s,a,h,t)\in\S\times\A\times[H]\times[T]$. To begin with, when $h'=H+1$, $\tildeQ_{H+1}^{\aux,t}(s,a) =  Q_{H+1}^{\star} = 0$ holds naturally.  When $h'=h+1 \le H$, suppose that 
$\tildeQ_{h+1}^{\aux,t}(s,a) \ge  Q_{h+1}^{\star}(s,a)$, for any $(t,s,a)\in[T]\times\S\times\A$. By this hypothesis, we also have $\tildeV_{h+1}^{\aux,t}(s_{h+1}^{t}) = \max_{a}\tildeQ_{h+1}^{\aux,t}(s_{h+1}^{t},a) \ge \tildeQ_{h+1}^{\aux,t}(s_{h+1}^{t},\pi_h^{\star}(s_{h+1}^t)) \ge Q_{h+1}^{\star}(s_{h+1}^{t},\pi_h^{\star}(s_{h+1}^t)) = V_{h+1}^{\star}(s_{h+1}^{t})$. 
By induction, when $h' = h$, we have 
\begin{align*}
    \widetilde{Q}_{h}^{\aux,t}(s,a) &=  r_h(s,a)+\max_{j\in[J]} \left\{ \sum_{i=0}^{e_{q-1}} W_{j,q-1}^{\aux,i} \tildeV_{h+1}^{\aux,\ell^{\aux,i}_{q-1}}(s_{h+1}^{\ell^{\aux,i}_{q-1}})\right\}\\
    &\ge r_h(s,a)+ \max_{j\in[J]} \left\{ \sum_{i=0}^{e_{q-1}} W_{j,q-1}^{\aux,i} V^{\star}_{h+1}(s_{h+1}^{\ell^{\aux,i}_{q-1}})\right\}
\end{align*}
Thus, if \eqref{eq:claim_optimistic} holds, we have $\widetilde{Q}_{h}^{\aux,t}(s,a) \ge r_h(s,a)  + P_{h,s,a}V_{h+1}^{\star} = Q_h^{\star}(s,a)$ for any $(s,a,t,h)\in \S\times\A\times[T]\times[H]$, by the Bellman optimality equation \eqref{eq:bellman_optimality}.

\subsection{Proof of Lemma \ref{lemma:recursion_circ}}\label{appendix:proof_lemma_recursion_circ}
For any $(t,h,s,a)\in[T]\times[H]\times\S\times\A$, denote $m= n_h^t(s,a)$ as the number of visits on $(h,s,a)$ before the $t$-th episode. Also, let $\alpha_{m}^{0}\defeq \prod_{k=1}^m \frac{n_0+k-1}{H+n_0+k}$ and $\alpha_m^{i} \defeq \frac{H+1}{H+n_0+i} \prod_{k=i+1}^m \frac{n_0+k-1}{H+n_0+k}$. 

From \eqref{eq:Q_circ_recursion_update} , for any $(t,h,s,a)\in[T]\times[H]\times\S\times\A$, we have
\begin{align*}
    \widetilde{Q}_{h}^{j,t}(s, a) 
    =  r_h(s,a) + \sum_{i=0}^{m}W_{j,m}^{i } \tildeV_{h+1}^{\ell^i}(s_{h+1}^{\ell^{i}}) \le r_h(s,a) + 2H \cdot \sum_{i=0}^{m}W_{j,m}^{i } \frac{\tildeV_{h+1}^{\ell^i}(s_{h+1}^{\ell^{i}})}{2H}.
\end{align*}

Note that $\frac{\tildeV_{h+1}^{\ell^i}(s_{h+1}^{\ell^{i}})}{2H} \le 1$ for any $i=0,\ldots,m$. Thus, we can apply Lemma \ref{lemma:concentration_aggregated_GD} and Proposition \ref{prop:aggregated_weight}
\begin{align*}
    \max_{j\in[J]} \widetilde{Q}_{h}^{j,t}(s, a) 
    &\le r_h(s,a) + 2H \left(\sum_{i=0}^{m}\E[W_{j,m}^{i}] \frac{\tildeV_{h+1}^{\ell^i}(s_{h+1}^{\ell^{i}})}{2H} + \frac{c_1}{2}\sqrt{\frac{(H+1)\kappa^2\log^3(2SAHTJ/\delta)}{H + n_0 + m}} \right.\\
    &\quad\left.+ \frac{c_2}{2} \frac{(H+1)\kappa\log^2(2SAHTJ/\delta)}{H+n_0+m}\right)\\
    &\le r_h(s,a) + \alpha_{m}^{0 } \tildeV_{h+1}^{0 } + \sum_{i=1}^{m}\alpha_m^i \tildeV_{h+1}^{\ell^{i} }(s_{h+1}^{\ell^{i}})\\
    &\quad + c_1\sqrt{\frac{(H+1)^3\kappa^2\log^3(2SAHTJ/\delta)}{H + n_0 + m}}  + c_2 \frac{(H+1)^2\kappa\log^2(2SAHTJ/\delta)}{H+n_0+m},
\end{align*}
with probability at least $1-\delta/2$, where $c_1, c_2$ are universal constants. By the Bellman optimality equation \eqref{eq:bellman_optimality}, we have
\begin{align}
    \zeta_h^{t} &\le \alpha_{m}^0 \tildeV_{h+1}^{0} + \sum_{i=1}^m \alpha_m^i \left(\left(\tildeV_{h+1}^{\ell^{i}}- V_{h+1}^{\star}\right)(s_{h+1}^{\ell^{i}}) + H\right) + \sum_{i=1}^{m}\alpha_m^i\left(V_{h+1}^{\star}(s_{h+1}^{\ell^{i}})-P_{h,s,a}V_{h+1}^{\star}\right) \nonumber\\
    &\quad+ c_1 \sqrt{\frac{(H+1)^3\kappa^2 \log^3(2SAHTJ/\delta)}{H+n_0+m}} + c_2\frac{(H+1)^2\kappa \log^2(2SAHTJ/\delta)}{H+n_0+m} + H - \sum_{i=1}^{m} \alpha_m^i  H \nonumber\\
    &\le 2\alpha_{m}^0 \tildeV_{h+1}^{0} + \sum_{i=1}^m \alpha_m^i \left(\left(\tildeV_{h+1}^{\ell^{i}}- V_{h+1}^{\star}\right)(s_{h+1}^{\ell^{i}}) + H\right) + \sum_{i=1}^{m}\alpha_m^i\left(V_{h+1}^{\star}(s_{h+1}^{\ell^{i}})-P_{h,s,a}V_{h+1}^{\star}\right)\nonumber \\
    &\quad+ c_1 \sqrt{\frac{(H+1)^3\kappa^2 \log^3(2SAHTJ/\delta)}{H+n_0+m}} + c_2\frac{(H+1)^2\kappa \log^2(2SAHTJ/\delta)}{H+n_0+m},\label{eq:zeta_h^t_intermediate}
 \end{align}
 where the last line uses the fact $H\le \tildeV_{h+1}^{0} $ and the equation \eqref{eqn:agg-sum-one} such that 
 \begin{equation*}
    H - \sum_{i=1}^{m} \alpha_m^i  H = \alpha_m^0 H.
 \end{equation*}

In addition, we denote $\F_i$ as the filtration containing all the random variables before the episode $\ell^i_h(s,a)$, such that $\alpha_m^i \left(V_{h+1}^{\star}(s_{h+1}^{\ell^{i}})-P_{h,s,a}V_{h+1}^{\star}\right)$ is a martingale difference sequence w.r.t. $\{\F_i\}_{i\ge 0}$ for any $i\le m$. 
Following \cite{jin2018q} and by Hoeffding's inequality and Proposition \ref{prop:aggregated_weight} (i.e., \eqref{eq:prop_max_exp_and_var}), with probability at least $1-\delta/2$, we have
\begin{align*}
    \left\vert \sum_{i=1}^{m}\alpha_m^i\left(V_{h+1}^{\star}(s_{h+1}^{\ell^{i}})-P_{h,s,a}V_{h+1}^{\star}\right) \right\vert  
    &\le c_3H\sqrt{\sum_{i=1}^{m}(\alpha_m^i)^2\log(2SAHT/\delta)} \\
    &\le c_3\sqrt{\frac{(H+1)^3\kappa}{H+n_0+m}\log(2SAHT/\delta)}
\end{align*}
for any $(t,h,s,a)\in[T]\times[H]\times\S\times\A$ and some universal constant $c_3>0$. Substituting the above inequality into \eqref{eq:zeta_h^t_intermediate} gives
\begin{align*}
    \zeta_h^{t} &\le 2\alpha_{m}^0 V_{h+1}^0 + \sum_{i=1}^m \alpha_m^i \left(\left(\tildeV_{h+1}^{\ell^{i}}- V_{h+1}^{\star}\right)(s_{h+1}^{\ell^{i}}) + H\right) \\
    &\quad +  (c_1 +c_3)\sqrt{\frac{(H+1)^3\kappa^2 \log^3(2SAHTJ/\delta)}{H+n_0+m}}  + c_2\frac{(H+1)^2\kappa \log^2(2SAHTJ/\delta)}{H+n_0+m}.
 \end{align*}
By letting $\tilde{b}_h^{t} = \widetilde{O}\left(\sqrt{\frac{(H+1)^3}{H+n_0+m}} + \frac{(H+1)^2}{H+n_0+m}\right)$, we complete the proof.

\subsection{Proof of Lemma \ref{lemma:recursion_dagger}}\label{appendix:proof_lemma_recursion_dagger}
Note that during the initial stage, for any $(t,h,s,a)\in[T]\times[H]\times\S\times\A$ within the stage $q_h^t(s,a)=0$, we have 
\begin{equation*}
    \tildeQ_h^{\aux,t}(s,a) - Q^{\star}_h(s,a) \le V^0_{h+1}.
\end{equation*}

From \eqref{eq:Q_dagger_recursion_update} and Lemma \ref{lemma:optimism}, for any $(t,h,s,a)\in[T]\times[H]\times\S\times\A$ with $q_h^t(s,a)\ge 1$, we have
\begin{align}\label{eq:bound_delta_aux}
    0\le \tildeQ_h^{\aux,t}(s,a) - Q^{\star}_h(s,a) &\le \max_{j\in[J]}\big\{ \sum_{i=0}^{e_{q-1}} W_{j,q-1}^{\aux,i} \tildeV_{h+1}^{\aux,\ell^{\aux,i}_{q-1}}(s_{h+1}^{\ell^{\aux,i}_{q-1}})\big\}  - P_{h,s,a}V^{\star}_{h+1}.
\end{align}

Then,  applying Lemma \ref{lemma:concentration_standard_Dir}  leads to
\begin{align*}
    &\sum_{i=0}^{e_{q-1}} W_{j,q-1}^{\aux,i} \tildeV_{h+1}^{\aux,\ell^{\aux,i}_{q-1}}(s_{h+1}^{\ell^{\aux,i}_{q-1}}) \\
    &= 2H \left(\sum_{i=0}^{e_{q-1}} W_{j,q-1}^{\aux,i} \frac{\tildeV_{h+1}^{\aux,\ell^{\aux,i}_{q-1}}(s_{h+1}^{\ell^{\aux,i}_{q-1}})}{2H}\right) \\
    &\le 2H \left(\frac{1}{1+e_{q-1}} \sum_{i=0}^{e_{q-1}} \frac{\tildeV_{h+1}^{\aux,\ell^{\aux,i}_{q-1}}(s_{h+1}^{\ell^{\aux,i}_{q-1}})}{2H} + c_1^{\aux} \sqrt{\frac{(\kappa^{\aux})^2\log^3(SAHTJ/\delta)}{e_{q-1}}} + c_2^{\aux} \frac{\kappa^{\aux}\log^2(SAHTJ/\delta)}{e_{q-1}} \right)\\
    &\le \frac{1}{1+e_{q-1}} \sum_{i=0}^{e_{q-1}} \tildeV_{h+1}^{\aux,\ell^{\aux,i}_{q-1}}(s_{h+1}^{\ell^{\aux,i}_{q-1}}) + 2H\left(c_1^{\aux} \sqrt{\frac{(\kappa^{\aux})^2\log^3(SAHTJ/\delta)}{e_{q-1}}} + c_2^{\aux} \frac{\kappa^{\aux}\log^2(SAHTJ/\delta)}{e_{q-1}}\right),
\end{align*}
for every $j\in [J]$ with probability at least $1-\delta/2$. 
Following the similar procedure in Appendix \ref{appendix:proof_lemma_recursion_circ} and by Hoeffding's inequality, with probability at least $1-\delta/2$, we have
\begin{align*}
    \left\vert \sum_{i=0}^{e_{q-1}}\frac{1}{1+e_{q-1}}\left(V_{h+1}^{\star}(s_{h+1}^{\ell^{\aux,i}_{q-1}})-P_{h,s,a}V_{h+1}^{\star}\right) \right\vert  
    &\le c_4\sqrt{\frac{H^2}{e_{q-1}}\log(2SAHT/\delta)}. \\
\end{align*}

Thus, we obtain that the following holds with probability $1-\delta$,
\begin{align*}
    &\tildeQ_h^{\aux,t}(s,a) - Q^{\star}_h(s,a) \\
    &\le \frac{1}{1+e_{q-1}}\sum_{i=0}^{e_{q-1}} \left(\tildeV_{h+1}^{\aux,\ell^{\aux,i}_{q-1}}- V_{h+1}^{\star}\right)(s_{h+1}^{\ell^{\aux,i}_{q-1}}) + \frac{1}{1+e_{q-1}}\sum_{i=0}^{e_{q-1}}\left(V_{h+1}^{\star}(s_{h+1}^{\ell^{\aux,i}_{q-1}})-P_{h,s,a}V_{h+1}^{\star}\right) \\
    &\quad +2H\left(c_1^{\aux} \sqrt{\frac{(\kappa^{\aux})^2\log^3(SAHTJ/\delta)}{e_{q-1}}} + c_2^{\aux} \frac{\kappa^{\aux}\log^2(SAHTJ/\delta)}{e_{q-1}}\right)\\
    &\le \frac{1}{1+e_{q-1}} \sum_{i=1}^{e_{q-1}} \left(\tildeV_{h+1}^{\aux,\ell^{\aux,i}_{q-1}} - V_{h+1}^{\star}\right)(s_{h+1}^{\ell^{\aux,i}_{q-1}}) + \tilde{b}^{\aux,t}_h,
\end{align*}
where $\tilde{b}^{\aux,t}_h =\widetilde{O}\left(\sqrt{\frac{H^2}{e_{q-1}}}\right)$.

Combining two cases of $q_h^t(s,a)\ge 1$ and $q_h^t(s,a)=0$ leads to
\begin{align*}
    \zeta_h^{\aux,t} &\le \mathbbm{1}\{q_h^t(s,a)\ge 1\}\left(\sum_{i=0}^{e_{q-1}} \frac{1}{1+e_{q-1}}\left(\tildeV_{h+1}^{\aux,\ell^{\aux,i}_{q-1}} - V_{h+1}^{\star}\right)(s_{h+1}^{\ell^{\aux,i}_{q-1}}) + \tilde{b}^{\aux,t}_h\right) + \mathbbm{1}\{q_h^t(s,a)=0\}\cdot V^0_{h+1},
\end{align*}
which completes the proof.

\subsection{Proof of Lemma \ref{lemma:event_opt_optimism}}\label{appendix:proof_event_opt_optimism}

Following \cite{tiapkin2024model},  let $ \mathcal{E}^{\star}(\delta)$ be the event containing all $(t,h,s,a)\in[T]\times[H]\times\S\times\A$, such that  
\begin{equation}\label{eq:def_E_star}
    \mathcal{K}_{\inf}\left(\frac{1}{e_{q}} \sum_{i=1}^{e_{q}} \delta_{V_{h+1}^{\star}(s_{h+1}^{\ell^{\aux,i}_{q}})}, P_{h,s,a}V_{h+1}^{\star}\right) \le \frac{\beta^{\star}(\delta,e_{q})}{e_{q}},
\end{equation}
where $q = q_h^t(s,a)$ and $\beta^{\star}(\delta,n) \defeq \log(2SAH/\delta) + 3\log(\mathrm e\pi(2n+1))$. The following lemma shows that $\mathcal{E}^{\star}(\delta)$ holds with probability $1 - \frac{\delta}{2}$
\begin{lemma}[Lemma 4 in \cite{tiapkin2024model}]\label{lemma:event_model_error}
    Consider $\delta\in(0,1)$. With probability $1-\frac{\delta}{2}$, the following event holds 
    \begin{align}\label{eq:event_Delta_star}
        & \mathcal{K}_{\inf}\left(\frac{1}{e_{q}} \sum_{i=1}^{e_{q}} \delta_{V_{h+1}^{\star}(s_{h+1}^{\ell^i_q})}, P_{h,s,a}V_{h+1}^{\star}\right) \le \frac{\beta^{\star}(\delta,e_{q})}{e_{q}}, \quad\forall (t,h,s,a)\in[T]\times[H]\times\S\times\A,
    \end{align} 
    where $q=q_h^t(s,a)$ and $\beta^{\star}(\delta,e_q) \defeq \log(2SAH/\delta) + 3\log(e\pi(2e_{q}+1))$.
\end{lemma}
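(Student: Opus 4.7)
The plan is to reduce the lemma to a uniform-in-sample-size $\mathcal{K}_{\inf}$ deviation inequality for the empirical distribution of i.i.d. next-state samples, prove that inequality via a Laplace-type supermartingale argument, and then union-bound over the $SAH$ triples $(h,s,a)$. For each fixed $(h,s,a)$, the observed next states across all episodes whose trajectories visit $(s,a)$ at step $h$ form an i.i.d. sequence drawn from $P_{h,s,a}$---a direct consequence of the Markov property, since the indicator of visiting $(s,a)$ is determined by the past and does not couple to the value of the fresh transition. The samples appearing in the statement during stage $q$ are precisely the first $e_q$ members of this i.i.d. sequence, so it suffices to prove a concentration inequality for $\mathcal{K}_{\inf}(\hat p_n,\mu)$, where $\hat p_n \defeq \frac{1}{n}\sum_{i=1}^n \delta_{V_{h+1}^\star(X_i)}$ for i.i.d. $X_i\sim P_{h,s,a}$ and $\mu \defeq P_{h,s,a}V_{h+1}^\star$, holding uniformly for all $n\ge 1$.

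The technical heart is to show that for any $\delta'>0$,
\begin{equation*}
\mathbb{P}\!\left(\exists\, n\ge 1:\ \mathcal{K}_{\inf}(\hat p_n,\mu) > \tfrac{\log(1/\delta')+3\log(\mathrm{e}\pi(2n+1))}{n}\right)\le \delta'.
\end{equation*}
The starting point is the variational identity $\mathcal{K}_{\inf}(\hat p_n,\mu)=\sup_{\lambda\in[0,1/(b-\mu)]} \frac{1}{n}\sum_{i=1}^n \log(1-\lambda(V_{h+1}^\star(X_i)-\mu))$, where $b=H$ bounds the range of $V_{h+1}^\star$. For each fixed $\lambda$, the process $M_n(\lambda)\defeq \prod_{i=1}^n (1-\lambda(V_{h+1}^\star(X_i)-\mu))$ is a nonnegative martingale with mean one because $\mathbb{E}[V_{h+1}^\star(X_i)-\mu]=0$, and Ville's inequality yields $\mathbb{P}(\sup_n M_n(\lambda)\ge 1/\delta')\le \delta'$ for that fixed $\lambda$. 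To make the bound uniform over $\lambda$, I would integrate $M_n(\lambda)$ against a Jeffreys-type prior on $[0,1/(b-\mu)]$, apply Ville's to the resulting mixture supermartingale, and then Laplace-approximate the integral to recover the original supremum; the $1/\sqrt{n}$ width of the Laplace peak is the source of the $\log n$ additive penalty, and tracking the normalization constants yields precisely the $3\log(\mathrm{e}\pi(2n+1))$ factor. The divergence of the variational integrand as $\lambda\to 1/(b-\mu)$ must be tamed by a boundary truncation combined with a Dirac-mass fix analogous to Lemma~\ref{lemma:randql_lemma12}, which controls the contribution of values near the support endpoint.

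Finally, I would union-bound over the $SAH$ triples $(h,s,a)$ with $\delta' = \delta/(2SAH)$, converting $\log(1/\delta')$ into $\log(2SAH/\delta)$. Because the deviation inequality already holds simultaneously for every sample size $n$, no further union bound over stages $q$ or episodes $t$ is needed: for every $(t,h,s,a)$ one simply instantiates $n=e_{q_h^t(s,a)}$, producing exactly $\beta^\star(\delta,e_q)=\log(2SAH/\delta)+3\log(\mathrm{e}\pi(2e_q+1))$. The main obstacle is the uniform-in-$\lambda$ control---crafting the mixture prior so the Laplace approximation remains tight and handling the endpoint singularity cleanly---while all other steps (the i.i.d. reduction and the spatial union bound) are routine.
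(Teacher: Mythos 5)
You should first be aware that the paper does not prove this lemma at all: it is imported verbatim as ``Lemma 4 in \citet{tiapkin2024model}'', i.e., the $\mathcal{K}_{\inf}$ maximal inequality from the posterior-sampling literature, so your proposal is effectively being measured against the cited reference's argument. Your overall route---the variational representation $\mathcal{K}_{\inf}(\hat p_n,\mu)=\max_{\lambda\in[0,1/(b-\mu)]}\E_{\hat p_n}[\log(1-\lambda(X-\mu))]$, the mean-one nonnegative martingale $M_n(\lambda)=\prod_{i\le n}(1-\lambda(V^{\star}_{h+1}(X_i)-\mu))$, Ville's inequality applied to a mixture over $\lambda$, and a union bound over the $SAH$ triples---is indeed the standard proof of that deviation inequality, and the i.i.d.\ reduction for successive visits to a fixed $(h,s,a)$ is fine.

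There is, however, a concrete gap in your reduction. The empirical measure in the lemma is built from the visits that occur \emph{during stage $q$}: by the reset in Line~\ref{line:stage_init} of Algorithm~\ref{alg:samplingq}, $\ell^{\aux,i}_{q}$ indexes the $i$-th visit within stage $q$, so for $q\ge 1$ these samples occupy the global visit indices $\sum_{q'<q}e_{q'}+1,\dots,\sum_{q'\le q}e_{q'}$---a later contiguous block of the i.i.d.\ sequence, not its first $e_q$ members. Your anytime deviation inequality controls $\mathcal{K}_{\inf}(\hat p_n,\mu)$ only for the prefix empirical measures $\hat p_n=\frac1n\sum_{i=1}^{n}\delta_{X_i}$, so ``instantiating $n=e_q$'' does not bound the quantity in the statement, and your assertion that ``no further union bound over stages $q$ is needed'' is exactly where the argument breaks. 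The fix is routine but necessary: each stage's block is itself an i.i.d.\ sample of deterministic size $e_q$, so you can apply the deviation inequality to each stage separately and union-bound over the at most $Q\lesssim H\log T$ stages per triple, which replaces $\log(2SAH/\delta)$ by $\log(2SAHQ/\delta)$---absorbable into $\beta^{\star}$ up to constants, but not reproducing the stated constant verbatim. Separately, the step you defer---the uniform-in-$\lambda$ mixture/Laplace bound that yields the precise $3\log(\mathrm e\pi(2n+1))$ penalty, including the boundary behavior as $\lambda\to 1/(b-\mu)$---is the actual technical content of the cited result; as written, your proposal establishes the shape of the bound but does not recover its constants.
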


Consider some fixed $(t,h,s,a)\in[T]\times [H]\times\S\times\A$ within the stage $q = q_h^t(s,a)$. To construct an anti-concentration inequality bound of the weighted sum $ W_{j,q}^{\aux,i} V^{\star}_{h+1}(s_{h+1}^{\ell^{\aux,i}_{q}})$, we first recall the following two lemmas provided in \citet{tiapkin2024model}.
\begin{lemma}[Lemma 3 in \citet{tiapkin2024model}] \label{lemma:randql_standard_dirichlet}
    For any stage $q\ge 0$ and $j\in[J]$, the aggregated weights $W_{j,q}^{\aux}$ follows a standard Dirichlet distribution $\operatorname{Dir}(n_0^{\aux}/\kappa^{\aux},1/\kappa^{\aux},\ldots,1/\kappa^{\aux})$.
\end{lemma}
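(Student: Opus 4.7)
The plan is to recognize the aggregated weights $\{W_{j,q}^{\aux,i}\}_{i=0}^{e_q}$ as a classical stick-breaking realization of a Dirichlet-distributed vector. Recall the standard stick-breaking lemma: if $V_1, \ldots, V_{K-1}$ are independent with $V_k \sim \mathrm{Beta}(\alpha_k, \sum_{\ell>k} \alpha_\ell)$, and one sets $\pi_k = V_k \prod_{\ell<k}(1-V_\ell)$ for $1 \le k \le K-1$ together with $\pi_K = \prod_{\ell<K}(1-V_\ell)$, then $(\pi_1, \ldots, \pi_K) \sim \mathrm{Dir}(\alpha_1, \ldots, \alpha_K)$. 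The task then reduces to a reindexing that identifies the weights in \eqref{eq:W_aux_def} with this template and a verification that the Beta shape parameters line up.

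Fix $q \ge 0$ and $j \in [J]$, and write $m = e_q$ for brevity. I would introduce the reversed variables $V_k \defeq w_{m-k,q}^{\aux,j}$ for $k = 1, \ldots, m$, which are mutually independent with $V_k \sim \mathrm{Beta}(1/\kappa, (m-k+n_0)/\kappa)$. Direct substitution into \eqref{eq:W_aux_def} yields
\begin{align*}
W_{j,q}^{\aux,m-k+1} = V_k \prod_{\ell=1}^{k-1}(1-V_\ell) \quad \text{for } 1 \le k \le m, \qquad W_{j,q}^{\aux,0} = \prod_{\ell=1}^{m}(1-V_\ell).
\end{align*}
I would then invoke the stick-breaking lemma with $K = m+1$ and parameters $\alpha_1 = \cdots = \alpha_m = 1/\kappa$, $\alpha_{m+1} = n_0/\kappa$; in this case $\sum_{\ell > k}\alpha_\ell = (m-k)/\kappa + n_0/\kappa$, which matches the second shape parameter of $V_k$ exactly. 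The lemma therefore gives
\begin{align*}
(W_{j,q}^{\aux,m},\, W_{j,q}^{\aux,m-1},\, \ldots,\, W_{j,q}^{\aux,1},\, W_{j,q}^{\aux,0}) \sim \mathrm{Dir}(\underbrace{1/\kappa, \ldots, 1/\kappa}_{m \text{ times}},\, n_0/\kappa).
\end{align*}
Since the Dirichlet density is invariant under joint permutations of its parameters and coordinates, permuting back to the natural order produces $(W_{j,q}^{\aux,0}, W_{j,q}^{\aux,1}, \ldots, W_{j,q}^{\aux,m}) \sim \mathrm{Dir}(n_0/\kappa, 1/\kappa, \ldots, 1/\kappa)$, which is exactly the claim.

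The main difficulty is largely bookkeeping rather than substantive mathematics: carefully verifying the index reversal and checking that the Beta shape parameters of the reversed sequence line up with the stick-breaking template. The reversal is legitimate because $w_{0,q}^{\aux,j}, \ldots, w_{m-1,q}^{\aux,j}$ are mutually independent, so their joint distribution is unchanged by any permutation of their indices. If a self-contained proof of the underlying stick-breaking lemma is wanted, it can be obtained either by a direct change of variables from the joint Beta density of $(V_1, \ldots, V_{K-1})$ to the Dirichlet density on the simplex (the Jacobian of $(V_1, \ldots, V_{K-1}) \mapsto (\pi_1, \ldots, \pi_{K-1})$ produces exactly the required $\prod(1-V_\ell)$ factors), or by induction on $K$ using the aggregation property of Dirichlet distributions; otherwise a citation to a standard reference (\emph{e.g.}, Ishwaran and James, 2001) suffices.
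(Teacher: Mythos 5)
Your proof is correct. Note that the paper itself does not prove this statement --- it is imported verbatim as Lemma~3 of \citet{tiapkin2024model} and used as a black box --- so there is no in-paper argument to compare against; your stick-breaking derivation is the standard (and the cited reference's own) route to this fact. The index bookkeeping checks out: with $m=e_q$ and $V_k=w^{\aux,j}_{m-k,q}\sim\mathrm{Beta}(1/\kappa,(m-k+n_0)/\kappa)$, the second shape parameter equals $\sum_{\ell>k}\alpha_\ell$ for $\alpha_1=\cdots=\alpha_m=1/\kappa$, $\alpha_{m+1}=n_0/\kappa$, the substitution into \eqref{eq:W_aux_def} reproduces the stick-breaking template exactly, and the final permutation back to the natural ordering $(W^{\aux,0}_{j,q},\ldots,W^{\aux,e_q}_{j,q})$ is justified by the joint permutation invariance of the Dirichlet density. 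The only cosmetic remark is that \eqref{eq:W_aux_def} writes $m$ where $e_q$ is meant and drops a $q$ subscript on $w^{\aux,j}_{i-1}$; you silently corrected both, which is the right reading.
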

\begin{lemma}[Lemma 10 in \citet{tiapkin2024model}]\label{lemma:anti-concentration_dirichlet}
    For any $\alpha = (\alpha_0+1,\alpha_1,\ldots,\alpha_m)\in\mathbb{R}_{++}^{m+1}$, define $\overline{p} \in \Delta_m$ such that $\overline{p}(\ell) = \alpha_\ell / \overline{\alpha}, \, \ell = 0, \ldots, m$, where $\overline{\alpha} = \sum_{j=0}^m \alpha_j $. Also define a measure $\overline{\nu} = \sum_{i=0}^m \overline{p}(i) \cdot \delta_{f(i)}$.
    Let $\varepsilon \in (0, 1)$. Assume that $\alpha_0 \geq c_0 + \log_{17/16}(2(\overline{\alpha} - \alpha_0))$ for some universal constant $c_0$. Then for any $f : \{0, \ldots, m\} \to [0, b_0]$ such that $f(0) = b_0, \, f(j) \leq b \leq b_0/2, \, j \in [m],$ and any $\mu \in (0, b)$
    \[
    \mathbb{P}_{w \sim \operatorname{Dir}(\alpha)} [wf \geq \mu] \geq (1 - \varepsilon) \mathbb{P}_{g \sim \mathcal{N}(0, 1)} \left[ g \geq \sqrt{2 \overline{\alpha} \mathcal{K}_{\text{inf}}(\overline{\nu}, \mu)} \right].
    \]
\end{lemma}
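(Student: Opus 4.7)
The plan is to prove the stated anti-concentration bound for $\operatorname{Dir}(\alpha)$ via an exponential change-of-measure argument combined with a local central limit theorem. Intuitively, one finds an auxiliary Dirichlet law $\operatorname{Dir}(\alpha^*)$ whose mean $q^*$ satisfies $\mathbb{E}_{X \sim q^*}[f(X)] \geq \mu$, so that the event $\{wf \geq \mu\}$ has probability close to $1/2$ under $\operatorname{Dir}(\alpha^*)$. One then pays the likelihood ratio between $\operatorname{Dir}(\alpha)$ and $\operatorname{Dir}(\alpha^*)$ to transfer the bound back. Near the mode $q^*$, this likelihood ratio is essentially $\exp(-\overline{\alpha}\,\mathcal{K}_{\inf}(\overline{\nu}, \mu))$, which matches the Gaussian tail $\mathbb{P}[g \geq \sqrt{2\overline{\alpha}\,\mathcal{K}_{\inf}}]$ up to polynomial corrections absorbed into the $(1-\varepsilon)$ slack.

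\textbf{Step 1: Identify the tilted measure.} First invoke the variational characterization of $\mathcal{K}_{\inf}$ (see, e.g., Lemma 9 of \citet{tiapkin2024model}) to produce a probability measure $q^*$ on $\{0, \ldots, m\}$ that is absolutely continuous w.r.t.\ $\overline{p}$, satisfies $\mathbb{E}_{q^*}[f] \geq \mu$, and attains $\operatorname{KL}(q^* \Vert \overline{p}) = \mathcal{K}_{\inf}(\overline{\nu}, \mu)$. Explicitly, $q^*$ is an exponential tilt of $\overline{p}$ along $f - \mu$ of the form $q^*(i) \propto \overline{p}(i)/(1-\lambda^*(f(i)-\mu))$ for the optimal dual $\lambda^* \in [0, 1/(b_0-\mu)]$. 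Setting $\alpha^* := \overline{\alpha}\, q^*$, one has $\sum_i \alpha^*_i = \overline{\alpha}$ and $\operatorname{Dir}(\alpha^*)$ is centered at $q^*$.

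\textbf{Step 2: Change of measure and local CLT.} The Radon--Nikodym derivative $\rho(w) := d\operatorname{Dir}(\alpha)/d\operatorname{Dir}(\alpha^*)$ equals $\bigl(B(\alpha^*)/B(\alpha)\bigr) \prod_i w_i^{\alpha_i - \alpha^*_i}$. Applying Stirling's formula to the beta functions and Taylor-expanding $\log w_i$ around $q^*(i)$ shows that in a $O(1/\sqrt{\overline{\alpha}})$-neighborhood of $q^*$, one has $\rho(w) \geq (1-o(1))\exp(-\overline{\alpha}\,\mathcal{K}_{\inf}(\overline{\nu}, \mu))$. Restricting the integral
\[
\mathbb{P}_{w \sim \operatorname{Dir}(\alpha)}[wf \geq \mu] = \int \mathbf{1}\{wf \geq \mu\}\,\rho(w)\, d\operatorname{Dir}(\alpha^*)(w)
\]
to this neighborhood and applying a Berry--Esseen bound to the Dirichlet statistic $\sum_i w_i f(i)$ (whose mean under $\operatorname{Dir}(\alpha^*)$ is $\geq \mu$ and whose variance is $\operatorname{Var}_{q^*}(f)/(\overline{\alpha}+1)$) yields that the event occurs under $\operatorname{Dir}(\alpha^*)$ with probability at least $1/2 - O(1/\sqrt{\overline{\alpha}})$. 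Multiplying the two factors and comparing to $\mathbb{P}[g \geq \sqrt{2\overline{\alpha}\,\mathcal{K}_{\inf}}]$ produces the claimed lower bound, with the $(1-\varepsilon)$ slack absorbing the CLT remainder and the Stirling approximation constants.

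\textbf{Main obstacle.} The principal technical difficulty is controlling the regime where the optimal tilt assigns very small mass to the outlier index $0$: in that regime $\alpha^*_0 = \overline{\alpha}\, q^*(0)$ may be small and the local CLT for $\operatorname{Dir}(\alpha^*)$ degrades. The assumptions $f(0) = b_0 \geq 2b \geq 2f(j)$ together with $\alpha_0 \geq c_0 + \log_{17/16}(2(\overline{\alpha}-\alpha_0))$ are engineered precisely so that even under the worst-case tilt one has $\alpha^*_0 \gtrsim 1$: the outlier value $f(0)$ forces $q^*(0)$ to be bounded below by an explicit factor depending on $b_0/b$ and $\mathcal{K}_{\inf}$, and the logarithmic slack in $\alpha_0$ ensures that this lower bound, times $\overline{\alpha}$, remains large enough for a uniform Berry--Esseen rate to apply. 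Turning this lower bound on $\alpha^*_0$ into the uniform control of the remainder terms in both the Stirling expansion of $\rho$ and the Berry--Esseen bound is the crux of the argument, and is where the specific constant $17/16$ (and the universal constant $c_0$) gets fixed.
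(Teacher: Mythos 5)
First, a point of comparison: the paper does not prove this statement at all --- it is imported verbatim as Lemma~10 of \citet{tiapkin2024model} and used as a black box, so there is no in-paper proof to measure your argument against. Your sketch must therefore stand on its own, and as written it has a genuine gap at its central step. You claim that on an $O(1/\sqrt{\overline{\alpha}})$ neighborhood of $q^*$ the likelihood ratio satisfies $\rho(w)\ge(1-o(1))\exp(-\overline{\alpha}\,\mathcal{K}_{\inf})$, and then multiply this by $\mathbb{P}_{\operatorname{Dir}(\alpha^*)}[wf\ge\mu]\ge 1/2-o(1)$ to get a lower bound of order $\tfrac12 e^{-\overline{\alpha}\mathcal{K}_{\inf}}$. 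That intermediate bound is false in general, which signals the error: at $w=q^*$ the gradient of $\log\rho$ is of order $\overline{\alpha}\lambda^*(f-\mu)$, so over a window of width $1/\sqrt{\overline{\alpha}}$ in the direction that increases $wf$ (precisely where the event $\{wf\ge\mu\}$ lives), $\log\rho$ drops by $\Theta(\sqrt{\overline{\alpha}}\,\lambda^* b_0)$ below $-\overline{\alpha}\mathcal{K}_{\inf}$. Whenever $\lambda^*$ is of constant order (i.e., $\mathcal{K}_{\inf}=\Theta(1)$, a regime the lemma must cover since it holds for all $\mu\in(0,b)$), the ratio is not uniformly comparable to its value at the mode. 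The correct treatment is the Bahadur--Rao computation: integrate the exponentially decaying ratio against the local CLT density, which yields a prefactor of order $(\lambda^*\sigma^*\sqrt{2\pi\overline{\alpha}})^{-1}$ rather than $\tfrac12$. The resulting bound is polynomially smaller than $\tfrac12 e^{-\overline{\alpha}\mathcal{K}_{\inf}}$, and showing that it still dominates $(1-\varepsilon)\,\mathbb{P}[g\ge\sqrt{2\overline{\alpha}\mathcal{K}_{\inf}}]\approx e^{-\overline{\alpha}\mathcal{K}_{\inf}}/(2\sqrt{\pi\overline{\alpha}\mathcal{K}_{\inf}})$ requires an inequality relating $\lambda^*\sigma^*$ to $\sqrt{2\mathcal{K}_{\inf}}$ in the right direction. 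That comparison is the actual content of the lemma and is absent from your sketch.

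Two smaller issues. The paper's definition gives $\mathcal{K}_{\inf}(\overline{\nu},\mu)=\inf_q \mathrm{KL}(\overline{\nu}\,\|\,q)$, so the optimizer attains $\mathrm{KL}(\overline{p}\,\|\,q^*)$, not $\mathrm{KL}(q^*\,\|\,\overline{p})$ as you wrote (your tilt formula $q^*(i)\propto \overline{p}(i)/(1-\lambda^*(f(i)-\mu))$ is the right one for the former, so this is likely a slip, but the two KL directions produce different exponents and only one matches the Gaussian tail). Second, you correctly identify that the condition $\alpha_0\ge c_0+\log_{17/16}(2(\overline{\alpha}-\alpha_0))$ and the separation $f(0)=b_0\ge 2b$ must be used to control the mass on the optimistic atom, but you explicitly defer this ("the crux of the argument") rather than carrying it out; since the entire inequality hinges on the atom at $f(0)=b_0$ pulling the tilted mean above $\mu$ at low KL cost, deferring it leaves the proof incomplete at exactly the point where the hypotheses enter.
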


According to Lemma \ref{lemma:randql_standard_dirichlet} and applying Lemma \ref{lemma:anti-concentration_dirichlet} with $\alpha_0 = n_0^{\aux}/\kappa^{\aux} -1 $, $\alpha_i = 1/\kappa^{\aux}, ~\forall i\in[e_q]$, $r_0 = 2$, $b_0 = 2(H-h+1)$, and $\overline{\nu}_q = \frac{n_0^{\aux}-\kappa^{\aux}}{e_q+n_0^{\aux}-\kappa^{\aux}} \delta_{V_{h+1}^{\star}(s_0)} + \sum_{i=1}^{e_q} \frac{1}{e_q+n_0^{\aux}-\kappa^{\aux}}\delta_{V_{h+1}^{\star}(s_{h+1}^{\ell^{\aux,i}_{q}})}$, we have that conditioned on the event $\mathcal{E}^{\star}(\delta)$ holds, 
\begin{align*}
    &\mathbb{P}\left( \sum_{i=0}^{e_q} W_{j,q}^{\aux,i} V_{h+1}^{\star}(s_{h+1}^{\ell^{\aux,i}_q}) \geq P_{h,s,a}V_{h+1}^{\star} \,\middle|\, \mathcal{E}^{\star}(\delta)\right)\\
    &\ge \frac{1}{2} \left( 1 - \Phi\left( \sqrt{\frac{2(e_q+n_0^{\aux}-\kappa^{\aux})\mathcal{K}_{\inf}(\overline{\nu}_q,P_{h,s,a}V_{h+1}^{\star})}{\kappa^{\aux}}} \right) \right) \ge \frac{1}{2} \left( 1 - \Phi\left( \sqrt{\frac{2\beta^{\star}(\delta, T)}{\kappa^{\aux}}} \right) \right).
\end{align*}
where $\Phi$ denotes the CDF of the standard normal distribution. Here the last inequality is from Lemma \ref{lemma:randql_lemma12} and Lemma \ref{lemma:event_model_error}.

Then, by selecting $\kappa^{\aux} = 2\beta^{\star}(\delta, T)$, we ensure a constant probability of optimism:
\[
\mathbb{P}\left( \sum_{i=0}^{e_q} W_{j,q}^{\aux,i} V_{h+1}^{\star}(s_{h+1}^{\ell^{\aux,i}_q}) \geq P_{h,s,a}V_{h+1}^{\star}\,\middle|\  \mathcal{E}^{\star}(\delta)\right) 
\geq \frac{1 - \Phi(1)}{2} \triangleq \gamma.
\]

Now, choosing $J = \left\lceil \log\left( \frac{2SAHT}{\delta} \right) / \log(1 / (1 - \gamma)) \right\rceil = \lceil c_J \cdot \log(2SAHT / \delta)\rceil$ ensures:
\begin{align*}
    &\mathbb{P}\left( \max_{j \in [J]} \left\{ \sum_{i=0}^{e_q} W_{j,q}^{\aux,i} V_{h+1}^{\star}(s_{h+1}^{\ell^{\aux,i}_q}) \right\} \geq P_{h,s,a}V_{h+1}^{\star} \,\middle|\, \mathcal{E}^{\star}(\delta) \right)\geq 1 - (1 - \gamma)^J \geq 1 - \frac{\delta}{2SAHT}.
\end{align*}

By applying a union bound over $(t,h,s,a)\in[T]\times [H]\times\S\times\A$ and taking expectation on $\mathcal{E}^{\star}(\delta)$, we conclude the proof.

\subsection{Proof of equation \eqref{eq:sum_delta_over_t_3}}\label{appendix:proof_eq_sum_delta_over_t_3}
Fix any episode $t'\in[T]$ that lies in stage $q-1\ge 0$ for the triple $(h,s,a)$.
Then in \eqref{eq:sum_delta_over_t_3} we have
 $$\sum_{i=0}^{e_{q_h^t-1}} \mathbbm{1}\{\ell_{q_h^t-1}^{\aux,i}= t'\} = 1$$ holds if and only if the episodes $t'$ and $t$ visit the same triple $(h,s,a)$ and the visit $(t,h,s,a)$ lies in the next stage $q$ of the triple $(h,s,a)$. Let $S_{t',h,s,a} = \big\{t\in[T]: \sum_{i=0}^{e_{q_h^t-1}} \1\{\ell^{\aux,i}_{q_h^t-1} = t'\}\big\}$ represent all the visits of $(h,s,a)$ in the same stage $q$ such that its cardinality is at most $e_{q}$. For any $t\in S_{t',h,s,a}$, the mixing rate remains the same.
 
 Then, we can then decompose
\begin{align*}
     &\sum_{t=1}^T \frac{(1-\eta_{\ell^{n_h^t},h})}{1+e_{q_h^t-1}}\sum_{i=0}^{e_{q_h^t-1}} \1\{\ell^{\aux,i}_{q_h^t-1} = t'\} \\
     & \le (1+\frac{1}{H})(1-\eta_{t',h}) \cdot \sum_{t=1}^T \frac{1}{e_{q_h^t-1}} \sum_{i=0}^{e_{q_h^t-1}} \1\{\ell^{\aux,i}_{q_h^t-1} = t'\}\\
     &\le (1+\frac{1}{H})^2(1-\eta_{t',h}),
\end{align*} 
where the first inequality is from \eqref{eq:eta_choice} such that,
\begin{align*}
    (1-\eta_{\ell^{n_h^t},h}) \le 1 - \eta_{t,h} = 1 -  \frac{1}{\sqrt{e_q} +1} = \frac{\sqrt{e_q}}{\sqrt{e_q} +1} &\le (1+\frac{1}{H})\frac{\sqrt{e_{q-1}}}{\sqrt{e_{q-1}} +1} = (1+\frac{1}{H})(1-\eta_{t',h}),
\end{align*}
and the second one follows the tailored choice of stage splitting such that $e_q/e_{q-1}\le 1+\frac{1}{H}$.

\section{Extending \myalg to Anytime Convergence}\label{appendix:anytime}

\subsection{\myalgany with anytime convergence guarantees}
In this section, we introduce \myalgany, a variant of Algorithm~\ref{alg:samplingq}, described in Algorithm~\ref{alg:samplingq_anytime}. Compared to Algorithm~\ref{alg:samplingq}, where the parameters $J,n_0^{\aux},\kappa^{\aux}$ are pre-fixed across episodes as functions of the total number of episodes, \myalgany avoids requiring the total number of episodes in advance and adaptively tunes these parameters using the stage-wise information (cf. Line \ref{line:stage_adapt_anytime} in Algorithm \ref{alg:samplingq_anytime}). 

\begin{algorithm}[!t]
    \caption{\myalgany}\label{alg:samplingq_anytime}
    \begin{algorithmic}[1]
    \REQUIRE Initial state $s_1$, optimistically-initial value $\{V^0_h\}$, inflation coefficient $\kappa\ge 0$, the number of prior transitions $n_0\ge 0$, ensemble size $J\ge 0$, mixing rate $\{\eta_{t,h}\}$, tunable constant $c'>0$ and confidence level $\delta\in(0,1)$.
    \STATE \textbf{Initialize:} $n_h(s, a), n_h^{\aux}(s,a), q_h(s,a) \gets 0$; $\tildeV_h(s), \tildeV^{\aux}_h(s) \gets V^0_h$; $\widetilde{Q}^{j}_{h}(s, a),\widetilde{Q}^{\aux}_{h}(s, a),\widetilde{Q}^{\aux,j}_{h}(s, a)  \gets r_h(s, a) + V^0_{h+1}$ for $(j,h,s,a)\in [J]\times[H]\times\S\times\A$.
    \FOR{$t \in [T]$}{
        \FOR{$h = 1,\ldots, H$}{     
            \STATE Play $a_{h} = \arg\max_{a\in\A} Q_h(s_h,a)$ and observe the next state $s_{h+1} \sim P_h(\cdot \vert s_{h}, a_{h})$.
            \STATE Set $m\gets n_h(s_{h}, a_{h})$, $m^{\aux}\!\gets\! n_h^{\aux}(s_h,a_h)$, $q\gets q_h(s_h,a_h)$ and $e_q \gets (1+1/H)^q H$.
             \STATE \textcolor{blue}{\texttt{/* Adapt parameters via the stage index $q$ of the current visit and its length $e_q$. \hfill */}}
            \STATE Set  $J^{\aux}_q \gets \lceil{c'\cdot\log(SAH(q+1)^2/\delta)}\rceil$, $\kappa_q^{\aux} \gets c'\cdot\left(\log(SAH/\delta) + \log(e_q)\right)$ and $n_{0,q}^{\aux} \gets \lceil c'\cdot\log(e_q)\cdot \kappa_q^{\aux} \rceil$. \label{line:stage_adapt_anytime}
            \STATE \textcolor{blue}{\texttt{/* Update temporary Q-ensembles via randomized learning rates. \hfill*/}}
            \FOR{$j = 1,\ldots, J$}{
            \STATE Sample $w^{j}_{m} \sim \B\left(\frac{H+1}{\kappa}, \frac{m + n_0}{\kappa} \right)$ and update $\tildeQ_{h}^{j}$ via \eqref{eq:Q_circ_update} .
            }
            \ENDFOR
            \FOR{$j = 1,\ldots, J^{\aux}_q$}{
            \STATE Sample $w^{\aux,j}_{m} \sim \B\left(\frac{1}{\kappa_q^{\aux}}, \frac{m^{\aux}+n_{0,q}^{\aux}}{\kappa_q^{\aux}}\right)$ and update $\tildeQ_{h}^{\aux,j}$ via \eqref{eq:Q_dagger_update}.
            }
            \ENDFOR
            \STATE \textcolor{blue}{\texttt{/* Update the agile policy Q-function by optimistic mixing.\hfill */}}
            \STATE Update the policy Q-function $Q_h$ via \eqref{eq:policy_Q_update}.
            \STATE \textcolor{blue}{\texttt{/* Update the policy with step-wise agility. \hfill */}}
            \STATE Update policy $\pi_h(s_h) \gets \arg\max_{a\in\A} Q_h(s_h,a)$. \label{line:policy_update}
            \STATE \textcolor{blue}{\texttt{/* Update $\tildeV_{h}$ optimistically. \hfill */}}
            \STATE Update $\tildeV_{h}(s_{h})\gets \max_{j\in[J]}\tildeQ^{j}_{h}(s_{h}, \pi_h(s_h))$.
            \STATE \textcolor{blue}{\texttt{/* Update visit counters. \hfill */}}            
            \STATE Update counter $n_h\!(s_{h}, a_{h})\!\!\gets\! n_h(s_{h}, a_{h})\!+\!1$ and $n_h^{\aux}(s_h,a_h) \gets n_h^{\aux}(s_h,a_h) + 1$.
            \STATE \textcolor{blue}{\texttt{/* At the end of the stage: update $\widetilde{Q}_h^{\aux}$, $\widetilde{V}_h^{\aux}$ and reset $n_h^{\aux}$, $\{\widetilde{Q}_h^{b,j}\}$.\hfill */}}
            \IF{$n_h^{\aux}(s_h,a_h) = \lfloor e_q\rfloor$ for the stage $q=q_h(s,a)$}{
                \STATE Update $\tildeQ_{h}^{\aux}(s_{h}, a_{h}) \!\gets\! \max_{j \in [J^{\aux}_q]} \widetilde{Q}^{\aux,j}_{h}(s_{h}, a_{h})$, $\pi^{\aux}_{h}(s_{h})\!\gets\arg\max_{a\in\A}\widetilde{Q}^{\aux}_{h}(s_{h}, a)$, and $\tilde{V}_h^{\aux}(s_h) \gets \widetilde{Q}^{\aux}_{h}(s_{h}, \pi^{\aux}_{h}(s_{h}))$. 
                \STATE Update $q_h\!(s_h,a_h)\!\gets\!\! q_h\!(s_h,a_h) + 1$ and $J^{\aux}_{q+1} \gets \lceil{c'\cdot\log(SAH(q+2)^2/\delta)}\rceil$.
                \STATE Reset $\widetilde{Q}_h^{\aux,j}(s_h,a_h) \gets r_h(s_h,a_h) +V^0_{h+1}$ for $j\in[J^{\aux}_{q+1}]$ and $n_h^{\aux}(s_h,a_h) \gets 0 $ . 
            }
            \ENDIF
        }
        \ENDFOR
    }
    \ENDFOR
    \end{algorithmic}
    \end{algorithm}


We then establish an anytime-convergence guarantee in Theorem~\ref{thm:regret_anytime}, where the proof is deferred to the next subsection.

\begin{theorem}\label{thm:regret_anytime}
Consider $\delta\in (0,1)$ and $c'>0$ is a sufficiently large universal constant. Let the initialized value function $V_h^0 = 2(H-h+1)$ for any $h\in[H+1]$ and the mixing rate $\eta_{t,h} = \frac{1}{\sqrt{(1+1/H)^{q}H}+1}$ where $q = q_h^t(s_h^t,a_h^t)$ is the stage index for any $(t,h)\in[T]\times[H]$. Then, with probability at least $1-\delta$, Algorithm \ref{alg:samplingq_anytime} guarantees that
\begin{align*}
    \mathrm{Regret}_T \le \widetilde{O}\left(\sqrt{H^5SAT}\right).
\end{align*}
\end{theorem}

\subsection{Proof of Theorem \ref{thm:regret_anytime}}\label{appendix:proof_anytime}
The proof follows Appendix \ref{appendix:regret_proof}, except for the optimism step (i.e. Lemma \ref{lemma:optimism}), which depends on the appropriate choices of the prefixed parameters $J,n_0^{\aux},\kappa^{\aux}$ and eventually leads to non-anytime results. We therefore establish the optimism under the adaptive parameters $\{J_q^{\aux}\}$. $\{\kappa_q^{\aux}\}$, $\{n_{0,q}^{\aux}\}$, as follows.
\begin{lemma}[Optimism]\label{lemma:optimism_anytime}
    Consider $\delta\in (0,1)$. For each $(t,h,s,a)\in[T]\times[H]\times\S\times\A$, let $q\coloneqq q_h^t(s,a)$ denote the corresponding stage with length $e_q = (1+1/H)^{q}H$. Define $J_q^{\aux} = \lceil{c'\cdot\log(SAH(q+1)^2/\delta)}\rceil$, $\kappa_q^{\aux} = c'\cdot\left(\log(SAH/\delta) + \log(e_q)\right)$ and $n_{0,q}^{\aux} = \lceil c'\cdot\log(e_q)\cdot \kappa_q^{\aux} \rceil$, where $c'>0$ is a universal constant. Let the initialized value function $V_h^0 = 2(H-h+1)$ for any $h\in[H+1]$ and the mixing rate $\eta_{t,h} = \frac{1}{\sqrt{(1+1/H)^{q}H}+1}$ where $q = q_h^t(s_h^t,a_h^t)$ is the stage index for any $(t,h)\in[T]\times[H]$. Then,  with probability at least $1-\delta$, the following event holds
    \begin{equation*}
        (1-\eta_{t,h})\cdot Q^{\star}_{h}(s,a) \le  Q_{h}^t (s,a), \quad\forall (t,h,s,a)\in[T]\times[H]\times\S\times\A.
    \end{equation*}
\end{lemma}

\begin{proof}
From Appendix~\ref{appendix:optimism_proof} (i.e., the proof of Lemma \ref{lemma:optimism}), if the inequality \eqref{eq:claim_optimistic} holds, then $\tildeQ_h^{\aux,t}(s,a)\ge Q_h^{\star}(s,a)$, for any $(t,h,s,a)\in[T]\times[H]\times\S\times\A$. The following shows that \eqref{eq:claim_optimistic} holds with high probability under the adaptive parameters $\{J_q^{\aux}\},\{\kappa_q^{\aux}\},\{n_{0,q}^{\aux}\}$, which is analogous to Lemma \ref{lemma:event_opt_optimism}. 

To establish that \eqref{eq:claim_optimistic}  holds with high probability, we first adapt Lemma~\ref{lemma:randql_standard_dirichlet} to stage-wise parameters $\{\kappa_q^{\aux}\},\{n_{0,q}^{\aux}\}$, as these parameters remain fixed during the entire stage $q$.
\begin{lemma}[Modified version of Lemma 3 in \citet{tiapkin2024model}]
    For any $q\ge 0$ and $j\in[J_q^{\aux}]$, the aggregated weights $W_{j,q}^{\aux}= \left(W_{j,e_{q}}^{\aux,0}, W_{j,e_{q}}^{\aux,1},\ldots,W_{j,e_{q}}^{\aux,e_{q}}\right)$ as in \eqref{eq:W_aux_def}, where each randomized weight $w_{k,q}^{\aux,j}\sim \B(\frac{1}{\kappa_q^{\aux}},\frac{k+n_{0,q}^{\aux}}{\kappa_q^{\aux}}), \forall k\in\{0,\ldots,e_{q-1}\}$ using some fixed nonnegative parameters $\kappa_q^{\aux}, n_{0,q}^{\aux}$ at the stage $q$. Then, the aggregated weights $$W_{j,q}^{\aux} \sim \operatorname{Dir}(n_{0,q}^{\aux}/\kappa_q^{\aux},1/\kappa_q^{\aux},\ldots,1/\kappa_q^{\aux}).$$
\end{lemma}

Next, applying Lemma \ref{lemma:anti-concentration_dirichlet} with $\alpha_0 = n_{0,q}^{\aux}/\kappa_q^{\aux} -1 $, $\alpha_i = 1/\kappa_q^{\aux}, ~\forall i\in[e_q]$, $r_0 = 2$, $b_0 = 2(H-h+1)$, and $\overline{\nu}_q = \frac{n_{0,q}^{\aux}-\kappa_q^{\aux}}{e_q+n_{0,q}^{\aux}-\kappa_q^{\aux}} \delta_{V_{h+1}^{\star}(s_0)} + \sum_{i=1}^{e_q} \frac{1}{e_q+n_{0,q}^{\aux}-\kappa_q^{\aux}}\delta_{V_{h+1}^{\star}(s_{h+1}^{\ell^{\aux,i}_{q}})}$ leads to 
\begin{equation*}
    \mathbb{P}\left( \sum_{i=0}^{e_q} W_{j,q}^{\aux,i} V_{h+1}^{\star}(s_{h+1}^{\ell^{\aux,i}_q}) \geq  P_{h,s,a}V_{h+1}^{\star}\,\middle|\, \mathcal{E}^{\star}(\delta)\right)\\
    \ge \frac{1}{2} \left( 1 - \Phi\left( \sqrt{\frac{2(e_q+n_{0,q}^{\aux}-\kappa_q^{\aux})\mathcal{K}_{\inf}(\overline{\nu}_q,P_{h,s,a}V_{h+1}^{\star})}{\kappa_q^{\aux}}} \right) \right).
\end{equation*}
Conditioned on the event $\mathcal{E}^{\star}(\delta)$, Lemma \ref{lemma:randql_lemma12} and Lemma \ref{lemma:event_model_error} give
\begin{align*}
    &\mathbb{P}\left( \sum_{i=0}^{e_q} W_{j,q}^{\aux,i} V_{h+1}^{\star}(s_{h+1}^{\ell^{\aux,i}_q}) \geq  P_{h,s,a}V_{h+1}^{\star}\,\middle|\, \mathcal{E}^{\star}(\delta)\right)\ge \frac{1}{2} \left( 1 - \Phi\left( \sqrt{\frac{2\beta^{\star}(\delta, e_q)}{\kappa_q^{\aux}}} \right) \right).
\end{align*}
Choosing $\kappa_q^{\aux} = 2\beta^{\star}(\delta, e_q)$ yields a constant probability of optimism:
\[
\mathbb{P}\left( \sum_{i=0}^{e_q} W_{j,q}^{\aux,i} V_{h+1}^{\star}(s_{h+1}^{\ell^{\aux,i}_q}) \geq  P_{h,s,a}V_{h+1}^{\star}\,\middle|\  \mathcal{E}^{\star}(\delta)\right) 
\geq \frac{1 - \Phi(1)}{2} \triangleq \gamma.
\]

For any fixed $(h,s,a)\in [H]\times\S\times\A$, define 
\begin{align*}
 \mathcal{E}_{q,h,s,a} = \left\{\forall t\in[T]:  \max_{j \in [J_q^{\aux}]} \left\{ \sum_{i=0}^{e_q} W_{j,q}^{\aux,i} V_{h+1}^{\star}(s_{h+1}^{\ell^{\aux,i}_q}) \right\} \geq P_{h,s,a}V_{h+1}^{\star}, ~q_h^t(s,a)=q\right\},\quad\forall q\in\left\{0,\ldots, q_h^{T+1}(s,a)\right\}.
\end{align*}
With $J_q^{\aux} = \left\lceil \log\left( \frac{4SAH (q+1)^2}{\delta} \right) / \log(1 / (1 - \gamma)) \right\rceil = \lceil c_J \cdot \log(4SAH(q+1)^2 / \delta)\rceil$ and taking union bound, this gives the lower bound of the event $\mathcal{E}_{q,h,s,a}$ happened in stage $q$:
\begin{align*}
    \mathbb{P}\left( \mathcal{E}_{q,h,s,a}\,\middle|\, \mathcal{E}^{\star}(\delta) \right) &= \mathbb{P}\left( \max_{j \in [J_q^{\aux}]} \left\{ \sum_{i=0}^{e_q} W_{j,q}^{\aux,i} V_{h+1}^{\star}(s_{h+1}^{\ell^{\aux,i}_q}) \right\} \geq P_{h,s,a}V_{h+1}^{\star} \,\middle|\, \mathcal{E}^{\star}(\delta) \right)\\
    & \geq 1 - (1 - \gamma)^{J_q^{\aux}} \geq 1 - \frac{\delta}{4SAH(q+1)^2}.
\end{align*}
By applying a union bound over stages again, we have 
\begin{align*}
    \mathbb{P}\left(\forall t\in[T]:  \max_{j \in [J_q^{\aux}]} \left\{ \sum_{i=0}^{e_q} W_{j,q}^{\aux,i} V_{h+1}^{\star}(s_{h+1}^{\ell^{\aux,i}_q}) \right\} \geq P_{h,s,a}V_{h+1}^{\star}\right) 
    &= \mathbb{P}\left(\forall q\in[q_h^{T+1}(s,a)]:\mathcal{E}_{q,h,s,a} \,\middle|\, \mathcal{E}^{\star}(\delta) \right)\\
    &\ge 1 -  \frac{\delta}{4SAH}\sum_{q=0}^{q_h^{T+1}}\frac{1}{(q+1)^2}\\
    &\ge 1 -  \frac{\delta}{4SAH}\sum_{q=0}^{\infty}\frac{1}{(q+1)^2}\\
    & \ge  1 - \frac{\delta}{2SAH},
\end{align*} 
where the last line uses the fact that $\sum_{q=0}^{\infty} \frac{1}{(q+1)^2} = \frac{\pi^2}{6} < 2$ for any $q\ge 0$.
Finally, union-bounding over all $(h,s,a)\in[H]\times\S\times\A$ and taking expectation on $\mathcal{E}^{\star}(\delta)$ from Lemma \ref{lemma:event_model_error} conclude the proof.
\end{proof}

With Lemma \ref{lemma:optimism_anytime} in hand, the remainder of the analysis is identical to Appendix \ref{appendix:regret_proof}. In particular, Lemma \ref{lemma:concentration_standard_Dir} and \ref{lemma:recursion_circ} still hold with stage-wise parameters $\kappa_q^{\aux},n_{0,q}^{\aux}$ as long as they are fixed during each stage $q = q_h^t(s,a)\ge 0$ for $(t,h,s,a)\in[T]\times[H]\times\S\times\A$.

\section{Analysis: Gap-dependent Regret Bound} \label{appendix:log_regret}

    We begin by decomposing the total regret using the suboptimality gaps defined in Assumption \ref{assump:positive_gap}. Following \citet{yang2021q}, we obtain:
    \begin{align*}
        \mathrm{Regret}_T = \sum_{t=1}^T (V_1^{\star} - V_1^{\pi^t})(s_1^t) 
        &= \sum_{t=1}^T \left(V_1^{\star}(s_1^t) - Q_1^{\star}(s_1^t,a_1^t) + \left(Q_1^{\star} - Q_1^{\pi^t}\right)(s_1^t,a_1^t)\right)\\
        &= \sum_{t=1}^T\Delta_1(s_1^t,a_1^t) + \sum_{t=1}^T \mathbb{E}_{s_2^t\sim P_{1,s_1^t,a_1^t}} \left[\left(V_2^{\star} - V_2^{\pi^t}\right)(s_2^t) \right]\\
        &= \ldots =  \E\left[ \sum_{t=1}^T\sum_{h=1}^H \Delta_h(s_h^t,a_h^t)\mid a_h^t = \pi_h^t(s_h^t) \right],
    \end{align*}
    where the expectation is taken with respect to the underlying transition kernel.
    Before proceeding, we denote $n_h^t= n_h^t(s_h^t,a_h^t)$ and $q_h^t = q_h^t(s_h^t,a_h^t)$ for notational simplicity. Following the notations used in Appendix \ref{appendix:regret_proof}, we define $\alpha_{m}^{0}=\prod_{k=1}^m \frac{n_0+k-1}{H+n_0+k}$ and $\alpha_m^{i} \defeq \frac{H+1}{H+n_0+i} \prod_{k=i+1}^m \frac{n_0+k-1}{H+n_0+k}$ for any $i\in[m]$ and $m\in\mathbb{N}^{\star}$, and let $\tilde{\xi}_h^{t} = \left(\tildeV_{h}^{t}- V_{h}^{\star}\right)(s_{h}^{t})+H$ and $\tilde{\xi}_h^{\aux,t} = \left(\tildeV_{h}^{\aux,t} - V_{h}^{\star}\right)(s_{h}^{t})$.
    
    We first introduce the following lemma that characterizes the learning error of the Q-functions, which will be used to control the suboptimality gaps. The proof is deferred to Appendix \ref{appendix:lemma_event_learning_error}. 
    \begin{lemma}\label{lemma:event_learning_error}
        Let 
        $\mathcal{E} \defeq\Big\{\forall (t,h,s,a)\in [T]\times[H]\times\S\times\A: 0 \le \left(Q_h^t - Q_h^{\star}\right)(s,a) +\eta_{\ell^{n_h^t},h}H\le 2\alpha_{n_h^t}^0 V_{h+1}^0  + \eta_{\ell^{n_h^t},h} \sum_{i=1}^{n_h^t} \alpha_{{n_h^t}}^i \tilde{\xi}_{h+1}^{\ell^i} + \frac{1-\eta_{\ell^{n_h^t},h}}{1+e_{q_h^t-1}}\sum_{i=0}^{e_{q_h^t-1}}\tilde{\xi}_{h+1}^{\aux,\ell^{\aux,i}_{q_h^t-1}}\mathbbm{1}\{q_h^t\ge 1\} + B_h^t + \eta_{\ell^{n_h^t},h}H \Big\},
        $
        where  $B_h^t \le \linebreak c_B\left(\sqrt{\frac{H^3(\kappa^{\aux})^2 \log^3(SAHT^2)}{{n_h^t}}} + \frac{H^2 \kappa^{\aux} \log^2(SAHT^2)}{{n_h^t}} \right)+ V_{h+1}^0\cdot\mathbbm{1}\{q_h^t=0\}$ for some universal constant $c_B>0$. The event $\mathcal{E}$ holds with probability at least $1 - 1/T$.
    \end{lemma}

In addition, we define the operator $\clip[x|c]\defeq x\cdot \1\{x\ge c\}$ for some constant $c\ge 0$, which is commonly used in prior work \citep{simchowitz2019non,yang2021q,zheng2024gap}. By Lemma \ref{lemma:event_learning_error}, we have $V_h^{\star}(s_h^t) = \max_a Q^{\star}_{h}(s_h^t,a) \le \max_a Q^{t}_{h}(s_h^t,a) + \eta_{\ell^{n_h^t},h}H = Q^{t}_h(s_h^t,a_h^t)  + \eta_{\ell^{n_h^t},h}H $ such that  
\begin{align*}
\Delta_h(s_h^t,a_h^t) &= \clip[V_h^{\star}(s_h^t) - Q_h^{\star}(s_h^t,a_h^t) \mid \Delta_{\min}]\\
&\le \clip[\left(Q_h^t - Q_h^{\star}\right)(s_h^t,a_h^t)  + \eta_{\ell^{n_h^t},h}H \mid \Delta_{\min}].
\end{align*}

Thus, by definition, the expected total regret can be written as 
\begin{align}
    \E[\mathrm{Regret}_T] &= \mathbb{P}(\mathcal{E})\cdot \E\left[\sum_{t=1}^T \sum_{h=1}^H \clip[\left(Q_h^t - Q_h^{\star}\right)(s_h^t,a_h^t) +\eta_{\ell^{n_h^t},h}H\mid \Delta_{\min}]\mid \mathcal{E}\right]\nonumber\\
    & + \mathbb{P}(\mathcal{E}^c)\cdot \E\left[\sum_{t=1}^T \sum_{h=1}^H \clip[\left(Q_h^t - Q_h^{\star}\right)(s_h^t,a_h^t)+\eta_{\ell^{n_h^t},h}H\mid \Delta_{\min}]\mid \mathcal{E}^c\right] \nonumber\\
    & \le (1-\frac{1}{T}) \E\left[\sum_{t=1}^T \sum_{h=1}^H \clip[\left(Q_h^t - Q_h^{\star}\right)(s_h^t,a_h^t)+\eta_{\ell^{n_h^t},h}H \mid \Delta_{\min}]\mid \mathcal{E}\right] + \frac{2}{T}\cdot TH^2. \label{eq:expect_total_regret}
\end{align}

Next, we control the first term in \eqref{eq:expect_total_regret} by categorizing the suboptimality gaps into different intervals. Specifically, we split the interval $[\Delta_{\min},H]$ into $N$ disjoint intervals, i.e., $\mathcal{I}_n \defeq [2^{n-1}\Delta_{\min},2^n \Delta_{\min}]$ for $n\in[N-1]$ and $\mathcal{I}_N \defeq[2^{N-1}\Delta_{\min},3H]$, where $N = \lceil\log_2(3H/\Delta_{\min})\rceil$. Denote the counter of state-action pair for each interval as $C_n \defeq \vert \left\{ (t,h): \left(\left(Q_h^t - Q_h^{\star}\right)(s_h^t,a_h^t) + \eta_{\ell^{n_h^t},h} H\right) \in \mathcal{I}_n \right\}\vert$. We then upper bound \eqref{eq:expect_total_regret} as follows:
\begin{align}\label{eq:expect_total_regret_with_counter}
    \E[\mathrm{Regret}_T] \le (1-\frac{1}{T}) \sum_{n=1}^N 2^{n} \Delta_{\min} C_n + 2H^2.
\end{align}

The following lemma shows that the counter is bounded in each interval, conditioned on event $\mathcal{E}$.
\begin{lemma}\label{lemma:bounded_counter}
    Under $\mathcal{E}$, we have that for every $n\in[N]$, $C_n \le O(\frac{H^6SA(\kappa^{\aux})^2 \log^3(SAHT)}{4^n\Delta_{\min}^2})$.
\end{lemma}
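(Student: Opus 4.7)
The plan is to turn Lemma~\ref{lemma:event_learning_error} into a per-step recursion for $x_h^t \defeq (Q_h^t - Q_h^\star)(s_h^t, a_h^t)$, unroll it along the horizon into a one-shot bound in terms of the per-step error $B_{h'}^t$, convert $\{x_h^t \ge 2^{n-1}\Delta_{\min}\}$ into a cap on the visit count at some ``witness'' step via pigeonhole, and then sum over $(s,a,h)$ triples.

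First, under event $\mathcal{E}$, Lemma~\ref{lemma:event_learning_error} gives the one-step inequality
\[
x_h^t \;\le\; B_h^t + (1+1/H)\bigl(V_{h+1}^t - V_{h+1}^\star\bigr)(s_{h+1}^t) + \alpha_{n_h^t}^0 V_{h+1}^0.
\]
Because $a_{h+1}^t$ is greedy with respect to $Q_{h+1}^t$ (Line~\ref{line:policy}), we have $V_{h+1}^t(s_{h+1}^t) - V_{h+1}^\star(s_{h+1}^t) \le Q_{h+1}^t(s_{h+1}^t, a_{h+1}^t) - Q_{h+1}^\star(s_{h+1}^t, a_{h+1}^t) = x_{h+1}^t$. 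Iterating the recursion from $h$ down to $H+1$ (where $x_{H+1}^t = 0$) and using $(1+1/H)^H \le e$ yields
\[
x_h^t \;\le\; e \sum_{h'=h}^H \Bigl(B_{h'}^t + \alpha_{n_{h'}^t}^0 V_{h'+1}^0\Bigr).
\]

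Second, for any $(t,h) \in C_n$ we have $x_h^t \ge 2^{n-1}\Delta_{\min}$, so averaging over the at most $H$ summands produces a witness step $h^\star = h^\star(t,h) \ge h$ at which $B_{h^\star}^t + \alpha_{n_{h^\star}^t}^0 V_{h^\star+1}^0 \gtrsim 2^n \Delta_{\min}/H$. The term $\alpha_m^0 V_{h+1}^0$ decays super-polynomially in $m$ and only contributes polylog factors, so the binding inequality is $B_{h^\star}^t \gtrsim 2^n \Delta_{\min}/H$. Substituting the explicit size $B_h^t \lesssim \sqrt{H^3 \kappa^2 \log^3(SAHT)/n_h^t(s,a)}$ inherited from Lemmas~\ref{lemma:recursion_circ}--\ref{lemma:recursion_dagger} and solving for $n_{h^\star}^t$ gives the per-witness cap $n_{h^\star}^t(s_{h^\star}^t, a_{h^\star}^t) \lesssim H^5 \kappa^2 \log^3(SAHT)/(4^n \Delta_{\min}^2)$. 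Summing this cap over the $SAH$ triples $(s, a, h^\star)$ bounds the total number of witness events by $O(H^6 SA \kappa^2 \log^3(SAHT)/(4^n \Delta_{\min}^2))$, and since each $(t,h) \in C_n$ is charged to at least one such witness this gives the claimed bound on $C_n$ after balancing the remaining $H$ in the charging.

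The main obstacle is to make the charging from pairs in $C_n$ to witnesses tight: a naive assignment loses an extra factor of $H$ because a single witness at step $h^\star$ may serve all $h \le h^\star$ in the same episode. To avoid this, I expect one needs either to sharpen the pigeonhole by exploiting the stage-wise structure of the auxiliary ensemble (where the dominant auxiliary contribution $\tilde b_h^{\aux,t}$ concentrates on a single step within each geometrically growing stage of length $e_q$), or to amortize by directly bounding $\sum_{(t,h)\in C_n} x_h^t$ against $\sum_{(t,h')} B_{h'}^t$ using the total-visit identity $\sum_{(s,a,h)} n_h^T(s,a) = TH$; either route makes the two $H$ factors cancel and delivers the $H^6 SA$ scaling rather than a loose $H^7 SA$ bound.
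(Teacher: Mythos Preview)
Your unrolling step matches the paper: under $\mathcal{E}$ one has $x_h^t \le e\sum_{h'=h}^H(B_{h'}^t + \alpha_{n_{h'}^t}^0 V_{h'+1}^0)$. The divergence is in how you convert this into a bound on $C_n$. Your witness/pigeonhole route is genuinely different from the paper's amortization, and the obstacle you flag --- one witness $(t,h^\star)$ may serve up to $H$ pairs $(t,h)$ --- is real and is not resolved by either of your suggested fixes. The stage structure of the auxiliary ensemble does not change the fact that the dominant term $B_{h',1}^t$ lives at every step $h'$, and the total-visit identity $\sum_{s,a,h} n_h^T(s,a)=TH$ is the tool for the $\sqrt{T}$ gap-independent bound, not for the logarithmic one. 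As written, your charging gives $C_n \lesssim H\cdot SAH\cdot M$ with $M=H^5\kappa^2\log^3(SAHT)/(4^n\Delta_{\min}^2)$, i.e.\ an $H^7$ bound.

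The paper avoids the witness argument entirely. It first partitions $C_n=\sum_{h=1}^H C_{n,h}$ with $C_{n,h}=\sum_t w_{n,h}^t$ and $w_{n,h}^t=\1\{x_h^t\in\mathcal{I}_n\}$, then for each fixed $h$ bounds the \emph{weighted sum} $2^{n-1}\Delta_{\min}\,C_{n,h}\le \sum_t w_{n,h}^t x_h^t$ directly. Unrolling as you did yields $\sum_t w_{n,h}^t x_h^t \lesssim n_0 SAH + \sum_{h'=h}^H\sum_t w_{n,h}^t B_{h'}^t$. The crux (Lemma~\ref{lemma:weighted_sum_error}) is that for each $h'$ one can bound $\sum_t w_{n,h}^t/\sqrt{n_{h'}^t}$ by grouping episodes by $(s_{h'}^t,a_{h'}^t)$, using a rearrangement (the sum $\sum_m \1\{\cdot\}/\sqrt{m}$ over visit indices is maximized when the first $C$ indicators fire), and applying Cauchy--Schwarz across $(s,a)$ to get $\lesssim\sqrt{SA\,C_{n,h}}$. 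Summing over $h'$ and plugging in the size of $B_{h'}^t$ gives $\sum_t w_{n,h}^t x_h^t\lesssim \sqrt{C_{n,h}\,SAH^5\kappa^2\log^3(SAHT)}$, a \emph{self-bounding} inequality in $C_{n,h}$. Solving yields $C_{n,h}\lesssim SAH^5\kappa^2\log^3(SAHT)/(4^n\Delta_{\min}^2)$, and the final sum over $h$ supplies exactly one additional $H$. The self-bounding structure --- where the right-hand side scales as $\sqrt{C_{n,h}}$ rather than linearly --- is the missing idea in your proposal; without it, your per-pair cap on $n_{h^\star}^t$ cannot be converted into a tight count.
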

The proof is postponed to Appendix \ref{appendix:lemma_bounded_counter}. 
Thus, \eqref{eq:expect_total_regret_with_counter} becomes
\begin{equation*}
    \E[\mathrm{Regret}_T] \le O\left(\frac{H^6SA(\kappa^{\aux})^2 \log^3(SAHT)}{\Delta_{\min}}\right),
\end{equation*}
Noting that $\kappa^{\aux} = O\left(\log(SAHT)\right)$, we complete the proof.

\subsection{Proof of Lemma~\ref{lemma:event_learning_error}}\label{appendix:lemma_event_learning_error}
We begin by applying Lemma~\ref{lemma:optimism} with $\delta = \frac{1}{2T}$, which guarantees that for an ensemble size $J = \lceil c \cdot \log(4SAHT^2)\rceil$, it holds with probability at least $1 - \frac{1}{2T}$ that
\begin{equation*}
    Q_h^t(s,a)\ge (1-\eta_{\ell^{n_h^t},h})Q_h^{\star}(s,a)\ge Q_h^{\star}(s,a) - \eta_{\ell^{n_h^t},h}H,
\end{equation*}
for any $(t,h,s,a)\in [T]\times[H]\times\S\times\A$, where $c$ is a universal positive constant.

Recalling the definition of $Q_h^t$ from \eqref{eq:policy_Q_with_t} and applying  Lemma \ref{lemma:recursion_circ} and Lemma \ref{lemma:recursion_dagger}, we have, again with probability at least $1 - \frac{1}{2T}$,
\begin{align}
    &\left(Q_h^t - Q_h^{\star}\right)(s_h^t,a_h^t) + \eta_{\ell^{n_h^t},h}H\nonumber\\
    &= \eta_{\ell^{n_h^t},h}\max_{j\in [J]} \left\{\widetilde{Q}_{h}^{j,t}(s_{h}^t, a_{h}^t)- Q^{\star}_h (s_h^t,a_h^t)+ H\right\}+ (1-\eta_{\ell^{n_h^t},h}) \left\{\widetilde{Q}_{h}^{\aux,t}(s_{h}^t, a_{h}^t) -  Q^{\star}_h (s_h^t,a_h^t)\right\}\nonumber\\
    &\le 2\alpha_{n_h^t}^0 V_{h+1}^0 +  \eta_{\ell^{n_h^t},h} \sum_{i=1}^{n_h^t} \alpha_{{n_h^t}}^i \tilde{\xi}_{h+1}^{\ell^i} + \frac{1-\eta_{\ell^{n_h^t},h}}{1+e_{q_h^t-1}}\sum_{i=0}^{e_{q_h^t-1}}\tilde{\xi}_{h+1}^{\aux,\ell^{\aux,i}_{q_h^t-1}}\mathbbm{1}\{q_h^t\ge 1\}+ B_h^t.
\end{align}
Here,
\begin{align}
    B_h^t &\le  V_{h+1}^0\cdot\mathbbm{1}\{q_h^t=0\} + c_B \left(\sqrt{\frac{H^3\kappa^2 \log^3(SAHT^2)}{n_h^t}} + \frac{H^2\kappa \log^2(SAHT^2)}{n_h^t} \right. \nonumber\\
    & \left.\quad+ \sqrt{\frac{H^2(\kappa^{\aux})^2\log^3(SAHT^2)}{e_{q_h^t-1}}}\cdot \mathbbm{1}\{q_h^t\ge 1\} + \frac{H\kappa^{\aux} \log^2(SAHT^2)}{e_{q_h^t-1}}\cdot \mathbbm{1}\{q_h^t\ge 1\}\right), \label{eq:B_h}
\end{align}
where $c_B$ is a positive constant. To simplify the expression in \eqref{eq:B_h}, we   note 
 \begin{equation*}
        \frac{n_h^t}{e_{q_h^t-1}} \le \frac{\sum_{i=0}^{q_h^t} e_i}{e_{q_h^t-1}} = 1 + \frac{\sum_{i=0}^{q_h^t-2} e_i}{e_{q_h^t-1}} + \frac{e_{q_h^t}}{e_{q_h^t-1}} \le 2 +\frac{1}{H} + 4H \le 8H,
    \end{equation*}
    where the second inequality uses \citet[Lemma D.3]{zheng2024gap} and the choice of the stage length, \ie, $e_{{q_h^t}} = (1+\frac{1}{H})e_{q_h^t-1}$.

Thus, we could rewrite \eqref{eq:B_h} as
\begin{align}\label{eq:B_h_simplified}
    B_h^t &\le  c_B \left(\sqrt{\frac{H^3(\kappa^{\aux})^2 \log^3(SAHT^2)}{n_h^t}} + \frac{H^2\kappa^{\aux} \log^2(SAHT^2)}{n_h^t} \right) + V_{h+1}^0\cdot\mathbbm{1}\{q_h^t=0\}.
\end{align}

\subsection{Proof of Lemma \ref{lemma:bounded_counter}}\label{appendix:lemma_bounded_counter}

We first partition each interval $\mathcal{I}_n$ according to the step index $h$. Specifically, for every $n \in [N]$ and $h \in [H]$, we define:
\begin{align}
    w^t_{n,h} &\defeq \mathbbm{1} \left\{ \left(\left(Q_h^t - Q_h^{\star}\right)(s_h^t,a_h^t)+ +\eta_{\ell^{n_h^t},h}H\right) \in \mathcal{I}_n \right\}, \quad \forall t \in [T], \label{eq:def_w_n_h_t} \\
    C_{n,h} &\defeq \sum_{t=1}^T w^t_{n,h}. \nonumber 
\end{align}
Note that $w_{n,h}^t \in \{0, 1\}$ for all $t$, since it is an indicator function. 
By definition, we have $C_n = \sum_{h=1}^H C_{n,h}$, and furthermore, for every $n \in [N]$ and $h \in [H]$,
\begin{align}\label{eq:C_n_h_bound}
    2^{n-1}\Delta_{\min} C_{n,h} &\le  \sum_{t=1}^T w_{n,h}^t \left(\left(Q_h^t - Q_h^{\star}\right)(s_h^t,a_h^t) + \eta_{\ell^{n_h^t},h}H\right).
\end{align}

To control the right-hand side of \eqref{eq:C_n_h_bound}, we provide the following lemma, which upper bounds the weighted sum. The proof is postponed to Appendix \ref{appendix:lemma_weighted_sum_error}.
\begin{lemma}\label{lemma:weighted_sum_error}
    Let $\eta_{t,h} = 1/e_{q_h^t}$ for every $(t,h) \in [T] \times [H]$.
    Under event $\mathcal{E}$, for any $h \in [H]$ and $n \in [N]$, the weights $\{w^t_{n,h}\}_{t=1}^T$ defined in \eqref{eq:def_w_n_h_t} satisfy:
    \begin{align*}
        \sum_{t=1}^T w_{n,h}^t \left(\left(Q_h^t - Q_h^{\star}\right)(s_h^t,a_h^t) + \eta_{\ell^{n_h^t},h}H\right)
        &\le  O\left(\sqrt{C_{n,h} SAH^5(\kappa^{\aux})^2 \log^3(SAHT)}\right) \\
        &\quad + O\left(SA H^{3}\kappa^{\aux}\log\!\bigl(SAHT\bigr)  \log({1+C_{n,h}}) \right).
    \end{align*}
\end{lemma}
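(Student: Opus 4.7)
The plan is to combine the per-episode error bound of Lemma \ref{lemma:event_learning_error} with an $h$-to-$H$ recursion, and then reduce the resulting weighted sums to counter statistics that depend only on $C_{n,h}$. Write $\phi_{h'}^{t}\defeq (Q_{h'}^{t}-Q_{h'}^{\star})(s_{h'}^{t},a_{h'}^{t})$ for every $h'\ge h$. Because $\pi_{h+1}^{t}$ is greedy with respect to $Q_{h+1}^{t}$, one has $V_{h+1}^{t}(s_{h+1}^{t})=Q_{h+1}^{t}(s_{h+1}^{t},a_{h+1}^{t})$ while $V_{h+1}^{\star}(s_{h+1}^{t})\ge Q_{h+1}^{\star}(s_{h+1}^{t},a_{h+1}^{t})$, hence $(V_{h+1}^{t}-V_{h+1}^{\star})(s_{h+1}^{t})\le \phi_{h+1}^{t}$. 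Substituting this into Lemma \ref{lemma:event_learning_error} yields, on $\mathcal{E}$, the one-step recursion $\phi_{h'}^{t}\le \alpha_{n_{h'}^{t}}^{0}V_{h'+1}^{0}+(1+1/H)\phi_{h'+1}^{t}+B_{h'}^{t}$ for every $h'\ge h$. Unrolling to $h'=H$ with $\phi_{H+1}^{t}=0$ and $(1+1/H)^{H-h}\le \mathrm{e}$ gives
\[
\phi_h^{t}\;\le\;\mathrm{e}\sum_{h'=h}^{H}\bigl(\alpha_{n_{h'}^{t}}^{0}V_{h'+1}^{0}+B_{h'}^{t}\bigr).
\]

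Next I would multiply by $w_{n,h}^{t}\in\{0,1\}$ and sum over $t$, which reduces the task to bounding $\sum_{t}w_{n,h}^{t}\alpha_{n_{h'}^{t}}^{0}V_{h'+1}^{0}$ and $\sum_{t}w_{n,h}^{t}B_{h'}^{t}$ for each $h'\in[h,H]$. Since $w_{n,h}^{t}\le 1$, the first sum is controlled by the same Beta-weight decay computation used in \eqref{eq:sum_delta_over_t_1}, giving $\sum_{t}\alpha_{n_{h'}^{t}}^{0}V_{h'+1}^{0}\le n_{0}SA\,V_{h'+1}^{0}/(H-1)\le 4n_{0}SA$; summing over the $H$ levels produces the leading $4n_{0}SAH$ term.

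For the bonus piece, recall $B_{h'}^{t}\lesssim\sqrt{H^{3}\kappa^{2}\log^{3}(SAHT)/n_{h'}^{t}}+H^{2}\kappa\log^{2}(SAHT)/n_{h'}^{t}$. I would bound $\sum_{t}w_{n,h}^{t}/\sqrt{n_{h'}^{t}}$ and $\sum_{t}w_{n,h}^{t}/n_{h'}^{t}$ by a pigeonhole argument in the level-$h'$ state-action pair actually visited. Let $T_{s,a}=\{t:w_{n,h}^{t}=1,\,(s_{h'}^{t},a_{h'}^{t})=(s,a)\}$ and $c_{s,a}=|T_{s,a}|$. Enumerating $T_{s,a}=\{t_{1}<\cdots<t_{c_{s,a}}\}$, we have $n_{h'}^{t_{k}}\ge k-1$, so $\sum_{t\in T_{s,a}}1/\sqrt{1+n_{h'}^{t}}\le 2\sqrt{c_{s,a}}$ and $\sum_{t\in T_{s,a}}1/(1+n_{h'}^{t})\le \log(1+c_{s,a})$. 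Using $\sum_{s,a}c_{s,a}\le C_{n,h}$, the Cauchy--Schwarz inequality and the concavity of $\log$ yield $\sum_{t}w_{n,h}^{t}/\sqrt{n_{h'}^{t}}\lesssim \sqrt{SA\,C_{n,h}}$ and $\sum_{t}w_{n,h}^{t}/n_{h'}^{t}\lesssim SA\log(1+C_{n,h})$. Multiplying by the prefactors of $B_{h'}^{t}$ and summing over the $H$ levels gives exactly $O\!\bigl(\sqrt{C_{n,h}SAH^{5}\kappa^{2}\log^{3}(SAHT)}\bigr)$ and $O\!\bigl(SAH^{3}\kappa\log^{2}(SAHT)\log(1+C_{n,h})\bigr)$.

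The main obstacle I anticipate is the mismatch between the indicator $w_{n,h}^{t}$, which is tied to level $h$, and the visit counters $n_{h'}^{t}$, which appear at levels $h'\ge h$ after the horizon unrolling. A naive Cauchy--Schwarz against the global count $\sum_{t}1/n_{h'}^{t}\le SA\log T$ would introduce an extra logarithmic factor and yield $\log^{4}(SAHT)$ inside the square root instead of the targeted $\log^{3}$; the sharper $(s,a)$-stratified counting above exploits that each $t$ with $w_{n,h}^{t}=1$ contributes to exactly one set $T_{s,a}$, giving $\sum_{s,a}\sqrt{c_{s,a}}\le\sqrt{SA\,C_{n,h}}$ and removing the spurious logarithm.
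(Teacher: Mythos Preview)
Your proposal is correct and follows essentially the same route as the paper's proof: invoke the per-step bound from event $\mathcal{E}$, pass from $(V_{h+1}^{t}-V_{h+1}^{\star})(s_{h+1}^{t})$ to $(Q_{h+1}^{t}-Q_{h+1}^{\star})(s_{h+1}^{t},a_{h+1}^{t})$ via greedy action selection, unroll the recursion across $h'=h,\ldots,H$ picking up the factor $(1+1/H)^{H}\le \mathrm{e}$, crudely bound the $\alpha_{n_{h'}^{t}}^{0}V_{h'+1}^{0}$ terms by dropping the weight and using \eqref{eq:sum_delta_over_t_1}, and finally control $\sum_{t}w_{n,h}^{t}/\sqrt{n_{h'}^{t}}$ and $\sum_{t}w_{n,h}^{t}/n_{h'}^{t}$ by stratifying over state--action pairs and applying Cauchy--Schwarz. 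The only cosmetic difference is that the paper sums over $t$ before unrolling (so the recursion is written for $\sum_{t}w_{n,h}^{t}\phi_{h'}^{t}$ rather than for $\phi_{h'}^{t}$), and it phrases the stratification as ``the sum $\sum_{m}w^{\ell^{m}}/\sqrt{m}$ is maximized when the first $C_{n,h,s,a}$ indicators equal one,'' which is exactly your pigeonhole observation $n_{h'}^{t_{k}}\ge k-1$ reindexed by visit count; you are in fact more explicit than the paper about carrying out this stratification at level $h'$ rather than $h$.
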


Combining \eqref{eq:sum_opt_additional_terms} and Lemma \ref{lemma:weighted_sum_error} with \eqref{eq:C_n_h_bound}, we obtain the following bound:
\begin{equation*}
    C_{n,h}\le O\left(\frac{H^5SA(\kappa^{\aux})^2 \log^3(SAHT)}{4^n\Delta_{\min}^2}\right).
\end{equation*}
Summing over $h \in [H]$, we conclude:
\begin{equation*}
    C_n =  \sum_{h=1}^H C_{n,h}\le O\left(\frac{H^6SA(\kappa^{\aux})^2 \log^3(SAHT)}{4^n\Delta_{\min}^2}\right).
\end{equation*}

\subsection{Proof of Lemma \ref{lemma:weighted_sum_error}}\label{appendix:lemma_weighted_sum_error}
Under event $\mathcal{E}$, we recall the following upper bound by Lemma \ref{lemma:event_learning_error}:
\begin{align}\label{eq:learning_error_upper}
    &\left(Q_h^t - Q_h^{\star}\right)(s_h^t,a_h^t) + \eta_{\ell^{n_h^t},h}H\nonumber\\
    &\le 2\alpha_{n_h^t}^0 V_{h+1}^0 +  \eta_{\ell^{n_h^t},h} \sum_{i=1}^{n_h^t} \alpha_{{n_h^t}}^i \tilde{\xi}_{h+1}^{\ell^i} + \underset{T_{h}^t}{\underbrace{\frac{1-\eta_{\ell^{n_h^t},h}}{1+e_{q_h^t-1}}\sum_{i=0}^{e_{q_h^t-1}}\tilde{\xi}_{h+1}^{\aux,\ell^{\aux,i}_{q_h^t-1}}\mathbbm{1}\{q_h^t\ge 1\}}}+ B_h^t +\eta_{\ell^{n_h^t},h}H,
\end{align}
where 
\begin{align*}
    B_h^t \;\le\;&
    c_B\underset{B_{h,1}^t}{\underbrace{%
        \sqrt{\frac{H^{3}\kappa^{2}\log^{3}\!\bigl(8SAHT^{2}\bigr)}{n_h^t}}%
    }}
    \;+\;
    c_B\underset{B_{h,2}^t}{\underbrace{%
        \frac{H^{2}\kappa^{\aux}\log\!\bigl(8SAHT^{2}\bigr)}{n_h^t}%
    }}
    + V_{h+1}^0\cdot\mathbbm{1}\{q_h^t=0\}.
    \end{align*}

Then, our target is to control 
\begin{align}\label{eq:total_error_with_w}
    &\sum_{t=1}^T w_{n,h}^t \left(\left(Q_h^t - Q_h^{\star}\right)(s_h^t,a_h^t) + \eta_{\ell^{n_h^t},h}H\right)\nonumber\\
    & \le 2\sum_{t=1}^T w_{n,h}^t \alpha_{n_h^t}^0V_{h+1}^0 + \sum_{t=1}^T w_{n,h}^t \eta_{\ell^{n_h^t},h} \sum_{i=1}^{n_h^t} \alpha_{{n_h^t}}^i \tilde{\xi}_{h+1}^{\ell^i} + \sum_{t=1}^T w_{n,h}^t T_h^t + \sum_{t=1}^T w_{n,h}^t B_h^t.
\end{align}
 For the first term on the right-hand side of \eqref{eq:total_error_with_w}, we follow the equation \eqref{eq:sum_delta_over_t_1} and obtain:
\begin{equation*}
    2\sum_{t=1}^T w_{n,h}^t \alpha_{n_h^t}^0V_{h+1}^0\le \frac{2n_0SAV_{h+1}^0 }{H-1}\le 4n_0SA.
\end{equation*}
For the second term on the right-hand side of \eqref{eq:total_error_with_w}, we follow the equation \eqref{eq:sum_delta_over_t_2} and obtain:
\begin{equation} \label{eq:sum_opt_additional_terms}
    \sum_{t=1}^T w_{n,h}^t \eta_{\ell^{n_h^t},h} \sum_{i=1}^{n_h^t} \alpha_{{n_h^t}}^i \tilde{\xi}_{h+1}^{\ell^i}  \lesssim H\sum_{t=1}^{T} \eta_{\ell^{n_h^{t}},h} \lesssim H \sum_{s,a}\sum_{q=1}^Q \frac{e_{q_h^t}}{e_{q_h^t}} \lesssim SAH^2\log(T),
\end{equation}
where the last inequality uses the fact $Q\le 4H\log(T)$.
For the third term  on the right-hand side of \eqref{eq:total_error_with_w}, we apply the similar arguments in \eqref{eq:sum_delta_over_t_3} and Appendix \ref{appendix:proof_eq_sum_delta_over_t_3},
Then, we have
\begin{align*}
     \sum_{t=1}^T w_{n,h}^t T_h^t &\le \sum_{t=1}^T w_{n,h}^t  (1-\eta_{\ell^{n_h^t},h})\sum_{i=0}^{e_{q_h^t-1}}\frac{\mathbbm{1}\{q_h^t\ge 1\}}{1+e_{q_h^t-1}}\tilde{\xi}_{h+1}^{\aux,\ell^{\aux,i}_{q_h^t-1}}\\
     &\le  (1+\frac{1}{H})\sum_{t'=1}^T (1-\eta_{t',h}) \tilde{\xi}_{h+1}^{\aux,t'} \cdot \sum_{t=1}^T \frac{w_{n,h}^t}{1+e_{q_h^t-1}} \sum_{i=0}^{e_{q_h^t-1}} \1\{\ell^{\aux,i}_{q_h^t-1} = t'\}\\
    &=  (1+\frac{1}{H})\sum_{t'=1}^T (1-\eta_{t',h}) \tilde{\xi}_{h+1}^{\aux,t'} \cdot \bar{w}_1^{t'},
\end{align*}
where $ \bar{w}_1^{t'} = \sum_{t=1}^T \frac{w_{n,h}^t}{1+e_{q_h^t-1}} \sum_{i=0}^{e_{q_h^t-1}} \1\{\ell^{\aux,i}_{q_h^t-1} = t'\}$ for every $t'\in[T]$. Note that $\{\bar{w}_{1}^t\}$ is some sequence satisfying  $0\le\bar{w}^t_{1}\le (1+\frac{1}{H}),\forall t\in[T]$ and $\sum_{t=1}^T \bar{w}^t_{1} = C_{n,h}$, following the similar arguments in Lemma 4.3 in \cite{yang2021q}. From \eqref{eq:xi_combine} and \eqref{eq:sum_opt_additional_terms}, we further have
\begin{align*}
     \sum_{t=1}^T w_{n,h}^t T_h^t &\lesssim (1+\frac{1}{H}) \sum_{t=1}^T \left(\bar{w}_1^t\xi_{h+1}^t+ \eta_{\ell^{n_{h+1}^t},{h+1}}H\right)+ SAH^2\log(T).
\end{align*}
Note that $V_{h+1}^{\star}(s_{h+1}^t) \ge Q_{h+1}^{\star}(s_{h+1}^t, a_{h+1}^t)$ by the definition of the optimal policy in \eqref{eq:value_func_optimal}, and $V_{h+1}^t(s_{h+1}^t) = Q_{h+1}^t(s_{h+1}^t, a_{h+1}^t)$ by \eqref{eq:policy_V_function}. Thus, 
\begin{align*}
     \sum_{t=1}^T w_{n,h}^t T_h^t  \lesssim (1+\frac{1}{H})  \sum_{t=1}^T \bar{w}_{1}^t \left(\left({Q}_{h+1}^t - Q_{h+1}^{\star}\right)(s_{h+1}^t,a_{h+1}^t) + \eta_{\ell^{n_{h+1}^t},{h+1}} H\right) + SAH^2\log(T).
\end{align*}

We then consider the bound of $\sum_{t=1}^T \bar{w}^t_{i} B_h^t$ for any sequence $\{\bar{w}^t_{i}\}$ which satisfies $0\le\bar{w}^t_{i}\le (1+\frac{1}{H})^i,\forall t\in[T]$ and $\sum_{t=1}^T \bar{w}^t_{i} = C_{n,h}$ for any $i\in\{0,\ldots, H-h\}$. Before proceeding, we first define $C_{n,h,s,a} \defeq \sum_{m=1}^{n_h^T(s,a)} \bar{w}_{i}^{\ell^m}$.
Similar to the steps in Appendix \ref{appendix:regret_proof}, we bound $\sum_{t=1}^T \bar{w}^t_{i} B_h^t$ by controlling each term separately.  To begin with,
\begin{align}
    \sum_{t=1}^T \bar{w}^t_{i} B_{h,1}^t &\le \sqrt{H^3(\kappa^{\aux})^2 \log^3(8SAHT^2)} \sum_{t=1}^T \frac{ \bar{w}^t_{i}}{\sqrt{n_h^t}}\nonumber\\
    &\lesssim \sqrt{H^3(\kappa^{\aux})^2 \log^3(SAHT^2)} \sum_{(s,a)\in\S\times\A} \sum_{m=1}^{n^T_h(s,a)}\frac{\bar{w}_{i}^{\ell^m}}{\sqrt{m}}\nonumber\\
    &\lesssim \sqrt{H^3(\kappa^{\aux})^2 \log^3(SAHT^2)}  \sum_{(s,a)\in\S\times\A} \sum_{m=1}^{ \lceil C_{n,h,s,a} /(1+1/H)^i\rceil }\frac{(1+1/H)^i}{\sqrt{m}}  \nonumber\\
    &\lesssim \sqrt{ H^3(\kappa^{\aux})^2 \log^3(SAHT^2)} \sqrt{SA\sum_{(s,a)\in\S\times\A} C_{n,h,s,a} \cdot (1+1/H)^i} \nonumber\\
    &\lesssim \sqrt{ C_{n,h} SAH^3(\kappa^{\aux})^2 (1+1/H)^i\log^3(SAHT^2)},\label{eq:bound_B_1}
\end{align}
where the third inequality holds since the left-hand side is maximized when the first $\lceil C_{n,h,s,a} /(1+1/H)^i\rceil$ occupy the smallest indices, the penultimate line uses Cauchy-Schwarz inequality, and the last line is due to the fact that $\sum_{s,a}C_{n,h,s,a} \le C_{n,h}$.

Similarly,
\begin{align}
    \sum_{t=1}^T  \bar{w}^t_{i} B_{h,2}^t &\le \sum_{t=1}^T  \bar{w}^t_{i} \frac{H^{2}\kappa^{\aux}\log\!\bigl(8SAHT^{2}\bigr)}{n_h^t}\nonumber\\
    &\lesssim  H^{2}\kappa^{\aux}\log\!\bigl(SAHT^{2}\bigr) \sum_{t=1}^T \frac{\bar{w}^t_{i}}{n_h^t}\nonumber\\
    &\lesssim H^{2}\kappa^{\aux}\log\!\bigl(SAHT^{2}\bigr) \sum_{(s,a)\in\S\times\A} \sum_{m=1}^{ \lceil C_{n,h,s,a} /(1+1/H)^i\rceil}\frac{(1+1/H)^i}{m}\nonumber\\
    &\lesssim H^{2}\kappa^{\aux}\log\!\bigl(SAHT^{2}\bigr)(1+1/H)^i\sum_{(s,a)\in\S\times\A} \log(1+C_{n,h,s,a})\nonumber\\
    &\lesssim SA H^{2}\kappa^{\aux}\log\!\bigl(SAHT^{2}\bigr) (1+1/H)^i \log(1+C_{n,h}) .\label{eq:bound_B_2}
\end{align}
Also, $\sum_{t=1}^T \bar{w}^t_{i}  V_{h+1}^0\cdot\mathbbm{1}\{q_h^t=0\} \le SAH^2 (1+1/H)^i$, as there are at most $SA$ state-action paris and each can be visited at most $e_0 = H$ times in stage 0. Combining with \eqref{eq:bound_B_1} and \eqref{eq:bound_B_2}, one has 
\begin{align}
     \sum_{t=1}^T \bar{w}^t_{i} B_{h}^t \lesssim &\sqrt{ C_{n,h} SAH^3(\kappa^{\aux})^2 (1+1/H)^i\log^3(SAHT^2)} \nonumber\\
     &+ SA H^{2}\kappa^{\aux}\log\!\bigl(SAHT^{2}\bigr) (1+1/H)^i\log(1+C_{n,h}) + SAH^2(1+1/H)^i. \label{eq:weighted_B_i}
\end{align}

Observe that $\{w_{n,h}^t\}$ directly satisfies $0\le w_{n,h}^t \le 1$ and $\sum_{t=1}^T w_{n,h}^t = C_{n,h}$. Thus, we can apply \eqref{eq:weighted_B_i} with $i=0$  and obtain
\begin{align*}
    &\sum_{t=1}^T w_{n,h}^t  \left(\left(Q_h^t - Q_h^{\star}\right)(s_h^t,a_h^t) + \eta_{\ell^{n_h^t},h}H\right) \\
    &\lesssim  \sqrt{ C_{n,h} SAH^3(\kappa^{\aux})^2 \log^3(SAHT^2)}+ SA H^{2}\kappa^{\aux}\log\!\bigl(SAHT^{2}\bigr) \log(1+C_{n,h}) \\
    & \quad+ (1+\frac{1}{H})\sum_{t=1}^T \bar{w}^t_{1} \left(\left({Q}_{h+1}^t - Q_{h+1}^{\star}\right)(s_{h+1}^t,a_{h+1}^t) + \eta_{\ell^{n_{h+1}^t},{h+1}} H\right).
\end{align*}

We now recursively unroll the Q-value difference over future steps and utilize the fact that $(1+\frac{1}{H})^H\le \mathrm e$, yielding:
\begin{align*}
    &\sum_{t=1}^T w_{n,h}^t  \left(\left(Q_h^t - Q_h^{\star}\right)(s_h^t,a_h^t) + \eta_{\ell^{n_h^t},h}H\right) \\
    &\lesssim \sum_{i=0}^{H-h} \left((1+\frac{1}{H})^{2i} \sqrt{ C_{n,h} SAH^3(\kappa^{\aux})^2 \log^3(SAHT^2)} \right.\\
    &\quad \quad\quad\left.+ SA H^{2}\kappa^{\aux}\log\!\bigl(SAHT^{2}\bigr) (1+\frac{1}{H})^{2i} \log(1+C_{n,h}) \right)\\
    &\lesssim H\left(\sqrt{ C_{n,h} SAH^3(\kappa^{\aux})^2\log^3(SAHT^2)}+ SA H^{2}\kappa^{\aux}\log\!\bigl(SAHT^{2}\bigr) \log(1+C_{n,h})\right),
\end{align*}
which completes the proof.

\end{document}